\documentclass{article}
\usepackage[utf8]{inputenc}

\usepackage[margin=1in]{geometry}
\usepackage{microtype}

\usepackage{amsmath,amssymb,amsthm,mathtools,bm}

\usepackage{etoc}
\usepackage{graphicx}
\usepackage{wrapfig}
\usepackage{float}
\usepackage{booktabs}
\usepackage{array}
\usepackage{multirow}
\usepackage{colortbl}
\usepackage[dvipsnames]{xcolor}
\usepackage[font=small,labelfont=bf,skip=3pt]{caption}
\usepackage{subcaption}
\usepackage{algorithm,algpseudocode}
\usepackage{xspace}
\usepackage{enumitem}
\usepackage{titletoc}
\setlist{itemsep=2pt, topsep=3pt}

\makeatletter
\renewcommand{\maketitle}{%
  \begin{center}
    \vspace*{0.4in}
    {\LARGE \@title \par}
    \vspace{0.35in}
    \@author\par
    \vspace{0.3in}
  \end{center}}
\makeatother
\renewenvironment{abstract}{%
  \begin{center}\textbf{\abstractname}\end{center}\vspace{-0.8em}
  \quotation}{\endquotation\vspace{0.1in}}

\usepackage[numbers,sort&compress]{natbib}   
\usepackage{algorithm,algpseudocode}

\usepackage{amsmath,amsfonts,bm}

\def\eqref#1{equation~\ref{#1}}

\def\1{\bm{1}}

\def\rmP{{\mathbf{P}}}

\def\rmR{{\mathbf{R}}}

\def\vzero{{\bm{0}}}

\def\vc{{\bm{c}}}
\def\vd{{\bm{d}}}

\def\vf{{\bm{f}}}
\def\vg{{\bm{g}}}
\def\vh{{\bm{h}}}

\def\vp{{\bm{p}}}

\def\vr{{\bm{r}}}
\def\vs{{\bm{s}}}

\def\vu{{\bm{u}}}
\def\vv{{\bm{v}}}
\def\vw{{\bm{w}}}
\def\vx{{\bm{x}}}
\def\vy{{\bm{y}}}
\def\vz{{\bm{z}}}

\def\mC{{\bm{C}}}

\def\mG{{\bm{G}}}

\def\mI{{\bm{I}}}
\def\mJ{{\bm{J}}}

\def\mM{{\bm{M}}}

\def\mU{{\bm{U}}}
\def\mV{{\bm{V}}}

\DeclareMathAlphabet{\mathsfit}{\encodingdefault}{\sfdefault}{m}{sl}
\SetMathAlphabet{\mathsfit}{bold}{\encodingdefault}{\sfdefault}{bx}{n}

\def\gD{{\mathcal{D}}}

\def\gF{{\mathcal{F}}}
\def\gG{{\mathcal{G}}}

\def\gN{{\mathcal{N}}}

\def\gP{{\mathcal{P}}}

\def\gS{{\mathcal{S}}}

\def\gX{{\mathcal{X}}}
\def\gY{{\mathcal{Y}}}

\newcommand{\pdata}{p_{\rm{data}}}

\newcommand{\E}{\mathbb{E}}

\newcommand{\R}{\mathbb{R}}

\usepackage{hyperref}
\usepackage{url}

\newcommand{\vsig}{\bm{\sigma}}
\newcommand{\vdelta}{\bm{\delta}}
\newcommand{\simplex}{\Delta}
\def\eqref#1{Eq.~(\ref{#1})}

\usepackage[capitalize,nameinlink]{cleveref}
\hypersetup{
  colorlinks=true,
  linkcolor=blue,
  citecolor=green!50!black,
  urlcolor=Rhodamine,
  pdftitle={Learning Where to Steer: Noise-Space Geometry for Efficient Offline Multi-Objective Optimization with Generative Models},
  pdfauthor={Yuan Lu, Esha Singh, Yi-An Ma, Yusu Wang}
}

\newcommand{\method}{ASNS\xspace}

\definecolor{oursrow}{HTML}{E3E0F8} 
\definecolor{oursrowlt}{HTML}{F1EFFB}
\definecolor{oursbest}{HTML}{D2EEF0}   

\newtheorem{theorem}{Theorem}
\newtheorem{proposition}{Proposition}
\newtheorem{corollary}{Corollary}
\newtheorem{lemma}{Lemma}
\newtheorem{assumption}{Assumption}
\newtheorem{definition}{Definition}
\newtheorem{remark}{Remark}

\newcommand{\myparagraph}[1]  {{\noindent{\bf #1~}}}

\newcommand{\dlist}{\itemsep 0.1pt\parsep=1pt\partopsep 0pt}

\newcommand{\fullref}[3]{#1~\hyperref[#3]{\ref*{#2}}}

\newcommand{\restatedname}{}
\newtheorem*{restatedinner}{\restatedname}
\newenvironment{restated}[1]
  {\renewcommand{\restatedname}{#1}\begin{restatedinner}}
  {\end{restatedinner}}

\title{Learning Where to Steer: Noise-Space Geometry for Efficient Offline Multi-Objective Optimization with Generative Models}

\author{%
  {\Large Yuan Lu$^{\ast1}$ \qquad Esha Singh$^{\ast2}$ \qquad Yi-An Ma$^{1}$ \qquad Yusu Wang$^{1}$}\\[10pt]
  {\normalsize
   $^{1}$Halıcıoğlu Data Science Institute, UC San Diego \\
   $^{2}$Department of Computer Science \& Engineering, UC San Diego}
}

\date{}   

\begin{document}
\maketitle

\begingroup
\renewcommand{\thefootnote}{\fnsymbol{footnote}}%
\footnotetext[1]{Equal contribution.}%
\endgroup

\begin{abstract}
Offline multi-objective optimization (MOO) seeks solutions with better objective trade-offs using only a fixed dataset, without querying the objectives. Diffusion models trained on such data have emerged as a promising approach, but their samples are not inherently better than the data and must be steered toward the Pareto front. Existing methods guide or condition every sampling step. We instead act on the initial noise and leave the sampling process unchanged. Across Off-MOO-Bench, we observe that the objectives, as functions of the noise, are sensitive to only a few directions. We estimate these directions once per task via a Recursive Feature Machine using function values alone, and a small cache serves every trade-off, so each candidate costs one noise displacement and one ODE solve. We prove that this displacement increases the learned scalarized objective in expectation, and that sweeping trade-offs recovers the flow's attainable front up to proxy and steering errors. With additional guidance, for which we introduce novel data-adaptive and Pareto-aware operators, our method attains the best average hypervolume rank among generative methods on 47 tasks, at comparable or lower sampling cost. Steering alone outranks the best prior generative method at a fraction of its sampling cost.
\end{abstract}

\etocdepthtag.toc{mtmain}
\section{Introduction}
In offline multi-objective optimization (MOO), we are usually given a fixed dataset $\gD = \{(\vx_i, \vy_i)\}_{i=1}^N$ consisting of candidate solutions $X = \{\vx_i\}^{N}_{i=1} \subset \R^D$ and their values on $m$ objectives $\vy_i = \vf(\vx_i) \in \R^m$. The goal is to propose new solutions that approximate the Pareto front without querying the true objectives on new data points. Generative modeling has recently emerged as a promising approach to this problem. In particular, a flow or diffusion model trained on $\gD$ can generate new solutions that lie within the support of the input data $X$. 
However, faithfully modeling the observed data distribution is not sufficient for optimization, because samples drawn from the learned distribution are not inherently biased toward solutions with better objective values than those already present in the dataset. Improving upon the observed Pareto front may require generating out-of-distribution (OOD) solutions beyond the regions covered by the input data $X$. Thus, the generative process must be explicitly steered toward high-quality regions of the solution space and, in particular, toward the Pareto front. 

\begin{wrapfigure}{r}{0.47\linewidth}
    \centering
    \vspace{-\baselineskip}
    \includegraphics[width=1.0\linewidth, trim=0 7 0 0, clip]{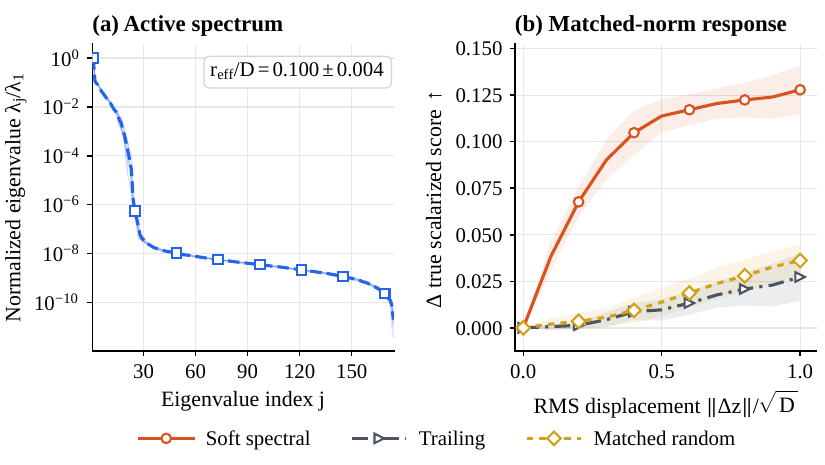}
    \caption{The noise-space objective is low rank. (a) Spectrum of $\mM_\vw$ on IN-1K/MOP8. A handful of directions carry nearly all the sensitivity. (b) Displacing along them raises the true score. A trailing or random direction of equal norm does not.}
    \label{fig:teaser}
    \vspace{-\baselineskip}
\end{wrapfigure}

Existing approaches such as ParetoFlow \citep{paretoflow} and PGD-MOO \citep{pgdmoo} achieve this by guiding the sampling trajectory of a flow or diffusion model using proxy-objective gradients or a learned dominance classifier. Such methods require an additional correction or guidance term applied at each integration step during the inference trajectory, increasing inference cost. Moreover, guidance is often constructed from a denoised estimate of the final sample, and this estimate can be unreliable during the early, highly noisy portion of the trajectory. 
An alternative approach, PCD \citep{pcd}, which achieves best prior generative performance, directly trains an objective-conditioned diffusion model and uses classifier-free guidance (CFG) \citep{ho2021classifierfree} at inference time to bias generation toward desirable objective values. This approach avoids the need for an external proxy model, but places a greater burden on the conditional generative model, which must learn a sufficiently accurate relationship between objective values and designs, including in the high-performing regions that are typically sparsely represented in the offline dataset. PCD also reports diminishing returns as the conditioning strength is increased.

\paragraph{Our work.} 
In this paper, we propose \emph{Active-Subspace Noise Steering} (\method), a novel and lightweight \emph{noise-space steering} strategy for any ODE-sampled diffusion model. To illustrate the key idea, consider a flow matching model. Recall that the design space of the MOO problem is $\R^D$ and the $m$ objectives are $\vf:\R^D \to \R^m$. At inference time, sampling begins from a noise vector $\vz\in\R^D$ and follows the ODE trajectory of the learned flow to produce a clean sample $\vx\in\R^D$, a new candidate solution. Since the flow dynamics are deterministic, the entire sampling procedure induces a deterministic map $\Phi_\theta:\R^D\to\R^D$, where $\theta$ denotes the parameters of the learned flow model. Thus, properties of the final solution are already encoded through its dependence on the initial noise. This motivates our central question:
\textit{Can we steer the initial noise efficiently and effectively toward regions that generate solutions near the Pareto front of the MOO problem?}

\vspace{1\baselineskip}
Interestingly, we find that such noise-space steering can be highly effective. Consider a scalarization of the multi-objective function, $h_\vw(\vx):=\vw^\top \vf(\vx)$ (in practice with learned proxies $\hat\vf$), parameterized by a trade-off vector $\vw\in\Delta^{m-1}$. Through the learned generative map $\Phi_\theta$, this induces a scalar-valued function directly on noise space, $H_\vw(\vz):=h_\vw\!\left(\Phi_\theta(\vz)\right)$. Our key empirical observation is that $H_\vw$ is often sensitive to only a few directions in the high-dimensional noise space $\R^D$. In particular, the average pullback metric $\mM_\vw$ induced by $H_\vw$ is approximately low-rank across nearly all datasets we consider (see Section~\ref{sec:lowrank}) for details and Figure~\ref{fig:teaser}a for an example on IN-1K/MOP8 dataset. This suggests that, although the noise space may be high-dimensional, the objective-relevant variation is concentrated in a low-dimensional subspace. If these important directions can be identified, we can steer the initial noise along them so that the resulting samples move toward the Pareto front. 

To identify these objective-relevant directions, we build on the Average Gradient Outer Product (AGOP) and Recursive Feature Machines (RFM) framework of \citet{radhakrishnan2024rfm}, which can extract task-dependent directions effectively using a few examples. RFM+AGOP has previously been used for activation steering in LLMs \citep{beaglehole2026steering} and for steering the activations of image diffusion models \citep{wang2026narfm}. Our setting, however, introduces an important additional challenge: rather than steering with respect to a single target function, we must handle an entire family of scalarized objectives $H_\vw$, parameterized by the trade-off vector $\vw$. In Section~\ref{sec:geometry}, we show that this family possesses additional structure that allows us to precompute and cache only a small collection of matrices that suffices to support arbitrary trade-offs $\vw\in\R^m$.
At inference time, given a desired trade-off vector $\vw$, we efficiently construct the corresponding steering directions from these cached matrices and perturb the initial noise along these directions. The resulting ``improved'' noise vectors are then passed through {\bf the original flow model without further modification}, producing candidate solutions that are biased toward better objective values and closer to the unknown Pareto front. In Section~\ref{sec:steering}, we provide theoretical justification for this noise-space steering principle, showing conditions under which moving along the identified objective-relevant directions provably increases the scalarized proxy objective in expectation. 

\vspace{0.5\baselineskip}
Our noise-space steering strategy is also modular, since it acts only on the initial noise, and can be applied to any pretrained diffusion model, with or without inference-time guidance. In Section~\ref{sec:guidance}, we develop complementary guidance operators for flow and diffusion models, which further improve performance when combined with noise steering. We summarize our contributions as follows.
\begin{itemize}\dlist
    \item We introduce \method, a novel offline MOO approach that steers only the initial noise, leaving the sampler unmodified. The directions are the leading eigenvectors of the AGOP of the noise-space objective, estimated once per task with an RFM, for every trade-off (Section~\ref{sec:method}).
    \item We prove that these directions capture the most squared gradient of any subspace of their dimension, that steering along them increases the learned scalarized objective in expectation, and that sweeping trade-offs approaches the best front the flow can generate, up to estimation and steering errors (Section~\ref{sec:steering}).
    \item Where a single displacement cannot reach, we introduce two families of guidance operators that use information prior guidance discards and never differentiate through the ODE solve. \emph{Data-adaptive} operators restrict the objective gradient to the learned geometry and the local data manifold, and \emph{Pareto-aware} operators combine the per-objective gradients, not their scalarized sum, into a direction along which no objective decreases to first order (Section~\ref{sec:guidance}).
    \item Across 47 tasks and 14 strong baselines, steering with our guidance operators attains the best average hypervolume rank among generative methods and steering alone outranks the best prior generative method at up to an order of magnitude lower sampling cost. Ablations isolate the chosen directions, rank, weighting, and steering strength (Section~\ref{sec:experiments}, \S~\ref{app:ablations}).
\end{itemize}

\paragraph{More related work.}\label{sec:related-works}
Surrogate-based methods fit proxies on $\gD$ and search them with NSGA-II \citep{xue2024offmoo, deb2002nsga2}, often regularized against exploitation away from the data \citep{trabucco2021coms, yu2021roma, qi2022iom, yuan2023ict, chen2023trimentoring}. Beyond the generative methods above, earlier work models $\gD$ for single-objective design \citep{kumar2020mins, krishnamoorthy2023ddom}, SPREAD guides with a common-descent direction \citep{hotegni2026spread}, and a recent diagnosis finds generative methods more conservative than evolutionary search when the data lie far from the front \citep{offlinefrontier2026}. Diffusion noise has also been optimized per sample and per reward \citep{wallace2023doodl, eyring2024reno, zhou2024goldennoise, ma2025inference} or edited along a few patch-PCA components \citep{wang2025seeds}, whereas we learn objective-sensitive directions once and share them across noise vectors and trade-offs.

\section{Preliminaries}
\label{sec:prelim}
\textbf{Offline multi-objective optimization.}
We consider the problem
\begin{equation}\label{eq:moo}
    \max_{\vx \in \gX} \; \vf(\vx) = \bigl(f_1(\vx), \dots, f_m(\vx)\bigr),
\end{equation}
where $\gX \subseteq \R^D$ is a compact design space of candidate solutions and $\vf : \gX \to \R^m$ is a vector of $m \ge 2$ objectives, all maximized without loss of generality. No queries to $\vf$ are permitted. We observe only a fixed dataset $\gD = \{(\vx_i, \vy_i)\}_{i=1}^{N}$ with $\vy_i = \vf(\vx_i)$, from which we train differentiable proxies $\hat{\vf} = (\hat f_1, \dots, \hat f_m)$. The true $\vf$ is reserved for final evaluation.

\textbf{Pareto concepts and scalarization.}
For $\vy, \vy' \in \R^m$, $\vy$ \emph{dominates} $\vy'$ ($\vy \succ \vy'$) if $y_a \ge y'_a$ for all $a$ with strict inequality for some $a$. A solution is \emph{Pareto optimal} if no feasible solution dominates it. The Pareto set $\gP^\star$ collects them and the Pareto front is its image $\gF^\star = \vf(\gP^\star)$. The goal is a finite set $\gS$ whose image is close to $\gF^\star$ and spreads across it, measured jointly by hypervolume.
We scalarize the objectives by a preference weight $\vw$ on the simplex $\Delta^{m-1}$. Throughout we use the weighted sum, and sweep a set of weights to cover the front, though our development is agnostic to the choice of scalarization.
\begin{equation}\label{eq:scalarization}
    h_\vw(\vx) = \vw^\top \hat{\vf}(\vx) = \sum_{a=1}^{m} w_a\, \hat f_a(\vx),
\end{equation}
\textbf{Flow models and guidance.}
Our method applies to any generative model whose sampler is a deterministic map from Gaussian noise to solutions, which includes flow matching models and diffusion models sampled through their probability flow ODE. Such a model has a velocity field $\vv_\theta$, and sampling integrates
\begin{equation}\label{eq:ode}
    \frac{d \vx_t}{d t} = \vv_\theta(\vx_t, t),
    \qquad \vx_0 = \vz \sim \gN(\vzero, \mI_D),
\end{equation}
which defines a map $\Phi_\theta : \vz \mapsto \vx_1$ from noise to solutions. In our experiments we use a rectified flow \citep{liu2023rectified}, trained to match $\vx - \vz$ along $\vx_t = (1-t)\vz + t\vx$ for $\vx \sim \pdata$.
Guidance adds a correction to the velocity at every step,
\begin{equation}\label{eq:guidance}
    \tilde{\vv}(\vx_t, t) = \vv_\theta(\vx_t, t) + \eta(t)\,
    \nabla_{\vx_t} h_\vw\!\bigl(\hat{\vx}_1(\vx_t, t)\bigr),
\end{equation}
evaluated through the one-step denoised estimate $\hat{\vx}_1$ because the proxy is trained on clean solutions. For a rectified flow
$\hat{\vx}_1(\vx_t, t) = \vx_t + (1-t)\,\vv_\theta(\vx_t, t)$, which is
accurate near $t = 1$ and unreliable at small $t$. Prior guided methods for offline MOO follow this scheme (\S~\ref{sec:related-works}). Our method acts on the initial noise $\vx_0 = \vz$ and integrates \eqref{eq:ode} unmodified, and uses \eqref{eq:guidance} only late in the trajectory. 

\subsection{Problem setting}
\label{sec:problem}
Composing the proxies with the sampler turns every objective on solutions into an objective on noise. We write $g_a(\vz) := \hat f_a(\Phi_\theta(\vz))$ for $a = 1, \dots, m$, and for a preference weight $\vw \in \Delta^{m-1}$
\begin{equation}\label{eq:noise-obj}
    H_\vw(\vz) := h_\vw\bigl(\Phi_\theta(\vz)\bigr) = \sum_{a=1}^{m} w_a\, g_a(\vz),
\end{equation}
so that $\nabla H_\vw = \sum_a w_a \nabla g_a$. We reparameterize \eqref{eq:moo} and seek, for each $\vw$, a noise vector $\vz_\vw$ at which $H_\vw$ is large, decoding the candidate $\hat\vx_\vw = \Phi_\theta(\vz_\vw)$ by \eqref{eq:ode}. Sweeping $\vw$ over a finite weight set with several draws per weight, and retaining the non-dominated candidates under $\hat{\vf}$, yields the solution set $\gS$.
The reference for $\gS$ is set by the sampler. It is the best front attainable among the solutions $\Phi_\theta$ decodes, and two approximations separate $\vf(\gS)$ from it. The proxies $\hat{\vf}$ defining $H_{\vw}$ only estimate $\vf$, and $\vz_\vw$ is reached by steering rather than by exact maximization. Appendix~\ref{sec:recovery} defines this reference formally and bounds the distance. How far the reference lies from the true front
$\gF^\star$ depends on the flow's generalization beyond its data, and cannot be bounded without querying $\vf$
(Section~\ref{sec:steering}). 
\section{Noise-Space Steering}
\label{sec:method}
\begin{algorithm}[t]
\small
\caption{Active-Subspace Noise Steering (ASNS)}
\label{alg:steer}
\begin{algorithmic}[1]
\Require A flow map $\Phi_\theta$, proxies $\hat \vf=(\hat f_1,\dots,\hat f_m)$, weight $\vw\in\simplex^{m-1}$, sample budget $n$, steering strength $\gamma$, rank tolerance $\varepsilon_{\mathrm{rank}}$, \emph{optional} guidance operator $\mathcal{G}_\vw$ with onset $t_{\mathrm{start}}$ and strength $\eta(t)$
\Ensure Steered solution $\hat\vx=\Phi_\theta(\vz_{\vw})$: an unmodified-flow sample steered toward the $\vw$-optimum of $H_\vw$
\State \textbf{Precomputation} \emph{(once, independent of $\vw$)} \Comment{cache the noise-space geometry}
\State \quad draw $\vz_1,\dots,\vz_n\sim\gN$;\ \ decode $\vx_i\gets\Phi_\theta(\vz_i)$;\ \ score $\hat\vy_i\gets\hat \vf(\vx_i)$
\State \quad Fit RFM surrogates $\phi_a\approx g_a$ for each objective on $\{(\vz_i,\hat y_{i,a})\}_{i=1}^n$, $a=1,\dots,m$ \Comment{\eqref{eq:agop}}\\
\quad Cache $\mC_{ab}\gets\tfrac1n\sum_{i}\nabla\phi_a(\vz_i)\nabla\phi_b(\vz_i)^\top$, $\ 1\le a\le b\le m$
\State \textbf{Steering} \emph{(per weight $\vw$)}
\State \quad $\widehat\mM_\vw\gets\sum_{a,b}w_a w_b\,\mC_{ab}$\\
\quad  Eigendecompose $\widehat\mM_\vw=\sum_j\widehat\lambda_j^{(\vw)}\widehat{\vu}_\vw^{(j)}\widehat{\vu}_\vw^{(j)\top}$ \ $\lambda_1^{(\vw)}\ge\cdots\ge\lambda_D^{(\vw)}$ \Comment{reuse cached blocks, \eqref{eq:decomp}}
\State \quad $\beta_j\gets\mathbf{1}[j\le k_{\mathrm{eff}}(\vw)]$ (hard) or $(\widehat\lambda_j^{(\vw)}/\widehat\lambda_1^{(\vw)})^{\alpha}$ (soft) \Comment{\eqref{eq:effrank}}
\State \quad $\sigma_j\gets\mathrm{sgn}\,\mathrm{corr}\bigl(\{\langle\vz_i,\widehat{\vu}_\vw^{(j)}\rangle\}_i,\ \{\vw^\top\hat{\vy}_i\}_i\bigr)$ \Comment{orientation from cached data, no ODE}
\State \quad draw $\vz\sim\gN$; $\vz_\vw\gets\vz+\gamma\sum_{j}\beta_j\,\sigma_j\,\widehat{\vu}_\vw^{(j)}$ \label{line:steer} \Comment{weighted step, Thm.~\ref{thm:ascent}}
\State \textbf{Decoding}: if $\mathcal{G}_\vw$ is given, replace $\vv_\theta$ in \eqref{eq:ode} by $\vv_\theta+\eta(t)\,\mathcal{G}_\vw(\vx_t;\{\mC_{ab}\})$ for $t\ge t_{\mathrm{start}}$ \label{line:guide} \Comment{Sec.~\ref{sec:guidance}}
\State \Return $\hat\vx_\vw\gets\Phi_\theta(\vz_\vw)$ \Comment{one ODE solve, unmodified when $\mathcal{G}_\vw$ is absent}
\end{algorithmic}
\end{algorithm}

\subsection{The objective is low rank in noise space}
\label{sec:lowrank}
The noise-space objective $H_\vw$ is accessible only by evaluating the sampler, so the natural approach is to ascend it from a random $\vz$, but each step requires
\begin{equation}\label{eq:chainrule}
    \nabla_\vz H_\vw(\vz) = \mJ_{\Phi_\theta}(\vz)^{\top}\, \nabla_\vx h_\vw(\vx), \qquad \vx = \Phi_\theta(\vz),
\end{equation}
where $\mJ_{\Phi_\theta}(\vz)\in\R^{D\times D}$ is the Jacobian of the sampler at $\vz$. This gradient differentiates through the ODE solve and must be recomputed for every noise vector and every weight. Prior guided methods avoid this cost by correcting the velocity at each sampling step. Yet \eqref{eq:chainrule} also suggests why exact gradients may be unnecessary. When the data lie near a $k$-dimensional manifold, the learned score at low noise levels is approximately orthogonal to that manifold \citep{stanczuk2024diffusion}, so we expect the sampler to contract the remaining $D-k$ noise directions. $\mJ_{\Phi_\theta}(\vz)^\top$ then has about $k$ dominant singular directions, and every gradient concentrates in a $k$-dimensional subspace. If that subspace is shared across $\vz$, the averaged metric $\mM_\vw = \E[\nabla H_\vw\nabla H_\vw^\top]$ has effective rank about $k$ (Eq.~\ref{eq:effrank}). We find empirical support for this hypothesis. On every benchmark task the estimated spectrum of $\mM_\vw$ concentrates on a few directions (Figure~\ref{fig:rankscaling}). Within the tasks of one benchmark family such as C-10/MOP1--9 (Section~\ref{sec:experiments}), the effective rank changes little as $D$ grows, which suggests that it depends on the intrinsic dimension of the data instead. Differences appear mainly between families, and pooling all 47 tasks, which span nearly four orders of magnitude in $D$, a log-log fit gives $r_{\mathrm{eff}} \propto D^{0.42}$, where $r_{\mathrm{eff}}$ is the effective number of directions steering uses (Eq.~\ref{eq:preff}, \S~\ref{app:rankscaling}). The retained fraction of space therefore shrinks as $D$ grows.
The active directions are a property of the sampler and the proxies rather than of any particular $\vz$, and can be estimated once (Section~\ref{sec:geometry}). A small rank also matters for steering, since displacing along fewer directions moves the noise less for the same gain (Theorem~\ref{thm:ascent}). None of the guarantees below assumes that the spectrum decays. They hold for any rank, with the cost of truncation bounded in Proposition~\ref{prop:optframe}.

\subsection{Noise-space geometry}
\label{sec:geometry}
\textbf{Learning the metric.}
$\mM_\vw$ is the average gradient outer product (AGOP) of $H_\vw$ \citep{radhakrishnan2024rfm}, the second moment of its gradient under the prior, which is prohibitively expensive to form by \eqref{eq:chainrule}, so we estimate it from function values. We decode $n$ noise vectors $\vz_i \sim \gN(\vzero, \mI_D)$ and score the results with the proxies, obtaining pairs $(\vz_i, \hat{\vy}_i)$ with $\hat{\vy}_i = \hat{\vf}(\Phi_\theta(\vz_i))$. 
For each objective $a$, following \citep{radhakrishnan2024rfm} we fit a so-called Recursive Feature Machine (RFM) to these pairs,
a kernel ridge surrogate $\phi_a \approx g_a$ whose Mahalanobis metric $\mM_a$ estimates the AGOP of $g_a$ and is learned by alternating the fit with the AGOP update, 
\begin{equation}\label{eq:agop}
    \mM_a^{(q+1)} \leftarrow \frac{1}{n}\sum_{i=1}^{n} \nabla \phi_a^{(q)}(\vz_i)\, \nabla \phi_a^{(q)}(\vz_i)^{\top}, \qquad \mM_a^{(0)} = \mI_D .
\end{equation}
Kernel machine gradients have a closed form, so the geometry is recovered from input-output pairs alone and $\Phi_\theta$ is never differentiated. The final surrogate gradients $\nabla\phi_a(\vz_i)$ are retained. As shown in \citep{radhakrishnan2024rfm, beaglehole2025xrfm}, RFM is well aligned with the AGOP and also particularly effective when there are only a few samples. 
\begin{figure}[t]
    \centering
    \includegraphics[width=1\linewidth]{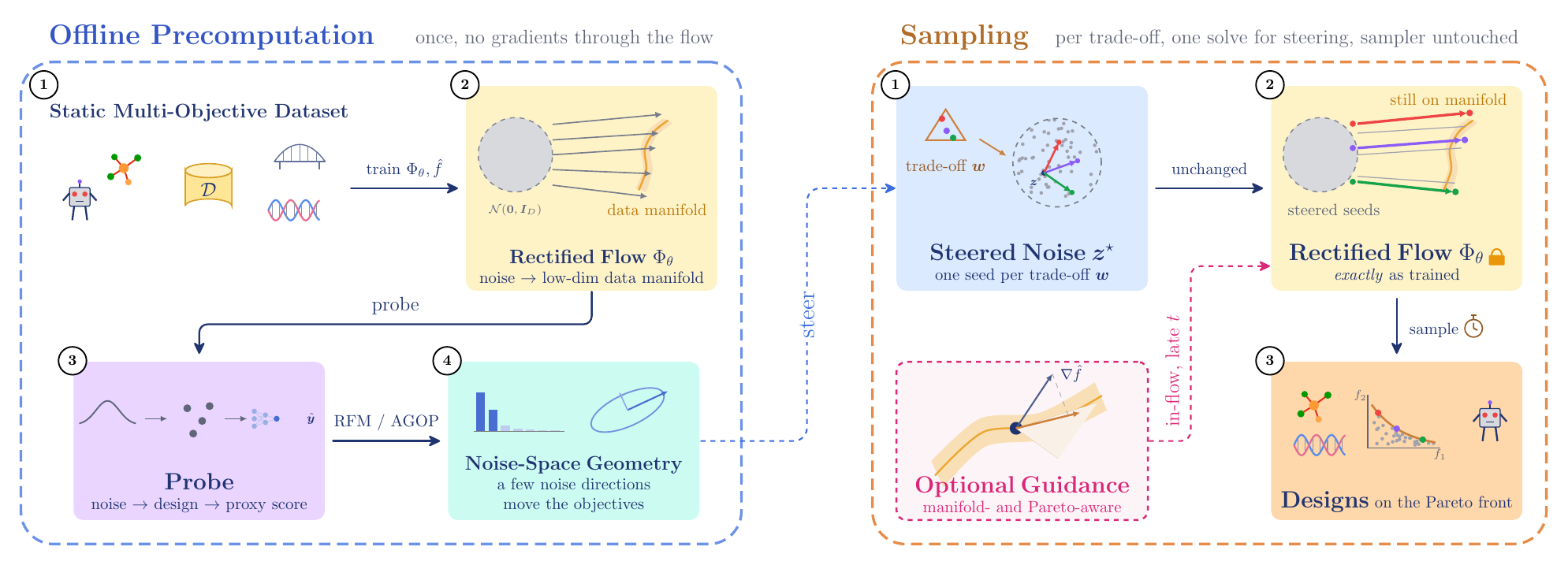}
    \caption{\textbf{Noise-space RFM steering.} \emph{(Left, once per task.)} Random noise is decoded by the flow $\Phi_\theta$ and scored by the proxies $\hat{\vf}$ , and the RFM recovers from these pairs a metric whose few leading directions are the ones that move the objectives. \emph{(Right, per trade off $\vw$)}. A noise vector is displaced along the cached directions for $\vw$ and decoded by the unmodified flow in one ODE solve, with optional late guidance (Section~\ref{sec:guidance}).}
    \label{fig:overview}
\end{figure}

\textbf{One precomputation for all trade off $\vw$'s.}
Let $\mG_i := [\nabla\phi_1(\vz_i) \cdots \nabla\phi_m(\vz_i)] \in \R^{D \times m}$ collect the cached gradients at sample $i$. For any differentiable scalarization $s_\vw:\R^m\to\R$ of the objective values, with $h_\vw=s_\vw\circ\hat\vf$ (the weighted sum \eqref{eq:scalarization} is $s_\vw(\vy)=\vw^\top\vy$), the surrogate $\phi_\vw = s_\vw(\phi_1, \dots, \phi_m)$ has gradient $\nabla\phi_\vw(\vz_i) = \mG_i \vc_i$ with $\vc_i := \nabla s_\vw(\phi_1(\vz_i), \dots, \phi_m(\vz_i))$, so the metric of every trade off is assembled from the cache,
\begin{equation}\label{eq:decomp}
    \widehat{\mM}_\vw := \frac{1}{n}\sum_{i} \nabla\phi_\vw(\vz_i)\nabla\phi_\vw(\vz_i)^\top = \frac{1}{n}\sum_{i} \mG_i \vc_i \vc_i^\top \mG_i^\top \;=\; \sum_{a,b=1}^{m} w_a w_b\, \mC_{ab},
\end{equation}
where the last equality holds for the weighted sum \eqref{eq:scalarization}, under which $\vc_i \equiv \vw$ and $\mC_{ab} := \frac{1}{n}\sum_{i} \nabla\phi_a(\vz_i)\nabla\phi_b(\vz_i)^\top$, with $\mC_{aa}$ the converged iterate of \eqref{eq:agop}. The $m(m{+}1)/2$ distinct blocks ($\mC_{ba}=\mC_{ab}^\top$) are stored once,
and a new weight costs one contraction of them and one eigendecomposition. The directions depend on neither the noise vector nor the trajectory, so one precomputation serves every weight, whereas guidance recomputes its signal at every step for every weight. 

\textbf{Active subspace.}
Compute the eigendecomposition $\mM_\vw = \sum_j \lambda_j^{(\vw)} \vu_\vw^{(j)}\vu_\vw^{(j)\top}$ with $\lambda_1^{(\vw)} \ge \dots \ge \lambda_D^{(\vw)}$. Each eigenvalue is the mean squared derivative of $H_\vw$ along its eigenvector, so the leading eigenvectors are the noise directions the objective responds to most \citep{constantine2015}. We retain the fewest that carry all but an $\varepsilon_{\mathrm{rank}}$ fraction of the total spectrum,
\begin{equation}\label{eq:effrank}
    k_{\mathrm{eff}}(\vw) = \min\Bigl\{ r : \textstyle\sum_{j>r} \lambda_j^{(\vw)} < \varepsilon_{\mathrm{rank}} \sum_{j} \lambda_j^{(\vw)} \Bigr\},
\end{equation}
and the discarded tail is exactly the expected squared gradient outside the retained directions (Proposition~\ref{prop:optframe}). We write $\mU_\vw = [\vu_\vw^{(1)} \cdots \vu_\vw^{(r)}]$ with $r = k_{\mathrm{eff}}(\vw)$ for the retained basis, whose span is the active subspace, and the soft weighting of Section~\ref{sec:steering} keeps every direction and weights it by its eigenvalue instead. The same objects computed from $\widehat{\mM}_\vw$ carry hats. $\widehat{\mM}_\vw$ converges at rate $O(n^{-1/2})$ to the metric of the RFM surrogates, and so do its retained directions under an eigengap, or at $O(n^{-\alpha/2})$ under soft weights. Appendix~\ref{sec:metric-props} bounds the remaining surrogate and proxy errors.

\subsection{Noise-space Steering}
\label{sec:steering}
Given $\vz \sim \gN(\vzero, \mI_D)$ and a weight $\vw$, we displace the noise within the active subspace to a new noise sample $\vz_\vw$, and then decode $\vz_\vw$ through the unmodified ODE to generate a sample $\hat\vx_\vw$: 
\begin{equation}\label{eq:steer}
    \vz_\vw = \vz + \gamma \sum_{j} \beta_j\, \sigma_j\, \widehat{\vu}_\vw^{(j)}, \qquad \hat\vx_\vw = \Phi_\theta(\vz_\vw),
\end{equation}
with \textit{steering strength} $\gamma$ and weights $\beta_j = \mathbf{1}[j \le k_{\mathrm{eff}}(\vw)]$ (hard) or $\beta_j = (\widehat{\lambda}_j^{(\vw)}/\widehat{\lambda}_1^{(\vw)})^{\alpha}$ (soft). Since the metric carries no orientation, $\sigma_j$ is the sign of the Pearson correlation between $\langle \vz_i, \widehat{\vu}_\vw^{(j)}\rangle$ and $\vw^\top\hat{\vy}_i$ over the cache, which by Stein's identity is the population sign of the mean derivative of $H_\vw$ along $\widehat{\vu}_\vw^{(j)}$, the sign Theorem~\ref{thm:ascent} requires. The following theorem states that, averaged over the noise, this displacement indeed increases the scalarized objective. 
\begin{theorem}[Steering ascends the scalarized objective]\label{thm:ascent}
Let $H_\vw$ be smooth with $\|\nabla^2 H_\vw\|_2 \le L_\vw$ (Assumption~\ref{ass:reg}). Then,
\begin{equation}\label{eq:ascent}
    \E_{\vz}\bigl[H_\vw(\vz_\vw) - H_\vw(\vz)\bigr] \;\ge\; \gamma\,\langle \boldsymbol\beta, |\bar{\vd}_\vw| \rangle - \tfrac{1}{2} L_\vw\,\gamma^2\, \|\boldsymbol\beta\|_2^2 ,
\end{equation}
which is positive for $0<\gamma<2\gamma^\star$ and maximal at $\gamma^\star = \langle\boldsymbol\beta,|\bar{\vd}_\vw|\rangle/(L_\vw\|\boldsymbol\beta\|_2^2)$, where it equals $\langle\boldsymbol\beta,|\bar{\vd}_\vw|\rangle^2/(2L_\vw\|\boldsymbol\beta\|_2^2)$. 
Here $\bar{d}_{\vw,j} := \vu_\vw^{(j)\top}\E_{\vz}\nabla H_\vw$, with $\sigma_j = \mathrm{sgn}(\bar{d}_{\vw,j})$ in \eqref{eq:steer}.
\end{theorem}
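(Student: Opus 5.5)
The plan is to apply a second-order Taylor bound along the deterministic displacement and then take the expectation over $\vz$. Write $\vdelta := \gamma\sum_j \beta_j\sigma_j \vu_\vw^{(j)}$ for the steering displacement. In the statement $\bar d_{\vw,j}$ and $\sigma_j$ are defined through the population eigenvectors $\vu_\vw^{(j)}$, so I would prove the bound with $\widehat{\vu}_\vw^{(j)}$ replaced by $\vu_\vw^{(j)}$ and leave the estimation error to the appendix on metric properties. The key fact is that $\vdelta$ depends on neither $\vz$ nor the trajectory; it is a fixed vector given $\vw$ and the cache. This is what lets the expectation pass through the linear term.

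\textbf{Step 1 (pointwise descent-lemma bound).} By Assumption~\ref{ass:reg}, $\|\nabla^2 H_\vw\|_2\le L_\vw$ everywhere, so Taylor's theorem with integral remainder gives, for every $\vz$,
\begin{equation*}
H_\vw(\vz+\vdelta)-H_\vw(\vz)\;\ge\;\langle\nabla H_\vw(\vz),\vdelta\rangle-\tfrac12 L_\vw\|\vdelta\|_2^2 .
\end{equation*}

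\textbf{Step 2 (expectation).} Take $\E_\vz$ under $\gN(\vzero,\mI_D)$. This needs $\E\|\nabla H_\vw\|<\infty$, which follows from the Hessian bound, since then $\|\nabla H_\vw(\vz)\|\le\|\nabla H_\vw(\vzero)\|+L_\vw\|\vz\|$. The linear term becomes $\langle\E\nabla H_\vw,\vdelta\rangle=\gamma\sum_j\beta_j\sigma_j\,\bar d_{\vw,j}$. With $\sigma_j=\mathrm{sgn}(\bar d_{\vw,j})$ we have $\sigma_j\bar d_{\vw,j}=|\bar d_{\vw,j}|$, so the linear term equals $\gamma\langle\boldsymbol\beta,|\bar{\vd}_\vw|\rangle$, using $\beta_j\ge0$ for both the hard and the soft weights. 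Because the $\vu_\vw^{(j)}$ are orthonormal eigenvectors of the symmetric matrix $\mM_\vw$ and $\sigma_j^2=1$, the quadratic term is $\|\vdelta\|_2^2=\gamma^2\|\boldsymbol\beta\|_2^2$. This gives \eqref{eq:ascent}.

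\textbf{Step 3 (optimizing $\gamma$).} The right-hand side is the concave quadratic $q(\gamma)=A\gamma-\tfrac12 B\gamma^2$ with $A=\langle\boldsymbol\beta,|\bar{\vd}_\vw|\rangle$ and $B=L_\vw\|\boldsymbol\beta\|_2^2$. Its roots are $0$ and $2A/B$, its maximizer is $\gamma^\star=A/B$, and its maximum value is $A^2/(2B)$, which matches the statement.

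\textbf{Main obstacle.} The analysis itself is routine. The delicate point is bookkeeping: the inequality concerns the population directions and signs, while the algorithm uses $\widehat{\vu}$ and correlation-based $\sigma_j$. I would state that the theorem is for exact directions and signs. For the estimated ones, orthonormality still gives $\|\vdelta\|^2=\gamma^2\|\boldsymbol\beta\|^2$ exactly, and the linear term holds with $\bar d$ redefined along $\widehat{\vu}$. The Stein's-identity remark in the text then covers the signs: $\E[\langle\vz,\vu\rangle H_\vw(\vz)]=\vu^\top\E\nabla H_\vw$, so the sign of the correlation is $\mathrm{sgn}(\bar d)$. I would also treat degenerate cases, noting that if $\bar d_{\vw,j}=0$ then that term contributes zero whatever the sign.
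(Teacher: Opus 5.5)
Your proposal is correct and follows essentially the same route as the paper: a pointwise second-order Taylor bound along the $\vz$-independent displacement, an expectation that passes through the linear term, orthonormality giving $\|\vdelta\|_2^2=\gamma^2\|\boldsymbol\beta\|_2^2$, and maximization of the resulting concave quadratic in $\gamma$. The differences are cosmetic. The paper first proves the bound for an arbitrary fixed sign pattern under the hard cutoff, then identifies $\sigma_j=\mathrm{sgn}(\bar d_{\vw,j})$ as the maximizing choice and extends to general $\boldsymbol\beta$. It also takes integrability of $\nabla H_\vw$ from Assumption~\ref{ass:reg} rather than deriving it from the Hessian bound as you do.
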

The gain is linear in $\gamma$ and the curvature penalty quadratic, so a sufficiently small steering strength always ascends (proof in Appendix~\ref{sec:app:ascent}). The subspace enters \eqref{eq:ascent} through $\bar{\vd}_\vw$, the mean gradient it retains, and the next result controls what it misses.

\begin{proposition}[Optimality of the active subspace]\label{prop:optframe}
Among all $r$ dimensional subspaces, $\mathrm{span}(\mU_\vw)$ captures the largest expected squared gradient of $H_\vw$, equal to $\sum_{j\le r}\lambda_j^{(\vw)}$, and the expected squared gradient it leaves out is exactly $\sum_{j>r}\lambda_j^{(\vw)}$.
\end{proposition}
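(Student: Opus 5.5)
The plan is to turn the statement into a trace optimization over projections and then apply Ky Fan's maximum principle. For an $r$-dimensional subspace $\mathcal{V}\subseteq\R^D$ with orthonormal basis $\mV\in\R^{D\times r}$ and orthogonal projector $\rmP_{\mathcal V}=\mV\mV^\top$, we read ``expected squared gradient captured by $\mathcal V$'' as $\E_\vz\|\rmP_{\mathcal V}\nabla H_\vw(\vz)\|_2^2$. Since $\rmP_{\mathcal V}$ is symmetric and idempotent,
\[
\E_\vz\|\rmP_{\mathcal V}\nabla H_\vw\|_2^2=\E_\vz\,\mathrm{tr}\bigl(\rmP_{\mathcal V}\nabla H_\vw\nabla H_\vw^\top\bigr)=\mathrm{tr}\bigl(\mV^\top\mM_\vw\mV\bigr).
\]
Exchanging trace and expectation requires $\E\|\nabla H_\vw\|^2<\infty$, i.e.\ that $\mM_\vw$ is finite. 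I would take this from Assumption~\ref{ass:reg} (a Lipschitz gradient gives at most linear growth, hence finite Gaussian second moments). This is the only analytic point and is routine.

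Next we maximize $\mathrm{tr}(\mV^\top\mM_\vw\mV)$ over $\mV^\top\mV=\mI_r$. Because $\mM_\vw$ is symmetric positive semidefinite, Ky Fan's principle gives the maximum value $\sum_{j\le r}\lambda_j^{(\vw)}$, attained at $\mV=\mU_\vw$. To keep the argument self-contained, write $\mathrm{tr}(\mV^\top\mM_\vw\mV)=\sum_j\lambda_j^{(\vw)}p_j$ with $p_j=\|\mV^\top\vu_\vw^{(j)}\|^2$. These weights satisfy $0\le p_j\le1$ and $\sum_jp_j=r$. A linear function of $(p_j)$ over this polytope is maximized by placing unit mass on the $r$ largest $\lambda_j$, which gives the bound. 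Equality at $\mU_\vw$ is immediate, since $\mU_\vw^\top\mM_\vw\mU_\vw=\mathrm{diag}(\lambda_1^{(\vw)},\dots,\lambda_r^{(\vw)})$. When eigenvalues tie at position $r$ the maximizer is not unique, but the maximal value is unchanged. I expect this to be the main step, though it is classical.

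Finally, for the omitted part, use orthogonality: $\|\nabla H\|^2=\|\rmP\nabla H\|^2+\|(\mI-\rmP)\nabla H\|^2$. Taking expectations gives the omitted energy $\mathrm{tr}(\mM_\vw)-\sum_{j\le r}\lambda_j^{(\vw)}=\sum_{j>r}\lambda_j^{(\vw)}$. Equivalently, it is $\mathrm{tr}\bigl((\mI-\mU_\vw\mU_\vw^\top)\mM_\vw\bigr)$, evaluated in the eigenbasis. This identifies the discarded tail in \eqref{eq:effrank}, and the same computation applied to $\widehat{\mM}_\vw$ gives the hatted version.
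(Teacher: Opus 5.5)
Your proposal is correct and follows the paper's proof: both use the trace identity $\E\|\mV^\top\nabla H_\vw\|_2^2=\operatorname{tr}(\mV^\top\mM_\vw\mV)$ and then Ky Fan's maximum principle. The differences are minor: you prove Ky Fan inline via the weights $p_j=\|\mV^\top\vu_\vw^{(j)}\|_2^2$, and you obtain the omitted energy from the Pythagorean split rather than from the paper's direct computation $\operatorname{tr}(\rmP_\vw^\perp\mM_\vw)=\sum_{j>r}\lambda_j^{(\vw)}$ in Proposition~\ref{prop:tail}; note also that the $p_j$ argument gives the appendix's uniqueness claim under $\lambda_r^{(\vw)}>\lambda_{r+1}^{(\vw)}$, and that square-integrability of $\nabla H_\vw$ is stated outright in Assumption~\ref{ass:reg}, so you need not derive it.
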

With $\rmP_\vw^\perp := \mI_D - \mU_\vw\mU_\vw^\top$, Jensen's inequality gives $\|\rmP_\vw^\perp\E\nabla H_\vw\|_2^2\le\sum_{j>r}\lambda_j^{(\vw)}$ for the mean gradient outside the active subspace, so the rank rule \eqref{eq:effrank} bounds what Theorem~\ref{thm:ascent} loses to truncation (proof in Appendix~\ref{app:active-subspace}). Figure~\ref{fig:teaser}b is consistent with this. Displacing along $\widehat{\vu}_\vw^{(1)}$ raises the true objective, while a trailing eigenvector or a random direction of equal norm does not, and the gain saturates beyond a task dependent step.

\textbf{Steering beyond the support of the data.}
The displacement has norm $R := \gamma\|\boldsymbol\beta\|_2$. Because steering acts only on the noise, $R$ measures how far the steered noise departs from the region the flow was trained on, and we vary it directly (\S~\ref{app:tier2_strength}). As $R$ grows, hypervolume degrades on ZDT4, saturates on the NAS tasks, and keeps increasing on RE36. Offline benchmarks withhold the top solutions from the training set \citep{xue2024offmoo, trabucco2022designbench}, so a sampling method needs to reach beyond the data, which generative methods do only conservatively \citep{offlinefrontier2026}. Proxies, however, are reliable only near the data \citep{trabucco2021coms, brookes2019cbas, kim2026review}, so their predictions far from its support tend to be inaccurate. Theorem~\ref{thm:recovery} (\S~\ref{sec:recovery}) separates the two effects of $R$. A larger $R$ reaches more of the best front the flow can produce, but the proxies are then evaluated farther from the data, where their error is larger, and which effect dominates depends on the task. Section~\ref{sec:guidance} recovers part of the remaining value without a larger $R$.
\section{Guided Noise-Space Steering}
\label{sec:guidance}
Noise-space-only steering stops improving when the mean gradient leaves the active subspace or when the optimum lies beyond the reach $R$ (Theorem~\ref{thm:recovery}). To recover part of this loss, we add guidance late in the same trajectory, replacing the gradient term in \eqref{eq:guidance} for $t\ge t_{\mathrm{start}}$ with further geometry-adapted projections of the proxy gradient within the RFM+AGOP framework. The RFM geometry is computed once per task, independent of the noise and the trajectory, so each guided step costs one proxy gradient and never differentiates through the ODE solve. Plain guidance follows the scalarized gradient $\vg = \sum_a w_a\nabla\hat f_a$ as is, and we introduce two families of operators that exploit information it discards. \emph{Data-adaptive} operators (RFM-APG, RFM-DAMG) constrain $\vg$ to the geometry of the objective and of the samples, so a step moves only where the objective responds and the flow can follow. \emph{Pareto-aware} operators (RFM-Cone) replace $\vg$ by a direction all proxies agree on, so a step does not decrease any objective to first order.

\textbf{Data adaptive guidance.}
The geometry of Section~\ref{sec:geometry}, rebuilt at any $t$, supplies the first constraint and the candidates sampled in parallel the second. \emph{RFM-APG} repeats the construction once more per task at $t_{\mathrm{start}}$, on intermediate states instead of noise, and projects $\vg$ onto the resulting active subspace, so the step is restricted to directions in which the remaining flow still moves the objective. \emph{RFM-DAMG} reweights $\vg$ by the cached spectrum, $\rmR_\vw = \widehat{\mU}_\vw\operatorname{diag}(\beta_j)\widehat{\mU}_\vw^\top$, and projects the result onto a tangent space estimated by local PCA over the nearest other candidates at the same $t$, which come from nearby preferences. When the local rank is not certified by an eigenvalue threshold the projection is skipped and the operator reduces to $\rmR_\vw\vg$, so RFM-DAMG exploits a manifold where one exists and reduces to plain guidance where neither a manifold nor a spectral gap exists.

\textbf{Pareto aware guidance.}
The per objective proxy gradients are available at every step but scalarized guidance collapses them into one before looking at them, so a step can raise the sum while lowering an objective. \emph{RFM-Cone} uses them jointly. With $\widehat\mJ_u$ the matrix of unit normalized proxy gradients at $\hat\vx_1$ and $\boldsymbol\pi$ the normalized profile of the current proxy scores, it forms the preference direction $\vu_\vw = \widehat\mJ_u(2\vw - \boldsymbol\pi)$, which pulls hardest on the objectives lagging their target weight, and projects it onto the cone $\mathcal{K}(\vx) = \{\vs : \langle\nabla\hat f_a(\vx), \vs\rangle \ge 0\ \forall a\}$, falling back to the MGDA min norm direction \citep{desideri2012mgda} where the projection vanishes. The step therefore follows $\vw$ as far as the objectives permit and never trades one objective for another, with no parameters beyond the onset $t_{\mathrm{start}}$ and strength $\kappa$, where $\eta(t) = \kappa\|\vv_\theta\|/\|\gG_\vw\|$, shared by all operators. Additional details in Appendix~\ref{sec:complementary}.

\begin{theorem}[Per objective improvement]\label{thm:cone}
The RFM-Cone step $\vs$ satisfies $\langle\nabla\hat f_a(\vx),\vs\rangle\ge0$ for every $a$. If the proxy gradients are $L$-Lipschitz, then for a small enough step every objective with $\langle\nabla\hat f_a(\vx),\vs\rangle>0$ increases and the rest change only at second order. The cone contains a strictly ascending direction if and only if $\vx$ is not Pareto stationary, and where the projection vanishes at a stationary point the min-norm fallback is zero and the step reduces to the unmodified flow.
\end{theorem}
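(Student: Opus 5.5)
The argument is convex-analytic and has four parts, one for each clause of Theorem~\ref{thm:cone}. Throughout, write $\vg_a := \nabla\hat f_a(\vx)$ and let $\mathcal{K}(\vx) = \{\vs : \langle \vg_a, \vs\rangle \ge 0\ \forall a\}$. This set is a closed convex polyhedral cone: it is the dual cone of $\mathrm{cone}\{\vg_1,\dots,\vg_m\}$.

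\emph{Step 1: feasibility.} The step $\vs$ is either the Euclidean projection $\Pi_{\mathcal{K}}(\vu_\vw)$ of the preference direction $\vu_\vw = \widehat\mJ_u(2\vw-\boldsymbol\pi)$, or the MGDA min-norm fallback.
\begin{itemize}
  \item In the first case $\vs\in\mathcal{K}$ by definition.
  \item In the second case let $\vd = \sum_a \lambda_a \vg_a$ be the min-norm element of $\mathrm{conv}\{\vg_a\}$. The optimality condition of the min-norm problem gives $\langle \vg_a, \vd\rangle \ge \|\vd\|^2 \ge 0$ for every $a$, so $\vd\in\mathcal{K}$ as well.
\end{itemize}
This establishes the first claim.

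\emph{Step 2: second-order control.} If $\nabla\hat f_a$ is $L$-Lipschitz, the descent lemma gives
\[
\hat f_a(\vx+\epsilon\vs) \ge \hat f_a(\vx) + \epsilon\langle \vg_a,\vs\rangle - \tfrac{L}{2}\epsilon^2\|\vs\|^2 .
\]
\begin{itemize}
  \item For each $a$ with $\langle \vg_a,\vs\rangle>0$, the right side exceeds $\hat f_a(\vx)$ whenever $\epsilon < 2\langle \vg_a,\vs\rangle/(L\|\vs\|^2)$. Taking the minimum of these thresholds over the finitely many such $a$ gives one step size that works for all of them simultaneously.
  \item For the remaining $a$, the inner product is exactly zero, so the change is bounded below by $-\tfrac{L}{2}\epsilon^2\|\vs\|^2$, which is second order.
\end{itemize}

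\emph{Step 3: characterization.} Pareto stationarity means $\vzero\in\mathrm{conv}\{\vg_a\}$. I prove the equivalence with Gordan's theorem of the alternative. Exactly one of the following holds:
\begin{itemize}
  \item there exists $\vs$ with $\langle \vg_a,\vs\rangle>0$ for all $a$;
  \item there exists $\boldsymbol\lambda\in\simplex^{m-1}$ with $\sum_a\lambda_a\vg_a=\vzero$.
\end{itemize}
Such an $\vs$ lies in the interior of $\mathcal{K}$ and is strictly ascending for every objective. So $\mathcal{K}$ contains a strictly ascending direction if and only if $\vx$ is not Pareto stationary.

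The direction "not stationary implies ascent exists" can also be made constructive. If $\vx$ is not stationary then $\vd\ne\vzero$, and the inequality $\langle\vg_a,\vd\rangle\ge\|\vd\|^2>0$ from Step 1 shows that the fallback itself is strictly ascending. This also ensures the algorithm always outputs an ascending step away from stationary points, even when the projection of $\vu_\vw$ happens to vanish.

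\emph{Step 4: the stationary case.} At a stationary point $\vzero\in\mathrm{conv}\{\vg_a\}$, so the min-norm element is $\vd=\vzero$. Hence if the projection also vanishes, $\vs=\vzero$, and the guided velocity $\vv_\theta+\eta(t)\mathcal{G}_\vw$ reduces to $\vv_\theta$.

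This last step has one technicality: the strength $\eta(t)=\kappa\|\vv_\theta\|/\|\mathcal{G}_\vw\|$ is a $0/0$ expression when $\mathcal{G}_\vw=\vzero$. I would adopt the convention that the correction term is zero in that case, which matches the algorithm's fallback.

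\emph{Main obstacle.} Steps 1, 2 and 4 are routine. The step needing most care is Step 3, specifically relating "strictly ascending" (positive inner product for every objective) to the cone and to Gordan's theorem. The subtlety is the degenerate case where some $\vg_a=\vzero$: the point is then trivially stationary, and no direction can strictly ascend that objective, so the equivalence is consistent. I would check this case explicitly.
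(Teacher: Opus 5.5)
Your proposal is correct and follows the paper's proof essentially step for step. Like the paper, you use the projection and min-norm optimality conditions for feasibility, the descent lemma for the first- and second-order claims, and a separation argument for the stationarity equivalence. The paper proves that equivalence directly with the separating hyperplane theorem, which is what Gordan's theorem packages, and your remark that the min-norm element is itself a strictly ascending direction makes it constructive. The $0/0$ technicality you flag is handled in the paper by the constant $\varepsilon>0$ in $\eta(t)=\kappa\|\vv_\theta\|/(\|\mathcal G_\vw\|+\varepsilon)$, so no separate convention is needed.
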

Each guided step improves every proxy to first order, which no scalarized guidance can guarantee (proof in \S~\ref{app:thm:cone}).
Steering alone adds no gradient evaluation. Each operator adds one proxy backpropagation per guided step, and RFM-Cone an active-set projection whose cost grows with $m$, so RFM-DAMG runs below ParetoFlow's cost and RFM-Cone trades cost for the best rank (Table~\ref{tab:wallclock}).

\section{Experiments}
\label{sec:experiments}
We evaluate on the $47$ tasks of Off-MOO-Bench \citep{xue2024offmoo} against $14$ representative baselines, comprising the $11$ DNN-based surrogate methods of Off-MOO-Bench and three generative methods. The tasks span $2$ to $10{,}184$ design dimensions, $2$ to $8$ objectives, and continuous, categorical, and mixed-integer design spaces.

\paragraph{Benchmark tasks.} Each task supplies a dataset $\gD$ and an oracle $\vf$ used for evaluation. The $47$ tasks form six families: synthetic DTLZ1--7 and ZDT1--4, ZDT6,\footnote{\citet{paretoflow, pcd} excluded DTLZ2--6 due to a since-corrected evaluation error (\url{https://github.com/lamda-bbo/offline-moo/issues/14}).} neural architecture search C-10/MOP1--9 and IN-1K/MOP1--9 \citep{lu2023nas}, MO-Hopper and MO-Swimmer from multi-objective RL \citep{xu2020pgmorl}, and $15$ RE engineering design problems \citep{tanabe2020re}. Categorical NAS designs are optimized as continuous logits, so $D$ denotes the logit dimension. Following \citet{paretoflow, pcd}, we exclude the combinatorial tasks (see \S~\ref{app:tasks}).

\begin{table}[t]\centering
\caption{Average rank ($\downarrow$) per task family (number of tasks in parentheses), based on 100th-percentile hypervolume (Section~\ref{sec:setup}); mean $\pm$ std over 5 seeds. Best per column in bold, second best underlined. The last two rows count tasks on which RFM-Cone wins, ties, or loses (W/T/L) against PCD and against ParetoFlow$^{\star}$, the better of ParetoFlow and ParetoFlow$+$ on each task, with ties within $5\times10^{-4}$ in hypervolume.}
\label{tab:avg_rank}
\small\setlength{\tabcolsep}{3.5pt}\resizebox{\textwidth}{!}{%
\begin{tabular}{lccccccc}
\toprule
Methods & DTLZ (7) & ZDT (5) & C-10/MOP (9) & IN-1K/MOP (9) & MORL (2) & RE (15) & All Tasks (47) \\ \midrule
$\mathcal{D}$(best) & 12.43 $\pm$ 0.32 & 13.00 $\pm$ 0.45 & 10.58 $\pm$ 0.51 & 6.38 $\pm$ 0.10 & 4.10 $\pm$ 0.89 & 16.80 $\pm$ 0.27 & 12.02 $\pm$ 0.13 \\ \midrule
E2E & \textbf{3.34 $\pm$ 0.61} & \underline{5.16 $\pm$ 0.62} & 12.41 $\pm$ 0.55 & 11.38 $\pm$ 1.06 & 17.40 $\pm$ 1.95 & 8.37 $\pm$ 1.20 & 9.01 $\pm$ 0.59 \\
E2E + GradNorm & 9.71 $\pm$ 0.76 & 13.32 $\pm$ 0.59 & 13.80 $\pm$ 1.24 & 17.60 $\pm$ 0.41 & 7.70 $\pm$ 3.78 & 13.04 $\pm$ 0.67 & 13.37 $\pm$ 0.46 \\
E2E + PcGrad & \underline{4.97 $\pm$ 1.19} & 7.56 $\pm$ 1.82 & 11.94 $\pm$ 0.36 & 12.42 $\pm$ 0.75 & 14.90 $\pm$ 3.45 & \underline{6.05 $\pm$ 0.22} & 8.78 $\pm$ 0.33 \\
MH & 6.00 $\pm$ 0.96 & \textbf{4.88 $\pm$ 1.23} & 12.26 $\pm$ 1.68 & 13.58 $\pm$ 0.38 & 11.20 $\pm$ 3.56 & 6.59 $\pm$ 0.30 & 8.94 $\pm$ 0.43 \\
MH + GradNorm & 9.91 $\pm$ 0.45 & 11.36 $\pm$ 0.55 & 15.56 $\pm$ 1.42 & 16.47 $\pm$ 0.56 & 8.10 $\pm$ 6.18 & 11.37 $\pm$ 0.69 & 12.79 $\pm$ 0.56 \\
MH + PcGrad & 8.03 $\pm$ 0.52 & 10.40 $\pm$ 0.88 & 11.43 $\pm$ 1.00 & 12.87 $\pm$ 0.50 & 14.10 $\pm$ 1.29 & 7.17 $\pm$ 0.40 & 9.84 $\pm$ 0.32 \\
MM & 5.86 $\pm$ 0.45 & 5.60 $\pm$ 0.76 & 10.56 $\pm$ 0.54 & 9.16 $\pm$ 0.96 & 12.30 $\pm$ 4.04 & 8.19 $\pm$ 0.76 & 8.38 $\pm$ 0.36 \\
MM + RoMA & 13.17 $\pm$ 0.54 & 9.64 $\pm$ 1.50 & 11.66 $\pm$ 0.75 & 10.78 $\pm$ 0.92 & 9.40 $\pm$ 1.67 & 10.48 $\pm$ 0.73 & 11.03 $\pm$ 0.26 \\
MM + IOM & 9.49 $\pm$ 0.67 & 10.24 $\pm$ 0.46 & 8.38 $\pm$ 1.02 & 6.51 $\pm$ 0.91 & 11.90 $\pm$ 4.71 & 7.47 $\pm$ 0.47 & 8.24 $\pm$ 0.45 \\
MM + ICT & 10.71 $\pm$ 0.80 & 9.00 $\pm$ 0.28 & 9.70 $\pm$ 0.99 & 8.18 $\pm$ 0.94 & \textbf{3.70 $\pm$ 1.25} & \underline{6.05 $\pm$ 0.81} & \textbf{8.07 $\pm$ 0.54} \\
MM + Tri-Mentor & 11.37 $\pm$ 0.67 & 11.76 $\pm$ 0.48 & 9.13 $\pm$ 0.82 & 11.40 $\pm$ 0.58 & 8.20 $\pm$ 3.46 & \textbf{5.65 $\pm$ 0.65} & 9.03 $\pm$ 0.17 \\
\midrule
PCD & 9.60 $\pm$ 0.31 & 11.08 $\pm$ 1.14 & 7.64 $\pm$ 1.22 & \textbf{5.60 $\pm$ 0.80} & 8.50 $\pm$ 4.23 & 12.95 $\pm$ 0.54 & 9.64 $\pm$ 0.42 \\
ParetoFlow & 15.31 $\pm$ 0.41 & 14.12 $\pm$ 0.41 & 10.60 $\pm$ 0.83 & 11.09 $\pm$ 0.89 & 12.70 $\pm$ 3.07 & 15.40 $\pm$ 0.37 & 13.39 $\pm$ 0.07 \\
ParetoFlow+ & 16.20 $\pm$ 0.13 & 12.16 $\pm$ 0.67 & 10.82 $\pm$ 0.42 & 10.11 $\pm$ 0.72 & 14.60 $\pm$ 2.68 & 16.37 $\pm$ 0.36 & 13.56 $\pm$ 0.07 \\
\midrule
RFM-AN (ours) & 12.40 $\pm$ 0.64 & 11.56 $\pm$ 0.33 & \underline{5.96 $\pm$ 0.66} & 6.74 $\pm$ 1.34 & 8.10 $\pm$ 2.90 & 9.64 $\pm$ 0.49 & 8.93 $\pm$ 0.54 \\
RFM-APG (ours) & 10.77 $\pm$ 0.75 & 10.36 $\pm$ 1.03 & 6.03 $\pm$ 0.52 & 6.62 $\pm$ 1.25 & \underline{5.40 $\pm$ 2.95} & 9.39 $\pm$ 0.26 & 8.36 $\pm$ 0.30 \\
\rowcolor{oursrow}RFM-DAMG (ours) & 10.91 $\pm$ 0.51 & 9.32 $\pm$ 1.25 & 6.13 $\pm$ 0.16 & 6.81 $\pm$ 0.73 & 9.50 $\pm$ 2.85 & 9.38 $\pm$ 0.58 & 8.49 $\pm$ 0.34 \\
\rowcolor{oursrow}RFM-Cone (ours) & 9.80 $\pm$ 0.48 & 9.48 $\pm$ 1.01 & \textbf{5.41 $\pm$ 0.73} & \underline{6.31 $\pm$ 0.91} & 8.20 $\pm$ 2.51 & 9.63 $\pm$ 0.48 & \underline{8.13 $\pm$ 0.22} \\
\midrule
\rowcolor{oursrow}RFM-Cone vs.\ PCD (W/T/L) & 4/0/3 & 3/0/2 & 5/0/4 & 4/0/5 & 1/0/1 & 10/2/3 & 27/2/18 \\
\rowcolor{oursrow}RFM-Cone vs.\ ParetoFlow$^{\star}$ (W/T/L) & 4/0/3 & 3/0/2 & 7/0/2 & 8/0/1 & 2/0/0 & 12/0/3 & 36/0/11 \\
\bottomrule\end{tabular}}\end{table}

\subsection{Experimental Setup}
\label{sec:setup}
We compare three families of methods, separated by rules in Table~\ref{tab:avg_rank}. \emph{DNN-based} methods fit surrogates on $\gD$ and optimize them with NSGA-II \citep{deb2002nsga2}. The surrogate is End-to-End (E2E), Multi-Head (MH), or Multiple Models (MM). E2E and MH are optionally trained with GradNorm or PcGrad, and MM is optionally regularized per objective with RoMA, IOM, ICT, or Tri-Mentoring \citep{chen2018gradnorm, yu2020pcgrad, yu2021roma, qi2022iom, yuan2023ict,
chen2023trimentoring}. \emph{Generative} methods are our primary comparison, since they share our modeling assumption and differ only in where the optimization signal enters. ParetoFlow and ParetoFlow+ guide the flow velocity with a weighted proxy gradient \citep{paretoflow}, where ParetoFlow+ is obtained by replacing ParetoFlow’s flow network and proxy predictor with those used in our RFM variants for a fair comparison. PCD \citep{pcd} conditions a diffusion model on extrapolated targets under classifier-free guidance (CFG). PGD-MOO is excluded (\S~\ref{sec:app:baseline_algo}).

\emph{Ours} comprises \textbf{RFM-AN}, our active-subspace noise steering (\method) 
\eqref{eq:steer} followed by one unmodified ODE solve, and three variants that add a guidance operator $\gG_\vw$ (Algorithm~\ref{alg:steer}, line~\ref{line:guide}): the data-adaptive \textbf{RFM-APG} (active-subspace projected ascent) and \textbf{RFM-DAMG} (manifold-constrained ascent), and the Pareto-aware \textbf{RFM-Cone}. None of them differentiates through the ODE solve. All four share a rectified-flow backbone trained on the full dataset and one geometry precomputation per task (Table~\ref{tab:hyperparams}).

Everything other than where the signal enters is held fixed across the proxy-based methods. Surrogates follow the MM configuration of \citet{xue2024offmoo}, the weighted-sum scalarization is swept over the same Das--Dennis weights \citep{das1998nbi}, and generative methods select the final $256$ candidates by non-dominated sorting under the proxies \citep{paretoflow}. PCD needs no proxies, so we follow its published configuration. All baselines are rerun from released code on identical hardware, seeds, and evaluation (additional details in Appendix~\ref{sec:app:baseline_algo}).

\textbf{Evaluation.}
Following \citet{xue2024offmoo}, every method returns $256$ solutions, scored by hypervolume (HV) against each task's reference point (Table~\ref{tab:refpoints}), which measures optimality and diversity jointly. We report HV of the full set in the main text and of its non-dominated top half in Appendix~\ref{app:hv50}. As a reference level, $\gD(\text{best})$ is the HV of the non-dominated subset of $\gD$, which a method must exceed to generalize past its data. Since HV is not commensurable across tasks, we aggregate by average rank (lower is better). On each task and seed we rank all methods and $\gD(\text{best})$ by HV (1 is best), average the ranks over the tasks of a family, and report mean $\pm$ std over $5$ seeds. Per-task values are in Appendix~\ref{app:hv100}.

\subsection{Results}
\label{sec:results}
Table~\ref{tab:avg_rank} and~\ref{tab:wallclock} support three claims: (i) Noise steering is effective, (ii) additional guidance operator further improves MOO performance, and (iii) our RFM+AGOP approaches are among the cheapest generative methods to sample, and noise steering alone is the fastest of all.

\myparagraph{Noise steering improves generative backbones.}
RFM-Cone attains the best overall rank among generative methods and the second best overall (Table~\ref{tab:avg_rank}), and every RFM variant outranks $\gD(\text{best})$ overall. ParetoFlow+, which applies ParetoFlow's per-step guidance to our flow and proxies, isolates the effect of steering, since it differs from RFM-AN only in where the optimization signal enters. RFM-AN improves the overall rank from $13.56$ to $8.93$, so the gain comes from noise steering rather than the backbone. RFM-AN also outranks PCD, the best prior generative method ($8.93$ vs.\ $9.64$). With guidance, RFM-Cone outperforms ParetoFlow$^{\star}$ in every task family ($36$ of $47$ tasks) and matches or beats PCD at about a third of its sampling cost.

\textbf{Guidance further improves performance.}
Each of our guidance operators further improves on RFM-AN in overall rank. The two data-adaptive operators perform similarly, and the Pareto-aware RFM-Cone improves most, with the largest gains on DTLZ and ZDT, where noise-space-only steering (RFM-AN) is weakest. These are synthetic, box-supported families whose fronts lie far outside the data, and surrogate-based search leads there by the widest margin. This is consistent with the regime Theorem~\ref{thm:recovery} isolates, where the best solutions lie beyond one steering step and late correction on the proxies recovers only part of the gap. On the NAS and MORL families, where the data lie on low-dimensional structure, ours outranks every generative baseline or matches it in mean HV. 

\textbf{Sampling / inference cost.}
RFM-AN is the fastest method on every timed task, roughly four times faster than ParetoFlow and an order of magnitude ($5\times$ to $24\times$) faster than PCD (Table~\ref{tab:wallclock}). On average, RFM-DAMG samples faster than ParetoFlow with a better HV rank, and RFM-Cone trades further cost for the best HV rank among generative methods, so the family spans the accuracy--cost front, and both data-adaptive variants dominate every generative baseline in rank and mean cost.

\textbf{Ablations.}
Appendix~\ref{app:ablations} isolates the remaining components. Displacing along the leading eigenvector raises the true objective while a trailing eigenvector or a random direction of equal norm does not, and the gain saturates at a task-dependent step (\S~\ref{app:tier2_strength} and Figure~\ref{fig:teaser}b). Under soft weighting the retained rank is insensitive across a wide range, unlike the hard cutoff (\S~\ref{app:tier2_rank}--\ref{app:soft_exponent}), and the RFM fit is robust (\S~\ref{app:tier2_agop}). One precomputation serves Tchebycheff scalarizations with the same front quality as the weighted sum (\S~\ref{app:tier3_scala}), and \S~\ref{app:guidance_ablations} studies the guidance strength.

\begin{table}[htb]
\centering
\caption{Sampling wall-clock time in seconds (mean $\pm$ std over 5 seeds) to produce the final 256 candidates, same hardware for all methods (see \S~\ref{app:hardware}). The first row is the one-off per-task RFM precomputation, cached and shared within seeds and variants and excluded from the sampling times. Best per column in bold. }
\label{tab:wallclock}
\setlength{\tabcolsep}{4pt}\resizebox{\textwidth}{!}{%
\begin{tabular}{lcccccc|c}
\toprule
Method & DTLZ2 & ZDT4 & C-10/MOP1 & MO-Hopper & RE21 & RE61 & Mean \\
\midrule
RFM precomputation (one-off, shared by all RFM variants) & 7.6 $\pm$ 0.1 & 7.4 $\pm$ 0.0 & 7.5 $\pm$ 0.1 & 22.7 $\pm$ 0.1 & 7.4 $\pm$ 0.1 & 7.5 $\pm$ 0.0 & 10.0 \\ \midrule
ParetoFlow & 7.4 $\pm$ 0.1 & 8.9 $\pm$ 2.0 & 7.9 $\pm$ 0.1 & 10.8 $\pm$ 0.2 & 8.6 $\pm$ 0.3 & 11.8 $\pm$ 0.2 & 9.2 \\
ParetoFlow+ & 12.4 $\pm$ 0.2 & 12.2 $\pm$ 0.2 & 11.1 $\pm$ 0.2 & 16.2 $\pm$ 0.5 & 13.5 $\pm$ 0.1 & 19.3 $\pm$ 0.1 & 14.1 \\
PCD & 6.8 $\pm$ 1.5 & 7.6 $\pm$ 2.3 & 22.2 $\pm$ 5.3 & 149.4 $\pm$ 30.0 & 22.6 $\pm$ 4.9 & 25.3 $\pm$ 4.2 & 39.0 \\
\midrule
\rowcolor{oursrow}RFM-AN (ours) & \textbf{1.3 $\pm$ 0.0} & \textbf{1.3 $\pm$ 0.1} & \textbf{2.0 $\pm$ 0.0} & \textbf{6.3 $\pm$ 0.2} & \textbf{1.2 $\pm$ 0.4} & \textbf{1.2 $\pm$ 0.0} & \textbf{2.2} \\
\rowcolor{oursrow}RFM-APG (ours) & 3.3 $\pm$ 0.2 & 4.2 $\pm$ 0.4 & 8.2 $\pm$ 0.2 & 31.0 $\pm$ 1.3 & 2.8 $\pm$ 0.1 & 3.7 $\pm$ 0.1 & 8.9 \\
\rowcolor{oursrow}RFM-DAMG (ours) & 3.9 $\pm$ 1.1 & 3.4 $\pm$ 0.0 & 4.7 $\pm$ 0.1 & 27.3 $\pm$ 0.3 & 3.3 $\pm$ 0.9 & 4.6 $\pm$ 1.3 & \underline{7.9} \\
\rowcolor{oursrow}RFM-Cone (ours) & 6.8 $\pm$ 0.2 & 4.6 $\pm$ 0.1 & 6.0 $\pm$ 0.1 & 32.2 $\pm$ 0.9 & 4.2 $\pm$ 0.2 & 20.0 $\pm$ 0.5 & 12.3 \\
\bottomrule\end{tabular}}
\end{table}


\section{Conclusion}
We introduced noise space steering for offline MOO, which learns the geometry of the objectives in the noise space of a rectified flow with a Recursive Feature Machine and displaces the initial noise along a few cached directions, leaving the sampler as trained. One precomputation serves every trade off, each candidate costs one ODE solve, the step provably ascends the scalarized proxy, and two guidance families extend it where a single displacement cannot reach, attaining the best average rank among generative methods, and steering alone samples fastest of all generative approaches. 
Our main limitations are that all guarantees are on the proxies, surrogate based search remains stronger on box supported tasks whose fronts lie far outside the data, and the method needs a deterministic sampler and a continuous representation of the design space (Appendix~\ref{app:limitations}).

\section*{Acknowledgements}
This work is partially supported by the National Science Foundation via grants CCF-2112665, CNS-2622219, and DMS-2502084, as well as by DARPA under grant HR001125CD020. 

\bibliography{iclr2027_conference}

@inproceedings{paretoflow,
  title={ParetoFlow: Guided Flows in Multi-Objective Optimization},
  author={Yuan, Ye and Chen, Can and Pal, Christopher and Liu, Xue},
  booktitle={The Thirteenth International Conference on Learning Representations},
  year={2025}
}

@inproceedings{pgdmoo,
  title={Preference-Guided Diffusion for Multi-Objective Offline Optimization},
  author={Annadani, Yashas and Belakaria, Syrine and Ermon, Stefano and Bauer, Stefan and Engelhardt, Barbara E.},
  booktitle={The Thirty-ninth Annual Conference on Neural Information Processing Systems},
  year={2025}
}

@inproceedings{pcd,
  title={Pareto-Conditioned Diffusion Models for Offline Multi-Objective Optimization},
  author={Shrestha, Jatan and Heiskanen, Santeri and Hepola, Kari and Rissanen, Severi and J{\"a}{\"a}skel{\"a}inen, Pekka and Pajarinen, Joni},
  booktitle={The Fourteenth International Conference on Learning Representations},
  year={2026}
}

@article{radhakrishnan2024rfm,
  title={Mechanism for feature learning in neural networks and backpropagation-free machine learning models},
  author={Radhakrishnan, Adityanarayanan and Beaglehole, Daniel and Pandit, Parthe and Belkin, Mikhail},
  journal={Science},
  volume={383},
  number={6690},
  pages={1461--1467},
  year={2024},
}

@inproceedings{liu2023rectified,
  title={Flow Straight and Fast: Learning to Generate and Transfer Data with Rectified Flow},
  author={Liu, Xingchao and Gong, Chengyue and Liu, Qiang},
  booktitle={The Eleventh International Conference on Learning Representations },
  year={2023}
}

@inproceedings{xue2024offmoo,
  title={Offline Multi-Objective Optimization},
  author={Xue, Ke and Tan, Rongxi and Huang, Xiaobin and Qian, Chao},
  booktitle={Proceedings of the 41st International Conference on Machine Learning},
  year={2024}
}

@article{deb2002nsga2,
  title={A fast and elitist multiobjective genetic algorithm: NSGA-II},
  author={Deb, Kalyanmoy and Pratap, Amrit and Agarwal, Sameer and Meyarivan, T.},
  journal={IEEE Transactions on Evolutionary Computation},
  volume={6},
  number={2},
  pages={182--197},
  year={2002}
}

@inproceedings{trabucco2021coms,
  title={Conservative Objective Models for Effective Offline Model-Based Optimization},
  author={Trabucco, Brandon and Kumar, Aviral and Geng, Xinyang and Levine, Sergey},
  booktitle={Proceedings of the 38th International Conference on Machine Learning},
  year={2021}
}

@inproceedings{yu2021roma,
title={Ro{MA}: Robust Model Adaptation for Offline Model-based Optimization},
author={Sihyun Yu and Sungsoo Ahn and Le Song and Jinwoo Shin},
booktitle={Advances in Neural Information Processing Systems},
year={2021},
}

@inproceedings{yuan2023ict,
 author = {Yuan, Ye and Chen, Can and Liu, Zixuan and Neiswanger, Willie and Liu, Xue},
 booktitle = {Thirty-seventh Conference on Neural Information Processing Systems},
 title = {Importance-aware Co-teaching for Offline Model-based Optimization},
 year = {2023}
}

@inproceedings{qi2022iom,
  author={Han Qi and Yi Su and Aviral Kumar and Sergey Levine},
  title={Data-Driven Offline Decision-Making via Invariant Representation Learning},
  year={2022},
  booktitle={Advances in Neural Information Processing Systems},
}

@inproceedings{chen2023trimentoring,
title={Parallel-mentoring for Offline Model-based Optimization},
author={Can Chen and Christopher Beckham and Zixuan Liu and Xue Liu and Christopher Pal},
booktitle={Thirty-seventh Conference on Neural Information Processing Systems},
year={2023},
}

@inproceedings{kumar2020mins,
 author = {Kumar, Aviral and Levine, Sergey},
 booktitle = {Advances in Neural Information Processing Systems},
 title = {Model Inversion Networks for Model-Based Optimization},
 year = {2020}
}

@Inproceedings{krishnamoorthy2023ddom,
  title = 	 {Diffusion Models for Black-Box Optimization},
  author =       {Krishnamoorthy, Siddarth and Mashkaria, Satvik Mehul and Grover, Aditya},
  booktitle = 	 {Proceedings of the 40th International Conference on Machine Learning},
  year = 	 {2023},
}

@inproceedings{wallace2023doodl,
  title={End-to-End Diffusion Latent Optimization Improves Classifier Guidance},
  author={Bram Wallace and Akash Gokul and Stefano Ermon and Nikhil Naik},
  booktitle={Proceedings of the IEEE/CVF International Conference on Computer Vision (ICCV)},
  year={2023},
}

@inproceedings{
eyring2024reno,
title={Re{NO}: Enhancing One-step Text-to-Image Models through Reward-based Noise Optimization},
author={Luca Eyring and Shyamgopal Karthik and Karsten Roth and Alexey Dosovitskiy and Zeynep Akata},
booktitle={The Thirty-eighth Annual Conference on Neural Information Processing Systems},
year={2024},
}

@article{zhou2024goldennoise,
  title={Golden Noise for Diffusion Models: A Learning Framework},
  author={Zikai Zhou and Shitong Shao and Lichen Bai and Shufei Zhang and Zhiqiang Xu and Bo Han and Zeke Xie},
  journal={Proceedings of the IEEE/CVF International Conference on Computer Vision (ICCV)},
  year={2025},
  pages={17688-17697},
}

@InProceedings{ma2025inference,
    author    = {Ma, Nanye and Tong, Shangyuan and Jia, Haolin and Hu, Hexiang and Su, Yu-Chuan and Zhang, Mingda and Yang, Xuan and Li, Yandong and Jaakkola, Tommi and Jia, Xuhui and Xie, Saining},
    title     = {Scaling Inference Time Compute for Diffusion Models},
    booktitle = {Proceedings of the IEEE/CVF Conference on Computer Vision and Pattern Recognition (CVPR)},
    month     = {June},
    year      = {2025},
    pages     = {2523-2534}
}

@book{constantine2015,
author = {Constantine, Paul G.},
title = {Active Subspaces: Emerging Ideas for Dimension Reduction in Parameter Studies},
year = {2015},
isbn = {1611973856},
publisher = {Society for Industrial and Applied Mathematics},
address = {USA}
}

@inproceedings{wang2026narfm,
title={General and Efficient Steering of Unconditional Diffusion Models},
author={Qingsong Wang and Mikhail Belkin and Yusu Wang},
booktitle={Proceedings of the 43rd International Conference on Machine Learning},
year={2026},
}

@inproceedings{wang2025seeds,
title={Seeds of Structure: Patch {PCA} Reveals Universal Compositional Cues in Diffusion Models},
author={Qingsong Wang and Zhengchao Wan and Mikhail Belkin and Yusu Wang},
booktitle={The Thirty-ninth Annual Conference on Neural Information Processing Systems},
year={2025},
}

@inproceedings{stanczuk2024diffusion,
title={Diffusion Models Encode the Intrinsic Dimension of Data Manifolds},
author={Jan Pawel Stanczuk and Georgios Batzolis and Teo Deveney and Carola-Bibiane Sch{\"o}nlieb},
booktitle={Proceedings of the 41st International Conference on Machine Learning},
year={2024},
}

@inproceedings{hotegni2026spread,
title={{SPREAD}: Sampling-based Pareto front Refinement via Efficient Adaptive Diffusion},
author={Sedjro Salomon Hotegni and Sebastian Peitz},
booktitle={The Fourteenth International Conference on Learning Representations},
year={2026},
}

@inproceedings{offlinefrontier2026,
title={The Offline-Frontier Shift: Diagnosing Distributional Limits in Generative Multi-Objective Optimization},
author={Stephanie Holly and Alexandru-Ciprian Zavoianu and Siegfried Silber and Sepp Hochreiter and Werner Zellinger},
booktitle={Workshop on Scientific Methods for Understanding Deep Learning},
year={2026},
url={https://openreview.net/forum?id=4XN7MqWVKD}
}

@InProceedings{trabucco2022designbench,
  title = 	 {Design-Bench: Benchmarks for Data-Driven Offline Model-Based Optimization},
  author =       {Trabucco, Brandon and Geng, Xinyang and Kumar, Aviral and Levine, Sergey},
  booktitle = 	 {Proceedings of the 39th International Conference on Machine Learning},
  year = 	 {2022},
}

@article{desideri2012mgda,
title = {Multiple-gradient descent algorithm (MGDA) for multiobjective optimization},
journal = {Comptes Rendus Mathematique},
volume = {350},
number = {5},
pages = {313-318},
year = {2012},
issn = {1631-073X},
doi = {https://doi.org/10.1016/j.crma.2012.03.014},
url = {https://www.sciencedirect.com/science/article/pii/S1631073X12000738},
author = {Jean-Antoine Désidéri}
}

@InProceedings{xu2020pgmorl,
  title = 	 {Prediction-Guided Multi-Objective Reinforcement Learning for Continuous Robot Control},
  author =       {Xu, Jie and Tian, Yunsheng and Ma, Pingchuan and Rus, Daniela and Sueda, Shinjiro and Matusik, Wojciech},
  booktitle = 	 {Proceedings of the 37th International Conference on Machine Learning},
  year = 	 {2020},
}

@InProceedings{chen2018gradnorm,
  title = 	 {{G}rad{N}orm: Gradient Normalization for Adaptive Loss Balancing in Deep Multitask Networks},
  author =       {Chen, Zhao and Badrinarayanan, Vijay and Lee, Chen-Yu and Rabinovich, Andrew},
  booktitle = 	 {Proceedings of the 35th International Conference on Machine Learning},
  year = 	 {2018},
}

@article{das1998nbi,
author = {Das, Indraneel and Dennis, J. E.},
title = {Normal-Boundary Intersection: A New Method for Generating the Pareto Surface in Nonlinear Multicriteria Optimization Problems},
journal = {SIAM Journal on Optimization},
volume = {8},
number = {3},
pages = {631-657},
year = {1998},
doi = {10.1137/S1052623496307510},

URL = { 
    
        https://doi.org/10.1137/S1052623496307510
    
    

},
eprint = { 
    
        https://doi.org/10.1137/S1052623496307510
    
    

}
}

@InProceedings{brookes2019cbas,
  title = 	 {Conditioning by adaptive sampling for robust design},
  author =       {Brookes, David and Park, Hahnbeom and Listgarten, Jennifer},
  booktitle = 	 {Proceedings of the 36th International Conference on Machine Learning},
  year = 	 {2019},
}

@article{kim2026review,
title={Offline Model-Based Optimization: Comprehensive Review},
author={Minsu Kim and Jiayao Gu and Ye Yuan and Taeyoung Yun and Zixuan Liu and Yoshua Bengio and Can Chen},
journal={Transactions on Machine Learning Research},
year={2026},
}

@inproceedings{yu2020pcgrad,
author = {Yu, Tianhe and Kumar, Saurabh and Gupta, Abhishek and Levine, Sergey and Hausman, Karol and Finn, Chelsea},
title = {Gradient surgery for multi-task learning},
year = {2020},
booktitle = {Advances in Neural Information Processing Systems},
}

@article{tanabe2020re,
title = {An easy-to-use real-world multi-objective optimization problem suite},
journal = {Applied Soft Computing},
volume = {89},
pages = {106078},
year = {2020},
issn = {1568-4946},
doi = {https://doi.org/10.1016/j.asoc.2020.106078},
url = {https://www.sciencedirect.com/science/article/pii/S1568494620300181},
author = {Ryoji Tanabe and Hisao Ishibuchi}
}

@article{lu2023nas,
  author={Lu, Zhichao and Cheng, Ran and Jin, Yaochu and Tan, Kay Chen and Deb, Kalyanmoy},
  journal={IEEE Transactions on Evolutionary Computation}, 
  title={Neural Architecture Search as Multiobjective Optimization Benchmarks: Problem Formulation and Performance Assessment}, 
  year={2024},
  volume={28},
  number={2},
  pages={323-337},
  doi={10.1109/TEVC.2022.3233364}}

@article{yu2015dk,
  title={A useful variant of the Davis--Kahan theorem for statisticians},
  author={Yi Yu and Tengyao Wang and Richard J. Samworth},
  journal={Biometrika},
  number = {2},
  year={2015},
  volume={102},
  pages={315-323},
  issn = {0006-3444},
  doi = {10.1093/biomet/asv008},
  url = {https://doi.org/10.1093/biomet/asv008},
  eprint = {https://academic.oup.com/biomet/article-pdf/102/2/315/9642505/asv008.pdf},
}

@article{vial1983,
 ISSN = {0364765X, 15265471},
 URL = {http://www.jstor.org/stable/3689591},
 author = {Jean-Philippe Vial},
 journal = {Mathematics of Operations Research},
 number = {2},
 pages = {231--259},
 publisher = {INFORMS},
 title = {Strong and Weak Convexity of Sets and Functions},
 urldate = {2026-09-25},
 volume = {8},
 year = {1983}
}

@inproceedings{
ho2021classifierfree,
title={Classifier-Free Diffusion Guidance},
author={Jonathan Ho and Tim Salimans},
booktitle={NeurIPS 2021 Workshop on Deep Generative Models and Downstream Applications},
year={2021},
url={https://openreview.net/forum?id=qw8AKxfYbI}
}

@article{beaglehole2026steering,
author = {Daniel Beaglehole  and Adityanarayanan Radhakrishnan  and Enric Boix-Adserà  and Mikhail Belkin },
title = {Toward universal steering and monitoring of AI models},
journal = {Science},
volume = {391},
number = {6787},
pages = {787-792},
year = {2026},
}

@inproceedings{
beaglehole2025xrfm,
title={x{RFM}: Accurate, scalable, and interpretable feature learning models for tabular data},
author={Daniel Beaglehole and David Holzm{\"u}ller and Adityanarayanan Radhakrishnan and Mikhail Belkin},
booktitle={The Fourteenth International Conference on Learning Representations},
year={2026},
}

@article{ando1988,
  title   = {Comparison of norms {$|\!|\!|f(A)-f(B)|\!|\!|$} and {$|\!|\!|f(|A-B|)|\!|\!|$}},
  author  = {And{\^o}, Tsuyoshi},
  journal = {Mathematische Zeitschrift},
  year    = {1988},
  volume  = {197},
  pages   = {403--409},
  url     = {https://api.semanticscholar.org/CorpusID:122492348}
}

@inproceedings{trivedi2014egop,
author = {Trivedi, Shubhendu and Wang, Jialei and Kpotufe, Samory and Shakhnarovich, Gregory},
title = {A consistent estimator of the expected gradient outerproduct},
year = {2014},
isbn = {9780974903910},
publisher = {AUAI Press},
address = {Arlington, Virginia, USA},
booktitle = {Proceedings of the Thirtieth Conference on Uncertainty in Artificial Intelligence},
pages = {819–828},
numpages = {10},
location = {Quebec City, Quebec, Canada},
series = {UAI'14}
}

@article{radhakrishnan2025linrfm,
author = {Adityanarayanan Radhakrishnan  and Mikhail Belkin  and Dmitriy Drusvyatskiy },
title = {Linear Recursive Feature Machines provably recover low-rank matrices},
journal = {Proceedings of the National Academy of Sciences},
volume = {122},
number = {13},
pages = {e2411325122},
year = {2025},
doi = {10.1073/pnas.2411325122},
URL = {https://www.pnas.org/doi/abs/10.1073/pnas.2411325122},
eprint = {https://www.pnas.org/doi/pdf/10.1073/pnas.2411325122}}

\clearpage
\appendix

\etocdepthtag.toc{mtappendix}
\etocsettagdepth{mtmain}{none}
\etocsettagdepth{mtappendix}{subsubsection}

\section*{Appendix}
\hrule height 1pt \vspace{1em}
{\itshape\etocsettocstyle{}{}\tableofcontents}
\vspace{1em}\hrule height 1pt \vspace{1em}

\section{Additional Experimental details}
\label{sec:app:add_exps_details}

\subsection{Benchmark Tasks}
\label{app:tasks}
Each Off-MOO-Bench task supplies a fixed dataset $\gD$ and an oracle $\vf$ reserved for evaluation; Table~\ref{tab:refpoints} lists the dimension, number of objectives, and dataset size of every task.

\textbf{Synthetic.} DTLZ1--7 (three objectives) and ZDT1--4, ZDT6 (two objectives) are continuous functions with $D\le30$ and $60{,}000$ offline designs each, and their fronts are known in closed form. \citet{paretoflow, pcd} excluded DTLZ2--6 due to an evaluation error in the benchmark that has since been corrected, so we include them.

\textbf{Neural architecture search.} C-10/MOP1--9 and IN-1K/MOP1--9 are the NAS benchmarks of \citet{lu2023nas} with $2$ to $8$ objectives. The categorical design space (up to $34$ variables) is optimized as continuous logits and decoded by argmax before evaluation, so $D$ is the logit dimension rather than the variable count \citep{trabucco2022designbench, xue2024offmoo}.

\textbf{MORL} MO-Hopper and MO-Swimmer are two-objective continuous control problems in which a design is a full policy parameter vector ($D=10{,}184$ and $9{,}734$). The data are policies collected during PG-MORL training \citep{xu2020pgmorl}, and the oracle is a MuJoCo rollout.

\textbf{Real-world engineering.} RE21--25, RE31--37, RE41--42, and RE61 are engineering design problems \citep{tanabe2020re} with $2$ to $7$ variables and $2$, $3$, $4$, or $6$ objectives. Several have integer variables, and none has a closed-form front.

Following \citet{paretoflow, pcd}, we exclude the combinatorial tasks (MO-TSP, MO-CVRP, MO-KP), which would require discrete generative models.

\subsection{Evaluation Methodology}
\label{sec:app:eval_methodology}
\paragraph{Hypervolume.}
Let $\gS \subset \gX$ be a solution set and $F = \vf(\gS) \subset \R^m$ its image under the oracle. For a reference point $\vr \in \R^m$ the hypervolume indicator is the Lebesgue measure of the region dominated by $F$ and bounded by $\vr$,
\begin{equation}\label{eq:hv}
    \mathrm{HV}_\vr(F) = \mathrm{vol}_m\Bigl(\,\bigcup_{\vy \in F}
    \bigl\{\, \vp \in \R^m \,\mid\, \vr \preceq \vp \preceq \vy \,\bigr\}\Bigr),
\end{equation}
where $\preceq$ is weak dominance under the maximization convention of Section~\ref{sec:prelim}, so $\vr$ lies below every attainable objective vector. Hypervolume is the only indicator that is strictly monotone with respect to Pareto dominance, and it rewards both proximity to $\gF^\star$ and spread across it, which is why we report it alone rather than a separate diversity measure. Objectives are min--max normalized using the benchmark's stored per-objective bounds and sign-reversed to match our maximization convention before \eqref{eq:hv} is evaluated. We compute hypervolume using the exact \texttt{pymoo} implementation for all objective counts.

\paragraph{Percentile protocol.}
Every method returns exactly $\lvert \gS \rvert = 256$ solutions. To report the $P$-th percentile hypervolume we apply non-dominated sorting to $\vf(\gS)$, which partitions $\gS$ into fronts
$F_1 \succ F_2 \succ \cdots$, order solutions by front index with ties broken by crowding distance, discard the last $(100 - P)\%$ of the ordering, and evaluate \eqref{eq:hv} on the remaining
$\lfloor 256 P / 100 \rfloor$ solutions. $P = 100$ therefore scores the full returned set and $P = 50$ scores its better half, which is less sensitive to a small number of strong outliers. We report $P = 100$ in the main text and $P = 50$ in Appendix~\ref{app:hv50}.

\paragraph{Dataset baseline.} $\gD(\text{best})$ is the hypervolume of the non-dominated subset of the
offline dataset, computed with the same $\vr$ after selecting its top $256$ solutions by non-dominated sorting, and reported at the 100th percentile only. It has no seed variance and is the level any offline method must exceed to have generalized beyond its data.

\paragraph{Reference and Ideal points.} Table~\ref{tab:refpoints} lists, for every task, the design variables $d$,the number of objectives $m$, the size of the offline dataset $\lvert \gD \rvert$, the type of design space, and the reference point $\vr \in \R^m$ against which hypervolume is computed. Reference points follow Off-MOO-Bench \citep{xue2024offmoo}, using the task-specific nadirs from the benchmark snapshot bundled with our code. Under our maximization convention, the reference point is
$\vr=-2.2\,\mathcal{N}(\mathbf{n})$, where $\mathbf{n}$ is the benchmark nadir and $\mathcal{N}$ is the same min--max normalization applied to the objectives. Changes to task-specific nadirs across benchmark versions can account for differences in reported HV and $\gD(\text{best})$ from PCD and ParetoFlow papers \citep{pcd,paretoflow}.

\paragraph{Oracle access and decoding.}
The oracle $\vf$ is called once per method, seed, and task, on the final $256$ solutions only, and never during training, hyperparameter selection, or sampling. NAS logits are decoded to a discrete architecture by taking the argmax over categories for each variable. For some of the RE tasks, the benchmark oracle applies task-specific rounding or mapping to allowed discrete values during evaluation. So hypervolume is computed on feasible solutions rather than on continuous outputs.

\paragraph{Statistical reporting.}
Each method is run with $5$ seeds per task, covering proxy and flow initialization, the noise draws, and any sampler stochasticity. The per-task tables (Appendix~\ref{app:hv100}) report mean $\pm$ sample standard deviation of hypervolume over the five seeds; methods within one standard deviation of the best, or within $5\times10^{-4}$ of it, are bolded. For Table~\ref{tab:avg_rank} we rank all methods and $\gD(\text{best})$ on each task separately for each seed, average the ranks over the tasks of a family, and report mean $\pm$ sample standard deviation of that family average over the five seeds, so the $\pm$ there is seed variability of the average rank, not dispersion across tasks. The W/T/L rows compare seed-averaged hypervolume per task, counting a tie when the gap is below $5\times10^{-4}$; no significance test is applied. Wall-clock times (Table~\ref{tab:wallclock}) are mean $\pm$ standard deviation over the same five seeds.

\begin{table}[h]
\centering
\caption{Task metadata and hypervolume reference points for all $47$ tasks. Reference points are in normalized maximization coordinates (negated from the evaluator's minimization convention). $d$ is the number of design variables (categorical variables for NAS, whose logits have dimension $D$), $\lvert\gD\rvert$ the offline dataset size, and \texttt{Cont.}/\texttt{Cat.} continuous/categorical.}
\label{tab:refpoints}
\tiny
\setlength{\tabcolsep}{4pt}
\renewcommand{\arraystretch}{1.05}
\begin{tabular}{@{}l r r r l l@{}}
\toprule
\vspace{1pt}
Task & $d$ & $m$ & $\lvert \gD \rvert$ & Space & Reference point $\vr$ \\
\midrule
\multicolumn{6}{@{}l}{\emph{DTLZ}} \\
DTLZ1 & 7 & 3 & 60{,}000 & Cont. & $(-2.2,\ -2.2,\ -2.2)$ \\
DTLZ2 & 10 & 3 & 60{,}000 & Cont. & $(-2.07826,\ -2.44443,\ -2.45812)$ \\
DTLZ3 & 10 & 3 & 60{,}000 & Cont. & $(-2.22594,\ -2.07323,\ -2.1453)$ \\
DTLZ4 & 10 & 3 & 60{,}000 & Cont. & $(-2.29461,\ -2.91967,\ -2.63431)$ \\
DTLZ5 & 10 & 3 & 60{,}000 & Cont. & $(-2.74136,\ -2.99806,\ -2.6272)$ \\
DTLZ6 & 10 & 3 & 60{,}000 & Cont. & $(-2.5467,\ -2.53751,\ -2.46451)$ \\
DTLZ7 & 10 & 3 & 60{,}000 & Cont. & $(-2.20001,\ -2.20001,\ -2.2)$ \\
\midrule
\multicolumn{6}{@{}l}{\emph{ZDT}} \\
ZDT1 & 30 & 2 & 60{,}000 & Cont. & $(-2.2,\ -2.2)$ \\
ZDT2 & 30 & 2 & 60{,}000 & Cont. & $(-2.2,\ -2.46332)$ \\
ZDT3 & 30 & 2 & 60{,}000 & Cont. & $(-2.2,\ -2.70741)$ \\
ZDT4 & 10 & 2 & 60{,}000 & Cont. & $(-2.20018,\ -2.52935)$ \\
ZDT6 & 10 & 2 & 60{,}000 & Cont. & $(-2.2,\ -2.21336)$ \\
\midrule
\multicolumn{6}{@{}l}{\emph{C-10/MOP}} \\
C-10/MOP1 & 26 & 2 & 12{,}084 & Cat. & $(-0.700653,\ -2.27345)$ \\
C-10/MOP2 & 26 & 3 & 26{,}316 & Cat. & $(-0.270857,\ -2.33973,\ -2.3436)$ \\
C-10/MOP3 & 5 & 3 & 19{,}661 & Cat. & $(-2.2,\ -2.2,\ -2.2)$ \\
C-10/MOP4 & 5 & 4 & 19{,}661 & Cat. & $(-2.2,\ -2.2,\ -2.2,\ -2.2)$ \\
C-10/MOP5 & 6 & 5 & 9{,}375 & Cat. & $(-2.2,\ \ldots,\ -2.2)$ \\
C-10/MOP6 & 6 & 6 & 9{,}375 & Cat. & $(-2.2,\ \ldots,\ -2.2)$ \\
C-10/MOP7 & 6 & 8 & 9{,}375 & Cat. & $(-2.2,\ \ldots,\ -2.2)$ \\
C-10/MOP8 & 32 & 2 & 60{,}000 & Cat. & $(-2.47149,\ -2.42687)$ \\
C-10/MOP9 & 32 & 3 & 60{,}000 & Cat. & $(-2.46312,\ -2.57324,\ -2.5169)$ \\
\midrule
\multicolumn{6}{@{}l}{\emph{IN-1K/MOP}} \\
IN-1K/MOP1 & 25 & 2 & 60{,}000 & Cat. & $(-2.85067,\ -2.46423)$ \\
IN-1K/MOP2 & 25 & 2 & 60{,}000 & Cat. & $(-2.8185,\ -2.77927)$ \\
IN-1K/MOP3 & 25 & 3 & 60{,}000 & Cat. & $(-2.82779,\ -2.57366,\ -2.5582)$ \\
IN-1K/MOP4 & 34 & 2 & 60{,}000 & Cat. & $(-2.13485,\ -2.31778)$ \\
IN-1K/MOP5 & 34 & 2 & 60{,}000 & Cat. & $(-2.14702,\ -2.39576)$ \\
IN-1K/MOP6 & 34 & 3 & 60{,}000 & Cat. & $(-2.21115,\ -2.43552,\ -2.44032)$ \\
IN-1K/MOP7 & 21 & 2 & 60{,}000 & Cat. & $(-2.97769,\ -2.29048)$ \\
IN-1K/MOP8 & 21 & 3 & 60{,}000 & Cat. & $(-2.95398,\ -2.26424,\ -2.27795)$ \\
IN-1K/MOP9 & 21 & 4 & 60{,}000 & Cat. & $(-2.2,\ -2.2,\ -2.2,\ -2.2)$ \\
\midrule
\multicolumn{6}{@{}l}{\emph{MORL}} \\
MO-Hopper & 10{,}184 & 2 & 4{,}500 & Cont. & $(-2.45514,\ -3.41206)$ \\
MO-Swimmer & 9{,}734 & 2 & 8{,}571 & Cont. & $(-2.40532,\ -1.76713)$ \\
\midrule
\multicolumn{6}{@{}l}{\emph{RE}} \\
RE21 & 4 & 2 & 60{,}000 & Cont. & $(-2.2,\ -2.2)$ \\
RE22 & 3 & 2 & 60{,}000 & Cont. & $(-2.2,\ -2.2)$ \\
RE23 & 4 & 2 & 60{,}000 & Cont. & $(-2.2,\ -2.20012)$ \\
RE24 & 2 & 2 & 60{,}000 & Cont. & $(-2.2,\ -2.2)$ \\
RE25 & 3 & 2 & 60{,}000 & Cont. & $(-2.2,\ -2.2)$ \\
RE31 & 3 & 3 & 60{,}000 & Cont. & $(-2.2,\ -2.2,\ -2.2)$ \\
RE32 & 4 & 3 & 60{,}000 & Cont. & $(-2.2,\ -2.2,\ -2.2)$ \\
RE33 & 4 & 3 & 60{,}000 & Cont. & $(-2.2,\ -2.2,\ -2.2)$ \\
RE34 & 5 & 3 & 60{,}000 & Cont. & $(-2.2,\ -2.2,\ -2.2)$ \\
RE35 & 7 & 3 & 60{,}000 & Cont. & $(-2.2,\ -2.2,\ -2.2)$ \\
RE36 & 4 & 3 & 60{,}000 & Cont. & $(-2.2,\ -2.2,\ -2.2)$ \\
RE37 & 4 & 3 & 60{,}000 & Cont. & $(-2.17549,\ -1.69879,\ -2.04536)$ \\
RE41 & 7 & 4 & 60{,}000 & Cont. & $(-2.2,\ -2.2,\ -2.2,\ -2.2)$ \\
RE42 & 6 & 4 & 60{,}000 & Cont. & $(-2.2,\ -2.2,\ -2.2,\ -2.2)$ \\
RE61 & 3 & 6 & 60{,}000 & Cont. & $(-2.2,\ \ldots,\ -2.2)$ \\
\bottomrule
\end{tabular}
\end{table}

\subsection{Hyperparameter and Tuning used for reporting results}
\label{app:hyperparams}
All reported RFM results use the configuration of Table~\ref{tab:hyperparams}. These defaults were fixed before the sensitivity studies of Appendix~\ref{app:tier2} were run and are not tuned per task. Those studies test robustness around the defaults rather than select them. The MORL tasks differ only where their dimension requires it. At $D\approx10^4$ the RFM precomputation is compute-bound, so we fit it with fewer samples and AGOP iterations and retain a basis of $100$ directions, and the flow is trained for more epochs.

\begin{table}[H]\centering
\caption{Hyperparameters for flow and proxy training (left) and noise-space steering and sampling (right), shared by all tasks. MORL values are given in parentheses where they differ.}
\label{tab:hyperparams}
\small\setlength{\tabcolsep}{5pt}\renewcommand{\arraystretch}{1.08}
\resizebox{\textwidth}{!}{%
\begin{tabular}{@{}lll|ll@{}}
\toprule
\multicolumn{3}{c|}{\textbf{Flow and Proxy Training}} & \multicolumn{2}{c}{\textbf{Noise-Space Steering and Sampling}} \\ \midrule
\textbf{Parameter} & \textbf{Flow $\vv_\theta$} & \textbf{Proxy $\hat f_a$} & \textbf{Parameter} & \textbf{Value(s)} \\ \midrule
Architecture      & Residual MLP        & MLP                 & Kernel                               & Laplace (Mahalanobis) \\
Depth             & 4 residual blocks   & 2 hidden layers     & Bandwidth $\ell$                     & 20 \\
Width             & 512                 & 2048                & Ridge $\lambda$                      & $10^{-3}$ \\
Activation        & SiLU                & LeakyReLU           & RFM samples $n$                      & 10{,}000 (5{,}000) \\
Normalization     & LayerNorm           & ---                 & AGOP iterations $T$                  & 7 (4) \\
Optimizer         & AdamW               & Adam                & Retained directions                  & $D$ (100) \\
Learning Rate     & $2\times10^{-4}$    & $10^{-3}$           & Steering Strength $\gamma$                   & 10 \\
LR Schedule       & Cosine Annealing    & $\times0.995$/epoch & Soft exponent $\alpha$               & 0.3 \\
Batch Size        & 128                 & 128                 & Weights $\vw$ $\times$ draws         & $256 \times 2$ (Das--Dennis) \\
Epochs            & 1{,}000 (3{,}000)   & $\le$5{,}000 (early stopping) & Returned set $|\gS|$       & 256 \\
Gradient Clipping & 1.0                 & ---                 & Euler steps                          & 1{,}000 \\
Checkpoint        & EMA 0.999           & best val.\ MSE      & Seeds                                & 0--4 \\
\midrule
\multicolumn{5}{c}{\textbf{RFM Variants} (all use soft steering, $\beta_j=(\widehat\lambda_j^{(\vw)}/\widehat\lambda_1^{(\vw)})^{\alpha}$)} \\ \midrule
RFM-AN   & \multicolumn{2}{l|}{unguided decoding}                                      & RFM-DAMG & $t_{\mathrm{start}}=0.7$, $\kappa=0.5$, $\eta\le11$, kNN $k=12$, $\tau_{\mathrm{pca}}=0.2$ \\
RFM-APG  & \multicolumn{2}{l|}{$t_{\mathrm{start}}=0.8$, $\eta(t)=\eta_{\max}\tfrac{t-0.8}{0.2}$, for $t\ge0.8$, $\eta_{\max}=11$} & RFM-Cone & $t_{\mathrm{start}}=0.7$, $\kappa=0.5$, uncapped norm matching; min-norm fallback \\
\bottomrule
\end{tabular}}
\end{table}

\subsection{Hardware and Compute}
\label{app:hardware}
Each experiment runs on a single NVIDIA RTX A6000 GPU
(48\,GB VRAM), on a server with 64 physical CPU cores and
approximately 1\,TiB of RAM, using PyTorch 2.0.1 and
CUDA 11.8. No method is distributed across devices.
Baselines use their released implementations with the
sampling configurations described above.

We compare wall-clock sampling times on the same hardware
with identical CPU thread limits and an output budget of
256 solutions. For ParetoFlow and PCD, we follow the sampling configurations and hyperparameters reported in their respective papers \citep{paretoflow,pcd}. We report the mean and standard deviation over five seeds, synchronizing GPU computation at the timing boundaries. These measurements exclude model training and oracle evaluation.

RFM additionally requires an initial geometry precomputation, performed once per task and trained-model seed and reused across preference weights. This takes 7.4--22.7 seconds on the six timed tasks and is reported separately from sampling time in Table~\ref{tab:wallclock}.

The full set of experiments, including all seeds, baselines,
and ablations, took approximately 550 GPU-hours, of which
300 were spent on baselines and 250 on our method.

\subsection{Training Details}
\label{sec:app:training_details}
For each task and seed, we train a rectified-flow model and objective-specific proxy predictors on the offline dataset. All RFM variants share these trained models and differ only in their steering and guidance operators. ParetoFlow+ uses the same flow and proxies while retaining ParetoFlow's sampling scheme, allowing us to separate the contribution of our method from that of the backbone and predictors. For PCD, we follow the training and sampling procedures described in its paper and released implementation, with the boundary handling described below.

We ensure that input normalization is consistent between training and inference and that generated designs are mapped back to the original design space using the corresponding inverse transformation. We correct preprocessing inconsistencies in the baseline pipelines and apply task-specific bound handling where needed. In particular, PCD can generate continuous designs outside the prescribed box constraints, so we clip these designs to the task-specific bounds before evaluation \footnote{\scriptsize \citet{pcd} do not explicitly handle this in their code: \url{https://github.com/jatan12/PCD/}}. Discrete designs are decoded using the benchmark's decoding rules.

\subsection{Baseline Algorithm Details}
\label{sec:app:baseline_algo}
All RFM variants and ParetoFlow+ share the same trained rectified-flow
backbone and objective proxies. Both our flow network and PCD's
denoiser use residual MLPs, but PCD includes an additional
input/condition \citep{pcd}. Table~\ref{tab:network_architecture}
summarizes the architectures. Each of our objective proxy is an independent MLP with two hidden
layers of width 2048 and LeakyReLU activations, similar to ParetoFlow \citep{paretoflow}.

\begin{table}[H]
    \centering
    \small
    \setlength{\tabcolsep}{5pt}
    \caption{Architecture comparison with the released PCD
    implementation. Main-path layer counts exclude the separate
    time/noise embedding MLP.}
    \label{tab:network_architecture}
    \begin{tabular}{@{}lll@{}}
        \toprule
        Component & RFM / ParetoFlow+ & PCD \\
        \midrule
        Generative model & Rectified flow & EDM diffusion \\
        Backbone & Residual MLP & Residual MLP \\
        Hidden width & 512 & 512  \\
        Residual blocks & 4 & 4 \\
        Normalization & LayerNorm & LayerNorm \\
        Input projection layers & 1 & 2 \\
        Output projection layers & 1 & 1 \\
        Backbone activation & SiLU & ReLU \\
        Time/noise embedding MLP
            & $129 \rightarrow 256 \rightarrow 512$
            & $17 \rightarrow 128 \rightarrow 128$ \\
        \bottomrule
    \end{tabular}
\end{table}

Our backbone follows the residual MLP design used in PCD, adapted to predict a rectified-flow velocity field. We use four residual blocks of width 512, with differences in the input projection, activation functions, and time embedding summarized in Table~\ref{tab:network_architecture}. We notice that ParetoFlow, PGD-MOO, and PCD \citep{pcd, paretoflow, pgdmoo} use different generative model architectures. We follow PCD's residual MLP design as closely as practical, with adaptations for our rectified-flow backbone.

PGD-MOO is \textbf{excluded} in our experiments as its released code covers only RE, ZDT, and DTLZ \citep{pgdmoo}, and could not be run under our setup, which has also been pointed out in \cite{pcd}.

\subsubsection{Validity of the Benchmarks}
\label{sec:app:validity}
Hypervolume depends on the reference point and objective normalization, so results from different benchmark versions are not always directly comparable. We found that several task-specific nadir values had been revised across Off-MOO-Bench versions \citep{xue2024offmoo}. Our experiments use the nadir configuration retained in our implementation from the May 2025--June 2026 development period. Under this configuration, our $\mathcal{D}(\mathrm{best})$ values are generally close to those reported by PGD-MOO \citep{pgdmoo} (within +/-2\% of other baseline papers).
We evaluate our methods and baseline reruns using the same task-specific normalization and reference points. These settings help explain differences from previously published values. Our NSGA-II-based baseline reruns also obtain positive hypervolume on some tasks reported as zero in the PCD paper \citep{pcd}. We report the results of these reruns under our common evaluation protocol.

Additionally, following \citet{xue2024offmoo, pcd}, we omit GP-based baselines, which are costly at this data scale and underperform DNN surrogates.

\subsection{Limitations}
\label{app:limitations}
Our work has several limitations. Our guarantees hold for the learned proxies, and the recovery bound is relative to the best front the flow can generate, not the true Pareto front. For our proposed approach, the best steering strength is task dependent (Appendix~\ref{app:tier2_strength}) and it lags behind surrogate-based methods on tasks whose fronts lie far outside the data. The method also needs a deterministic sampler and a continuous design representation, so we exclude combinatorial tasks. 
In future work, we plan to choose the steering strength adaptively per task and to combine noise-space steering with surrogate-based search for fronts far from the data. We also plan to extend the approach to discrete generative models.

\section{Ablation Studies}
\label{app:ablations}
\subsection{Effective rank across tasks}
\label{app:rankscaling}
Figure~\ref{fig:rankscaling} plots the effective rank of the noise-space metric against the noise dimension $D$ for all 47 tasks. For each task we form $\widehat\mM_\vw$ by \eqref{eq:decomp} at the uniform weight $\vw=\mathbf{1}/m$, with the RFM settings of Table~\ref{tab:hyperparams}, and report the participation ratio of the soft weights of \eqref{eq:steer},
\begin{equation}\label{eq:preff}
    r_{\mathrm{eff}} = \frac{\bigl(\sum_j \beta_j\bigr)^2}{\sum_j \beta_j^2}, \qquad \beta_j = \bigl(\widehat\lambda_j^{(\vw)} / \widehat\lambda_1^{(\vw)}\bigr)^{\alpha}, \quad \alpha = 0.3.
\end{equation}
We use the participation ratio rather than the cutoff rank $k_{\mathrm{eff}}$ of \eqref{eq:effrank}. $k_{\mathrm{eff}}$ depends on the choice of $\varepsilon_{\mathrm{rank}}$ and jumps by whole directions, so two tasks with similar spectra can receive different ranks, whereas $r_{\mathrm{eff}}$ has no threshold and varies continuously with the spectrum, so it can be compared across tasks. It also equals the number of directions the soft steering step effectively uses. Here $D$ is the dimension of the noise, which is the dimension of the continuous representation the flow is trained on, and not the number of design variables $d$ listed in Table~\ref{tab:refpoints}.
\begin{figure}[H]
    \centering
    \includegraphics[width=0.8\linewidth]{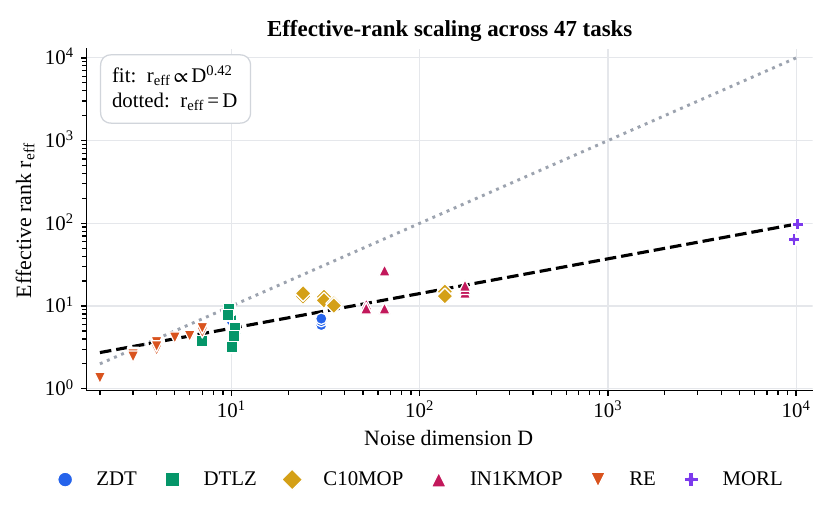}
    \caption{Effective rank \eqref{eq:preff} of $\widehat\mM_\vw$ at $\vw=\mathbf{1}/m$ against noise dimension $D$ for all 47 tasks, log-log axes. The dashed line is a least squares fit, $r_{\mathrm{eff}}\propto D^{0.42}$, and the dotted line is $r_{\mathrm{eff}}=D$.}
    \label{fig:rankscaling}
\end{figure}
\textbf{Across tasks.}
Pooling the 47 tasks, which span nearly four orders of magnitude in $D$, a least squares fit of $\log r_{\mathrm{eff}}$ on $\log D$ gives $r_{\mathrm{eff}}\propto D^{0.42}$. The effective rank grows sublinearly, and the gap to $r_{\mathrm{eff}}=D$ widens with dimension. The RE tasks use $1.4$ to $5.5$ directions at $D\le7$, the NAS tasks $9$ to $27$ at $D\le174$, and MORL $65$ and $100$ at $D\approx10^4$. The two MORL tasks lie at or just below the fitted line, so the highest-dimensional tasks follow the same trend as the rest, although their RFM retains $100$ directions, which caps $r_{\mathrm{eff}}$ at $100$. The fraction of noise space the objective responds to therefore falls from nearly all of it on the smallest tasks to about one percent on MORL.

\textbf{Within a family.}
The rank does not grow with $D$ inside a family. C-10/MOP stays between $10$ and $15$ as $D$ grows from $24$ to $140$, IN-1K/MOP at $D=174$ lies within the range of IN-1K/MOP at $D\approx50$ to $65$, and ZDT lies at $6$ to $7$ at both $D=10$ and $D=30$. This is the pattern predicted in Section~\ref{sec:lowrank}. The rank tracks the intrinsic dimension of the data manifold the flow has learned, which each family fixes independently of $D$, and the global fit summarizes how the families are arranged along $D$.

\subsection{Controlled Parameter studies}
\label{app:tier2}
Noise-space steering has seven hyperparameters. Four belong to the precomputation, the RFM fit that recovers the noise-space geometry. These are the sample budget $n$, the number of AGOP iterations $T$, the bandwidth $\ell$, and the ridge $\lambda$. Three belong to the steering step \eqref{eq:steer}, namely the retained rank $r$, the weighting (hard, or soft with exponent $\alpha$), and the steering strength $\gamma$. The guided variants add the guidance onset $t_{\mathrm{start}}$ and the guidance strength $\kappa$, which we study separately in Appendix~\ref{app:guidance_ablations}. The main results use one configuration for all tasks (Table~\ref{tab:hyperparams}), and the sensitivity studies below instead anchor the steering strength and the rank per task (Table~\ref{tab:tier2_anchor}).

We vary each of the seven in turn and ask which of them the method is sensitive to, and whether these sensitivities match the theory. The precomputation parameters should matter little. They enter only through the surrogate error, and $\widehat\mM_\vw$ converges to the metric of the surrogates as $n$ grows (Proposition~\ref{prop:agop}). The steering parameters are where the theory makes predictions. The choice of directions matters (Proposition~\ref{prop:optframe}), and a displacement helps only up to a task-dependent steering strength (Theorems~\ref{thm:ascent} and~\ref{thm:recovery}). Five studies follow below, on the steering strength, the rank and weighting, the soft exponent, the RFM fit, step length versus shape and scalarization.

\paragraph{Common setup.}
We use one task from each of five families, RE36, ZDT4, C-10/MOP1, IN-1K/MOP8, and MO-Hopper, fixed before any ablation was run, with noise dimensions in Table~\ref{tab:tier2_anchor}. For every swept value we run RFM-AN (steering only) and RFM-Cone (steering followed by Pareto-aware guidance, Section~\ref{sec:guidance}), so each conclusion covers both noise-space-only steering and the full method. Within a task and seed, all runs share checkpoints, proxies, evaluation noise, and the Das--Dennis weights, and each run generates $512$ candidates and keeps $256$ by non-dominated sorting on the proxy predictions. We use five seeds per setting and report HV normalized by $\mathrm{HV}(\gD(\mathrm{best}))$, so $1.0$ is the offline data.

Every parameter not being swept is held at the anchor of Table~\ref{tab:hyperparams}, namely soft weighting with $\alpha=0.3$, $n=10{,}000$, $T=7$, $\ell=20$, and $\lambda=n\lambda_{\mathrm{stat}}=10^{-3}$, where the sweeps vary $\lambda_{\mathrm{stat}}$. The steering strength $\gamma^\ast$ and the rank $r^\ast$ are instead set per task from preliminary sweeps (Table~\ref{tab:tier2_anchor}), since no single $\gamma$ suits all tasks. $\gamma^\ast$ is the largest $\gamma$ within one seed standard deviation of the task maximum, and $r^\ast$ is the rank with the highest mean HV under soft weighting, with ties resolved toward the smaller rank. Each study is also repeated at $r=4$ and at the full basis $r=r_{\max}$, or at $r=100$ and $300$ on MO-Hopper, so that no conclusion depends on the rank.

\begin{table}[H]\centering
\caption{Per-task anchor settings for the sensitivity studies. $D$ is the noise dimension and $r_{\max}$ the number of available directions. $\gamma^\ast$ is the largest steering strength within one seed standard deviation of the task's best HV in preliminary sweeps, and $r^\ast$ the rank with the highest mean HV under soft weighting, with ties resolved toward the smaller rank. On MO-Hopper both are the main-result defaults. The last column is the seed standard deviation of normalized HV at these settings.}

\label{tab:tier2_anchor}
\small\setlength{\tabcolsep}{6pt}
\begin{tabular}{lrrrrc}
\toprule
Task & $D$ & $r_{\max}$ & $r^\ast$ & $\gamma^\ast$ & seed s.d.\ at anchor \\ \midrule
RE36 & 4 & 4 & 3 & 20 & 0.01 \\
ZDT4 & 10 & 10 & 1 & 2.5 & $<0.01$ \\
C-10/MOP1 & 31 & 31 & 1 & 10 & 0.02--0.03 \\
IN-1K/MOP8 & 174 & 174 & 4 & 20 & 0.01 \\
MO-Hopper & 10{,}184 & 300 & 100 & 10 & 0.03--0.04 \\
\bottomrule
\end{tabular}
\end{table}

\paragraph{Robustness criterion.}
Overlapping error bars do not show that a parameter is unimportant, so we use a fixed criterion. A parameter is \emph{robust} over its tested range if, for every swept value, the seed-paired difference from the anchor has a 95\% confidence interval inside $\pm0.02$ normalized HV. Otherwise it is \emph{sensitive}, and we report where its good range lies.

\subsubsection{Steering strength and the reach limit}
\label{app:tier2_strength}
A single noise displacement helps where the front lies beyond the data and hurts where it does not, and the steering strength at which it stops helping is task dependent in the way the theory predicts. Theorem~\ref{thm:ascent} guarantees ascent on the proxy for a small enough $\gamma$, with a gain linear in $\gamma$, and the term $\rho(\vw)$ of Theorem~\ref{thm:recovery}, the value the steered noise has not yet collected, shrinks as the displacement grows. HV should therefore rise with $\gamma$ at first. Beyond some point, two effects take over. The curvature penalty of Theorem~\ref{thm:ascent} grows quadratically, and once the steered noise leaves the region the flow was trained on, the decode becomes less reliable (Remark~\ref{rem:reach}). Tasks should therefore fall into three regimes according to where their front lies relative to that region. A front already inside it gains nothing and degrades once the displacement leaves it, a front reached after a moderate displacement rises and saturates, and a front beyond it keeps rising while the flow extrapolates.

\textbf{Setup.} We sweep $\gamma\in\{0,1,2.5,5,7.5,10,20\}$ with soft weighting at $r=4$, all other settings at the anchor. On MO-Hopper we sweep $\gamma\in\{0,10,20\}$ under two RFM fits, $r=100$ with $n=10{,}000$ and $r=300$ with $n=40{,}000$. At $\gamma=0$, RFM-AN reduces to unguided decoding while RFM-Cone keeps its late guidance. Since the same $\gamma$ moves the noise by different amounts across tasks, Figure~\ref{fig:abl_strength} also plots HV against the per-coordinate displacement $\|\Delta\vz\|_2/\sqrt{D}$, the quantity in Remark~\ref{rem:reach}.

\begin{figure}[htb]
    \centering
    \includegraphics[width=\linewidth]{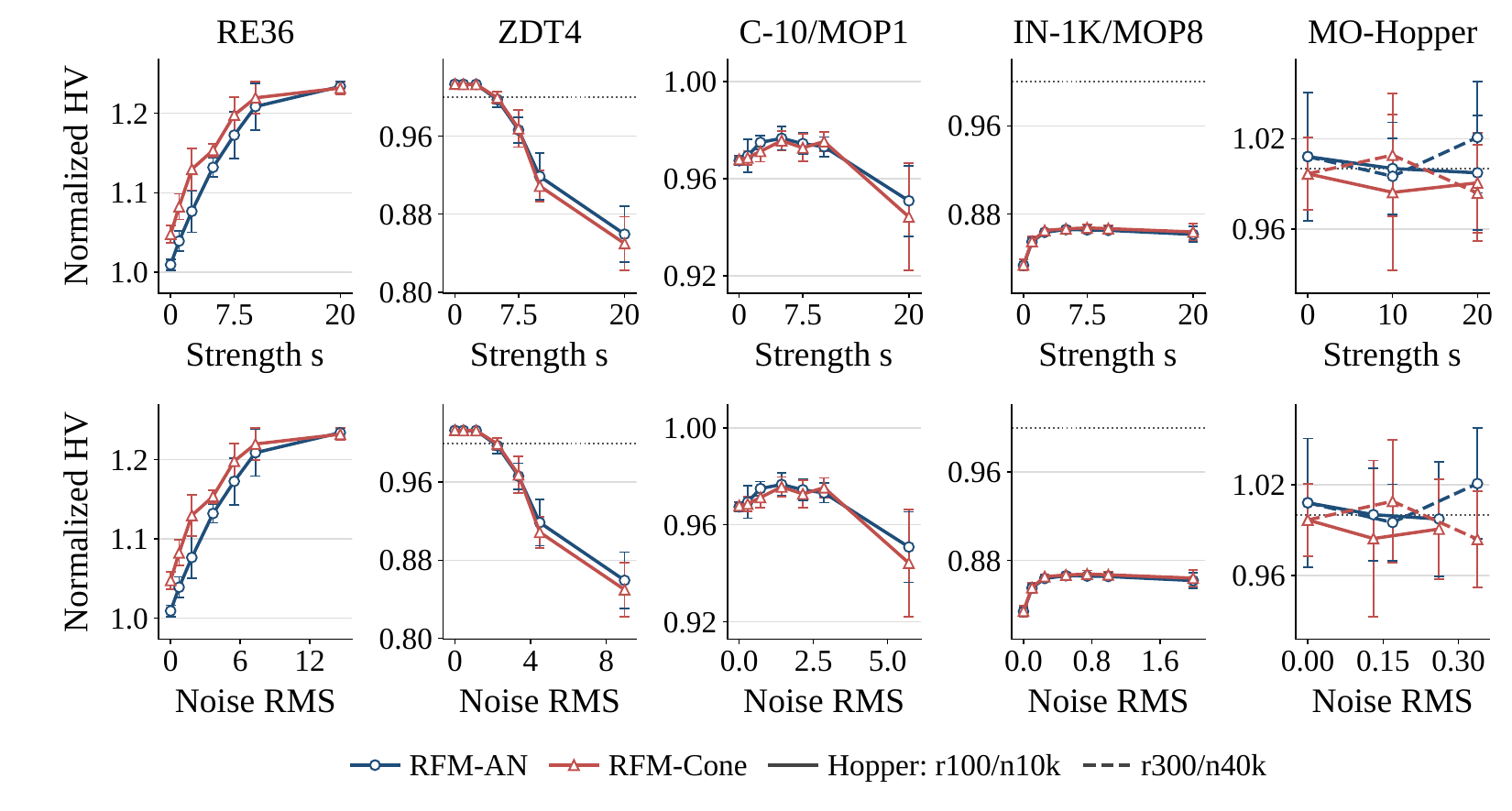}
    \caption{\textbf{Steering strength sweep.} Normalized HV against the steering strength $\gamma$ (top) and against the per-coordinate displacement $\|\Delta\vz\|_2/\sqrt{D}$ (bottom), mean $\pm$ s.d.\ over five seeds. The dotted line is the offline data. On MO-Hopper, solid and dashed lines are the RFM fits with $r=100$, $n=10{,}000$ and $r=300$, $n=40{,}000$.}
    \label{fig:abl_strength}
\end{figure}

\textbf{Results.}
All three regimes appear. ZDT4 is in the first. HV is flat up to $\gamma=2.5$ and falls by about $15\%$ at $\gamma=20$. The NAS tasks are in the second. IN-1K/MOP8 saturates from $\gamma=2.5$, and C-10/MOP1 peaks near $\gamma=5$ and declines only at $\gamma=20$. RE36 is in the third, still rising at $\gamma=20$ to $1.22\times$ the offline data. The displacement axis shows that where gains stop is set by the task. IN-1K/MOP8 saturates near $0.5$ units of per-coordinate displacement and C-10/MOP1 near $1.4$, while RE36 still gains at about $15$. MO-Hopper is insensitive to $\gamma$ within its seed noise under both RFM fits. RFM-Cone matches or exceeds RFM-AN at nearly every steering strength, so guidance never shifts the useful range and $\gamma$ can be chosen for steering alone.

\textbf{Recommendation.}
A steering strength of $\gamma\le2.5$ is safe on every task, never losing HV. Larger strengths help where the front lies beyond the data. The default $\gamma=10$ used in the main results lies within $0.02$ of each task's best except on ZDT4, and we recommend $\gamma=2.5$ when no such headroom is expected. The sensitivity studies use a fixed per-task $\gamma^\ast$ (Table~\ref{tab:tier2_anchor}), within $0.016$ of the best steering strength of this sweep on every task, and $\gamma^\ast=10$ on MO-Hopper, where HV does not depend on $\gamma$.

\subsubsection{Retained rank and hard versus soft weighting}
\label{app:tier2_rank}
The retained rank $r$ sets how many directions the step moves along, and the weighting sets how far it moves along each. Indicator weights move the full steering strength along every retained direction. Adding a direction the objective barely responds to therefore lengthens the step without raising the objective, so the step leaves the region where the flow and proxies are reliable and HV falls. Soft weights scale each direction by its eigenvalue. A weak direction receives a small weight, so adding it barely changes the step, and HV should not depend on how many directions are kept. Remark~\ref{rem:weights} makes this precise. The ascent bound for a weighted step is $\gamma\langle\boldsymbol\beta,|\bar\vd_\vw|\rangle-\tfrac12L_\vw\gamma^2\|\boldsymbol\beta\|_2^2$. With indicator weights $\|\boldsymbol\beta\|_2^2=r$, so each added direction adds a full unit to the curvature penalty while trailing directions add little to the gain. With soft weights $\beta_j=(\lambda_j/\lambda_1)^{\alpha}$ the penalty is set by the spectrum, so directions with small $\lambda_j$ change neither term. We therefore expect HV to fall with $r$ past $r^\ast$ under indicator weights and to stay flat up to the full basis under soft weights.

\textbf{Setup.}
We sweep $r$ up to each task's $r_{\max}$, and up to $300$ on MO-Hopper with the $r=300$ fit, at $\gamma=\gamma^\ast$ and all other settings at the anchor. Three weightings are compared on the same ordered basis. These are soft weights with $\alpha=0.3$, indicator weights (the hard mode of \eqref{eq:steer}), and geometric weights $1,\tfrac12,\tfrac14,\dots$ within the top $r$, a control that decays without using the spectrum. At $r=1$ all three coincide.

\begin{figure}[h]
    \centering
    \includegraphics[width=0.9\textwidth]{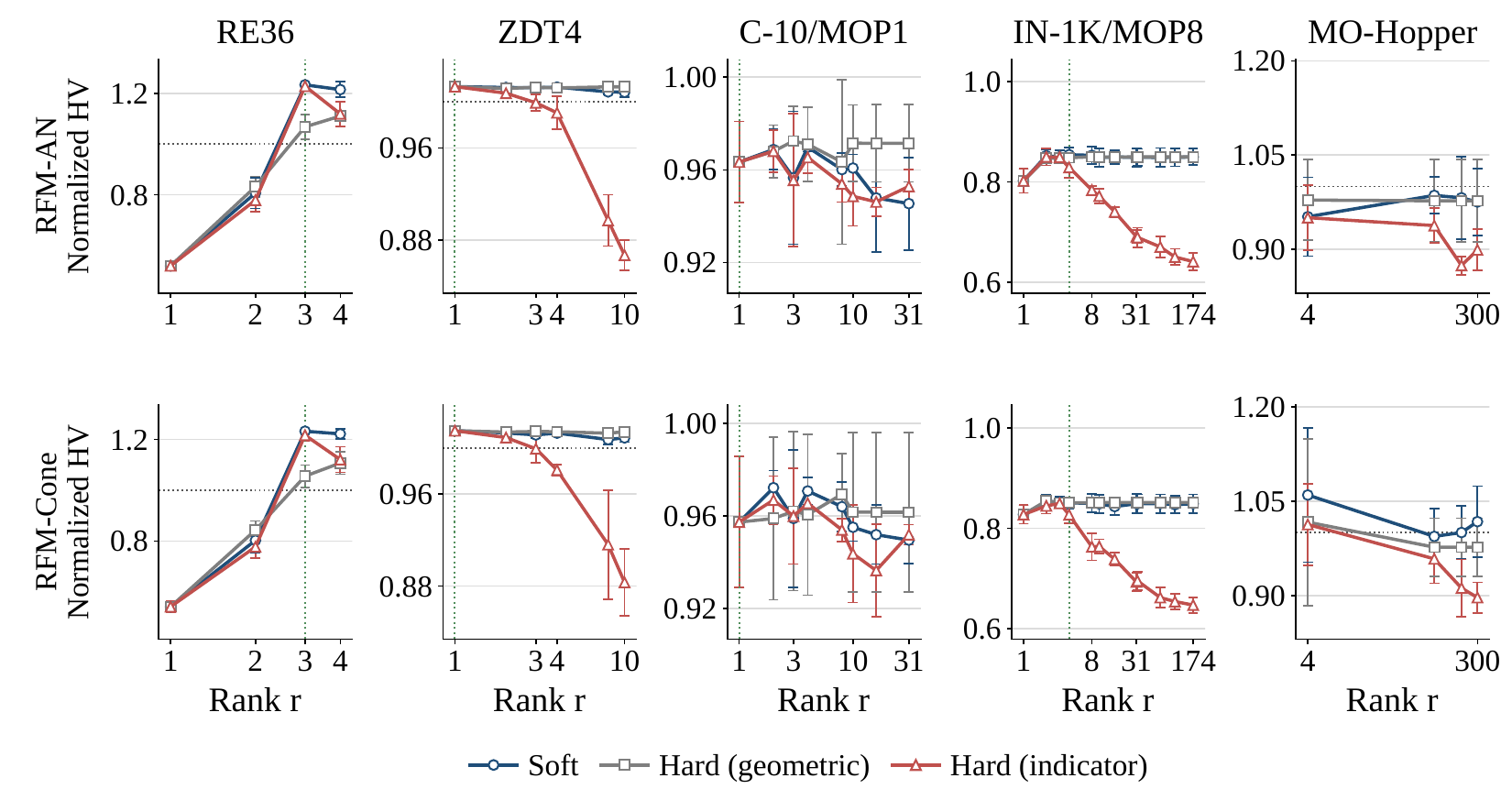}
    \caption{\textbf{Retained rank sweep for RFM-AN (top) and RFM-Cone (bottom)}, mean $\pm$ s.d.\ over five seeds. The vertical line marks $r^\ast$, the anchor rank of Table~\ref{tab:tier2_anchor}, and the horizontal dotted line the offline data. Indicator weights lose HV as $r$ grows on every task with room to add directions, while soft and geometric weights stay flat.}
    \label{fig:abl_rank}
\end{figure}

\textbf{Results.}
One direction is not enough. On RE36 every weighting scores about half the offline data at $r=1$, and soft weighting reaches $1.23\times$ it at $r=3$. Past $r^\ast$ the indicator weights lose HV on every task, by about $23\%$ on IN-1K/MOP8 and $14\%$ on ZDT4. On MO-Hopper they lose $5\%$ to $11\%$ between $r=4$ and $r=300$, a drop that stands out from its seed noise for RFM-Cone. Soft weights stay within $0.02$ of their value at $r^\ast$ up to the full basis on every task. The geometric control is also flat, so any decaying weighting removes the rank sensitivity. The spectrum decides how much is gained beyond that. On RE36 soft weighting beats the geometric control by about $0.16$, while on the NAS tasks the two lie within $0.03$. RFM-Cone reproduces every pattern.

\textbf{Recommendation.}
Soft weighting makes the retained rank a free choice, which supports the full-basis default of the main results. Indicator weights should be used only with a small, well-chosen rank.

\subsubsection{Step length versus shape}
\label{app:tier2_softhard}
Indicator weights, the hard mode of \eqref{eq:steer}, set $\beta_j=1$ for the top $r$ directions and $\beta_j=0$ for the rest, so the step moves equally along every retained direction. They can lose HV at high rank for two reasons. Their step is longer than the soft step at the same $\gamma$, and it is spread evenly over all retained directions instead of concentrated on the strong ones. A longer step pays a larger curvature penalty in Theorem~\ref{thm:ascent}, and a step spread over weak directions collects less gain. Rescaling the hard step to the length of the soft step removes the first effect, so any gap that remains is due to the weighting itself.

\textbf{Setup.}
We rerun the rank sweep of Appendix~\ref{app:tier2_rank} with both hard modes rescaled at every rank to the soft $\|\Delta\vz\|_2$, all other settings unchanged. We report the seed-paired difference in normalized HV between soft weighting and each hard mode, with and without this rescaling, as percentages.

\begin{figure}[htb]
    \centering
    \includegraphics[width=\linewidth]{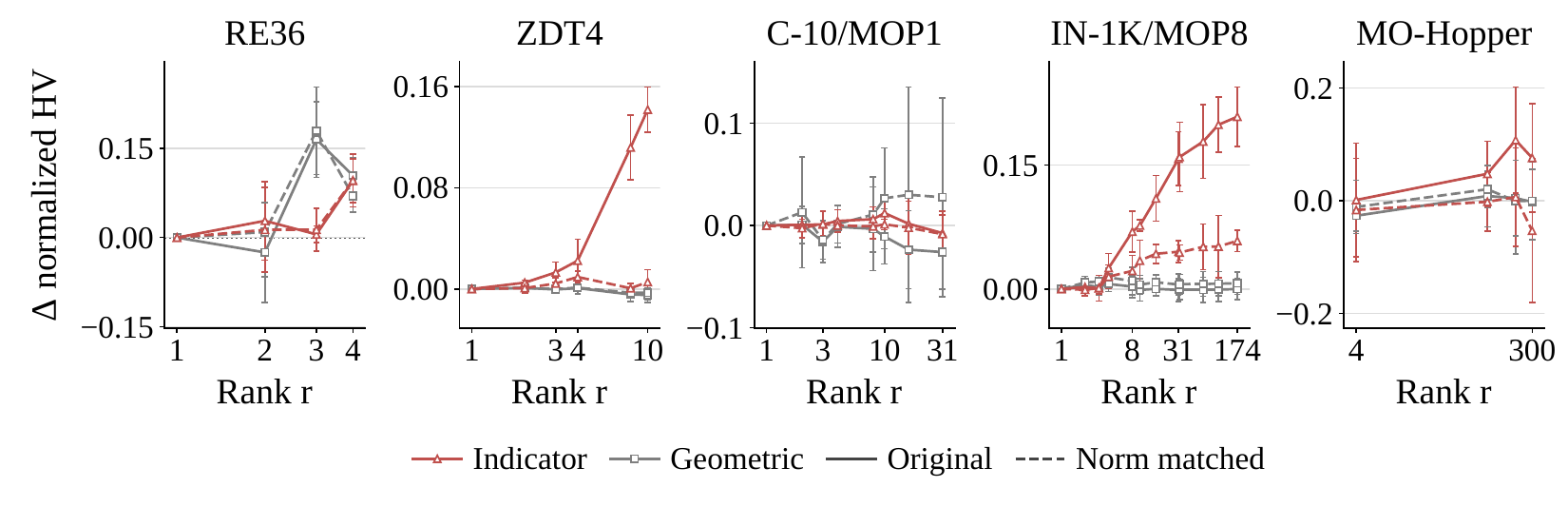}
    \caption{\textbf{Step length versus shape.} Paired difference in normalized HV, soft weighting minus each hard mode, for RFM-AN, mean $\pm$ s.d.\ over five seeds. Positive values favor soft weighting, so a higher curve means that hard mode loses more. Solid lines use the original step, dashed lines rescale the hard step to the length of the soft step.}
    \label{fig:abl_softhard}
\end{figure}

\textbf{Results.}
Figure~\ref{fig:abl_softhard} plots soft minus hard, so a curve above zero means soft weighting wins. Both reasons appear, on different tasks. On ZDT4 the indicator gap of $14\%$ closes to zero once the step is rescaled, so there the indicator mode only steps too far, and the same holds on MO-Hopper. On IN-1K/MOP8 the gap falls from $21\%$ but stays at $2\%$ to $6\%$ for every $r\ge8$, and on RE36 the gap of about $10\%$ at $r=4$ does not change under rescaling, so on both tasks spreading the step evenly costs HV even at equal length. The spectrum also matters beyond decay. On RE36 soft weighting leads the geometric control by about $17\%$ at $r=3$ with and without rescaling. On C-10/MOP1 the geometric control leads by up to $2.6\%$ with the original step, but soft weighting leads by about $3\%$ once the steps are matched, so the geometric edge comes from step length rather than from a better weighting.
RFM-Cone shows the same pattern.

\textbf{Recommendation.}
Soft weighting wins for both reasons. It keeps the step short when weak directions are added, and at equal length it still places the weight where the gradient is, which supports its use as the default throughout.
\subsubsection{Soft exponent}
\label{app:soft_exponent}
Following the comparison of hard and soft weighting, we now study the one hyperparameter of soft steering, the soft exponent $\alpha$ in $\beta_j=(\lambda_j/\lambda_1)^{\alpha}$. The exponent sets how many directions the step uses. As $\alpha\to0$ every weight tends to one and soft weighting becomes the indicator mode, and as $\alpha$ grows the weight concentrates on the leading direction, which is steering at $r=1$. Both limits lost HV in Appendix~\ref{app:tier2_rank}, the first on tasks with many weak directions and the second on tasks where several directions carry the gradient, so HV should peak in between. We count the directions a weighting uses by the participation ratio $r_{\mathrm{eff}}$ of \eqref{eq:preff}.

\textbf{Setup.}
We sweep $\alpha\in\{0.1,0.3,1,1.5,3\}$ at $r=r^\ast$ and at $r=r_{\max}$, with $\gamma=\gamma^\ast$ and all other settings at the anchor. On MO-Hopper we use the fit with $r=300$. We report gaps in normalized HV as percentages.

\begin{figure}[h]
    \centering
    \includegraphics[width=0.9\linewidth]{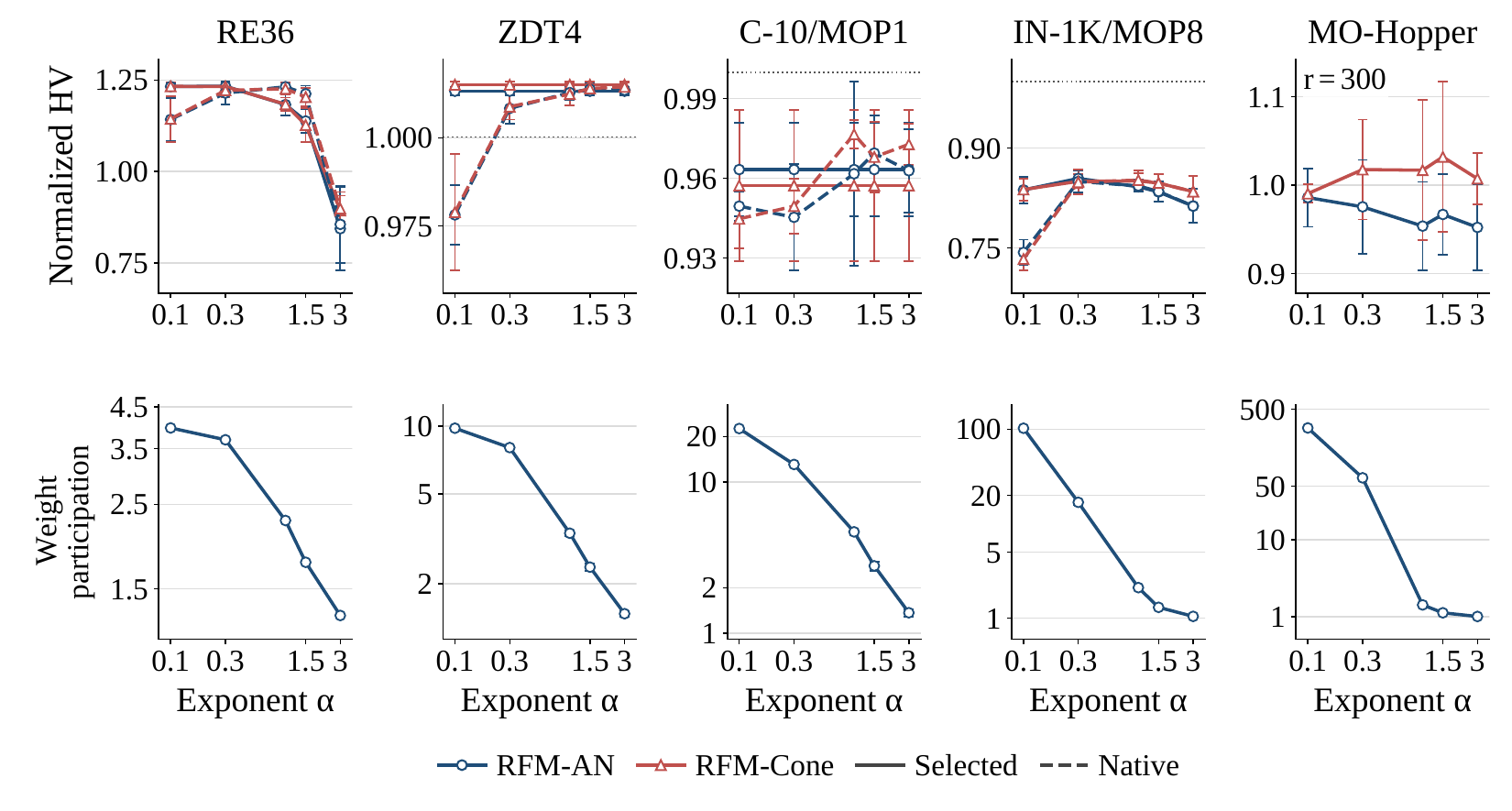}
    \caption{\textbf{Soft exponent.} Top, normalized HV against $\alpha$ at $r=r^\ast$ (solid) and $r=r_{\max}$ (dashed), mean $\pm$ s.d.\ over five seeds. Bottom, the number of directions the weighting uses, $r_{\mathrm{eff}}$ of \eqref{eq:preff}, at $r=r_{\max}$. It falls from nearly the full basis at $\alpha=0.1$ to a single direction at $\alpha=3$.}
    \label{fig:abl_alpha}
\end{figure}

\textbf{Results.}
Both limits fail where predicted. At $\alpha=0.1$ the step spreads over nearly the full basis, and HV at $r=r_{\max}$ drops by $12\%$ on IN-1K/MOP8, $4\%$ on ZDT4, and $2\%$ to $3\%$ on C-10/MOP1, the loss of the indicator mode in Appendix~\ref{app:tier2_rank}. At $\alpha=3$ the step collapses onto one direction, and RE36 loses $34\%$ to $39\%$, the same collapse $r=1$ produced. In between, $\alpha=0.3$ is the best exponent or within $0.1\%$ of it on every task at $r^\ast$, and within $3\%$ of the best at $r_{\max}$. At $r^\ast$, ZDT4 and C-10/MOP1 are flat because $r^\ast=1$ leaves nothing to reweight, and MO-Hopper is flat within its seed noise. RFM-Cone follows RFM-AN throughout. At $\alpha=0.3$ the step uses only a handful of directions, about $17$ of $174$ on IN-1K/MOP8 and $64$ of $300$ on MO-Hopper, consistent with the effective ranks of Appendix~\ref{app:rankscaling}.

\textbf{Recommendation.}
Any $\alpha\in[0.3,1]$ is a good choice, with HV within $3\%$ of the best on every task except RE36 at $r^\ast$, where $\alpha=1$ loses $5\%$. We use $\alpha=0.3$, the elbow at the flat end of this range. It spreads the step over the most directions that still keep HV near the best, which protects tasks such as RE36 that need several directions. Below $0.3$ the full-basis curves fall toward the loss at $\alpha=0.1$ as the step spreads over nearly every direction, and toward the sharp end the losses grow to $39\%$. The elbow therefore gives the best worst case, never more than $3\%$ below the best on any task.

\subsubsection{Estimating the noise-space metric}
\label{app:tier2_agop}
Steering uses the noise-space metric $\mM_\vw$, the AGOP of $H_\vw$ (Section~\ref{sec:geometry}), which we cannot compute directly and replace by its estimate $\widehat\mM_\vw$. The RFM builds this estimate from $n$ decoded noise samples. For each objective it fits a kernel ridge surrogate whose Laplace kernel uses a Mahalanobis metric $\mM_a$, and it alternates the fit with the AGOP update \eqref{eq:agop} for $T$ iterations. The converged $\mM_a$ are the diagonal blocks $\mC_{aa}$ of $\widehat\mM_\vw$ in \eqref{eq:decomp}.

The sample budget $n$, the iterations $T$, the bandwidth $\ell$, and the ridge $\lambda$ therefore set how well $\widehat\mM_\vw$ is estimated, not how the step is taken. They enter only through the surrogate error, and $\widehat\mM_\vw$ converges to the metric of the surrogates as $n$ grows (Proposition~\ref{prop:agop}), so HV should not depend on them over a reasonable range.

\textbf{Setup.}
We sweep $n\in\{5{,}000,10{,}000,20{,}000\}$ and $T\in\{4,7,11\}$ over their full grid, and $\ell\in\{10,20,40\}$ and $\lambda_{\mathrm{stat}}\in\{10^{-8},10^{-7},10^{-6}\}$ one at a time at $n=5{,}000$ and $T=7$, where the ridge is $\lambda=n\lambda_{\mathrm{stat}}$ and the anchor $\lambda_{\mathrm{stat}}=10^{-7}$ gives $\lambda=10^{-3}$ at $n=10{,}000$. On MO-Hopper, $n$ runs from $10{,}000$ to $40{,}000$, the bandwidth and ridge sweeps are at $n=10{,}000$, and every sweep is repeated under both RFM fits, $r=100$ and $r=300$.

\begin{figure}[H]
    \centering
    \includegraphics[width=0.8\linewidth]{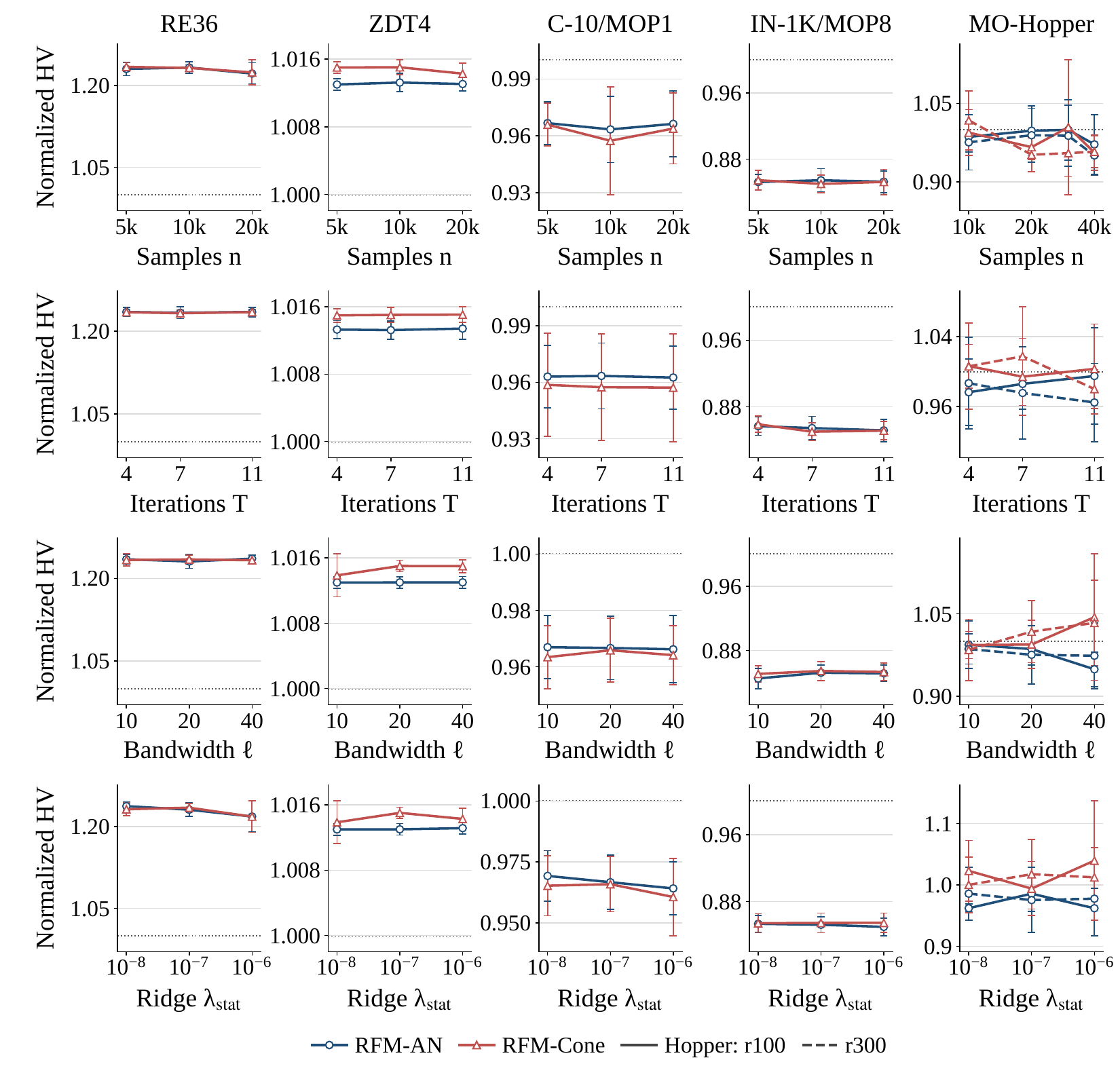}
    \caption{\textbf{Estimating the noise-space metric.} Normalized HV against the sample budget $n$, the AGOP iterations $T$, the bandwidth $\ell$, and the ridge $\lambda_{\mathrm{stat}}$ (rows), mean $\pm$ s.d.\ over five seeds. The dotted line is the offline data. On MO-Hopper, solid and dashed lines are the fits with $r=100$ and $r=300$.}
    \label{fig:abl_agop}
\end{figure}

\textbf{Results.}
Figure~\ref{fig:abl_agop} shows that HV does not depend on any of the four parameters over the tested range. Across a $4\times$ range of $n$, a $3\times$ range of $T$, a $4\times$ range of $\ell$, and a $100\times$ range of $\lambda_{\mathrm{stat}}$, HV changes by at most $1.6\%$ on RE36, ZDT4, C-10/MOP1, and IN-1K/MOP8, and the largest shift is on RE36 at $\lambda_{\mathrm{stat}}=10^{-6}$. MO-Hopper is flat within its seed noise under both fits. RFM-Cone follows RFM-AN throughout.

Note, the flat curves mean the directions are easy to estimate, not that they do not matter, since how the step is spread over them changes HV by up to $23\%$ (Appendix~\ref{app:tier2_rank}). Because the objective depends on only a few directions (Appendix~\ref{app:rankscaling}), the RFM recovers them reliably under any of these settings. The smallest setting tested, $n=5{,}000$ with $T=4$, stays within $1\%$ of the anchor on every task and fits in $0.4$\,s against $0.95$\,s at the anchor, which supports the cheaper precomputation used on the MORL tasks.

\textbf{Recommendation.}
Noise-space metric estimation is robust. The defaults $n=10{,}000$, $T=7$, $\ell=20$, and $\lambda=10^{-3}$ lie inside this robust range, and a smaller budget can be used where the precomputation is expensive.

\subsubsection{Robustness to the choice of scalarization}
\label{app:tier3_scala}
The main results use the weighted sum \eqref{eq:scalarization}, and the middle expression of \eqref{eq:decomp} extends the cache to any differentiable scalarization. This section tests that extension. We ask three questions. (i) Does steering ascend the scalarized objective under scalarizations other than the weighted sum, including non-smooth ones? (ii) Does the one-off precomputation serve them without refitting the RFM or re-running the flow, and at what cost in quality? And (iii) is the steering magnitude at which the gains saturate a property of the sampler, as Remark~\ref{rem:reach} suggests, or of the scalarization?
 
\textit{Scalarizations and setup.}
All maximize, with $\vw\in\Delta^{m-1}$ and the ideal point $\vz^\star$ set per task to the componentwise maximum of the offline objective values. Besides the weighted sum (WS), we test Tchebycheff (Tch), augmented Tchebycheff (aug-Tch, $\zeta=0.05$), and a smoothed Tchebycheff (softmin-Tch), where $\nu$ is fixed per task to $10\%$ of the offline objective range and not tuned, and $h^{\mathrm{soft}}_\vw$ is smooth with $h^{\mathrm{soft}}_\vw \to h^{\mathrm{Tch}}_\vw$ as $\nu\to0$,
\begin{equation}\label{eq:tier3_scalarizations}
\begin{aligned}
h^{\mathrm{Tch}}_\vw(\vx) &= \min_a\, w_a\bigl(\hat f_a(\vx) - z^\star_a\bigr), \qquad
h^{\mathrm{aug}}_\vw(\vx) = h^{\mathrm{Tch}}_\vw(\vx) + \zeta\,\vw^\top\hat\vf(\vx), \\
h^{\mathrm{soft}}_\vw(\vx) &= -\nu \log \textstyle\sum_a \exp\!\bigl(-w_a(\hat f_a(\vx) - z^\star_a)/\nu\bigr).
\end{aligned}
\end{equation}
Each scalarization assembles $\widehat\mM_\vw$ from the cached per-objective gradients by the middle expression of \eqref{eq:decomp}. The precomputation runs once per task and seed and is never repeated for a scalarization, and no scalarization issues a new flow evaluation at any weight. All three studies use ZDT2, seeds 0--4, the main-experiment RFM configuration in Table~\ref{tab:hyperparams} for everything not varied, and RFM-AN only, so that the mechanism is isolated from the guidance operators of Section~\ref{sec:guidance}, whose gradient combination is itself scalarization-specific. Equivalence is judged by the criterion of Appendix~\ref{app:tier2}, paired differences against WS at matched weight and seed with a 95\% confidence interval inside $\pm0.02$ normalized HV.

\paragraph{Study 1. Steering ascends under every scalarization.}
\label{app:tier3_ascent}
 Theorem~\ref{thm:ascent} guarantees ascent for any smooth scalarized objective and never uses linearity in $\vw$. Ascent should therefore hold under all four scalarizations, with the possible exception of Tch and aug-Tch, which are not smooth. The Tchebycheff value follows whichever objective is currently the worst, so its gradient jumps wherever two objectives tie for that role, and at such points the curvature bound of Assumption~\ref{ass:reg} that the proof of Theorem~\ref{thm:ascent} relies on does not exist. Whether these kinks matter in practice is exactly what the comparison against the smooth softmin-Tch decides.
 
\textbf{Setup.}
The matched-norm protocol of Figure~\ref{fig:teaser}b, run once per scalarization at the same $\vw$ and the same base noise. The leading arms displace along the top or the softly weighted directions of that scalarization's $\widehat\mM_\vw$, and three controls of equal norm displace along a trailing eigenvector, a random direction, and the sign-flipped leading direction. We read the true scalarized objective of the scalarization that steered, over $\|\Delta\vz\|/\sqrt{D}\in[0,1]$. All four scalarizations take their directions from the same cached precomputation.
 
\begin{figure}[H]
    \centering
    \includegraphics[width=0.8\linewidth]{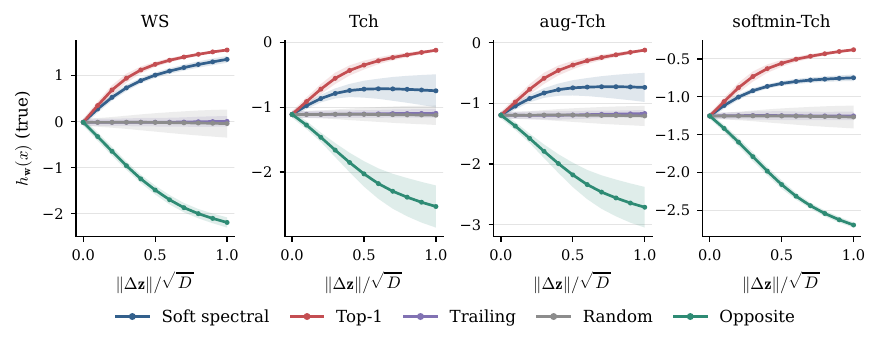}
    \caption{\textbf{Steering ascends under every scalarization.} True scalarized objective against step size on ZDT2 (mean $\pm$ s.d.\ over five seeds, same $\vw$ and base noise). In every panel the leading arms climb, the controls stay flat, and the reversed arm descends, and all four reach the same HV. Only the soft arm under the non-smooth Tch and aug-Tch becomes unreliable at large steps.}
    \label{fig:tier3_ascent}
\end{figure}
 
\textbf{Results.}
All runs use the balanced weight $\vw=(\tfrac12,\tfrac12)$, and every scalarization steers with directions taken from the same cached precomputation, with no refit. Steering improves the objective under all four (Figure~\ref{fig:tier3_ascent}). The top-1 arm climbs steadily, while the trailing and random directions produce no systematic gain and the reversed direction makes the objective worse. The four panels have different vertical scales, because each plots its own scalarized value, so we compare the scalarizations through hypervolume, which is the same metric for all four. Irrespective of the scalarization, steering raises the hypervolume by $3.6\%$ to $4.6\%$ and all four attain the same level, within one percentage point of one another. The precomputation therefore serves every scalarization equally well, and the ascent of Theorem~\ref{thm:ascent} is not specific to the weighted sum.

The one difference concerns smoothness. Theorem~\ref{thm:ascent} guarantees ascent only for a smooth objective (Assumption~\ref{ass:reg}). Tch and aug-Tch violate this assumption, because they take a minimum over the objectives and so have a kink wherever the minimizing objective switches. softmin-Tch replaces the minimum with a smooth approximation and satisfies it. The soft arm shows the consequence. Under Tch and aug-Tch it stops improving once the step size $\|\Delta\vz\|/\sqrt{D}$ reaches about $0.5$, and its results vary much more across runs, with a standard deviation of about $0.24$ at the largest step. Under softmin-Tch it keeps climbing with a standard deviation of $0.05$. Smoothing the scalarization is therefore enough to restore reliable ascent, as the theorem predicts. The top-1 arm stays consistent under all four scalarizations, including the non-smooth ones.
 
\paragraph{Study 2: One precomputation serves every scalarization.}
\label{app:tier3_cache}
Study~1 showed that steering works under every scalarization when its directions come from the cached precomputation. This study asks what the cache costs. The cache is fit once, and for any new weight it assembles $\widehat\mM_\vw$ without refitting and without running the flow. For the weighted sum the assembly is the bilinear combination of \eqref{eq:decomp}. For the Tchebycheff family, whose gradient is not linear in $\vw$, the same cached per-sample gradients are recombined directly. The most accurate alternative is to refit the RFM from scratch at every weight. We compare the two in quality and in cost.

\textbf{Setup.}
ZDT2, five runs, seven weights spread across the simplex. At each weight and for each scalarization we build $\widehat\mM_\vw$ from the cache and from a fresh refit, steer the same starting noise with both, and compare the hypervolume of the results, normalized by that of the best offline designs. For the weighted sum we also refit with the features held fixed to those of the cache, which checks that \eqref{eq:decomp} is implemented exactly.

\begin{figure}[H]
    \centering
    \includegraphics[width=\linewidth]{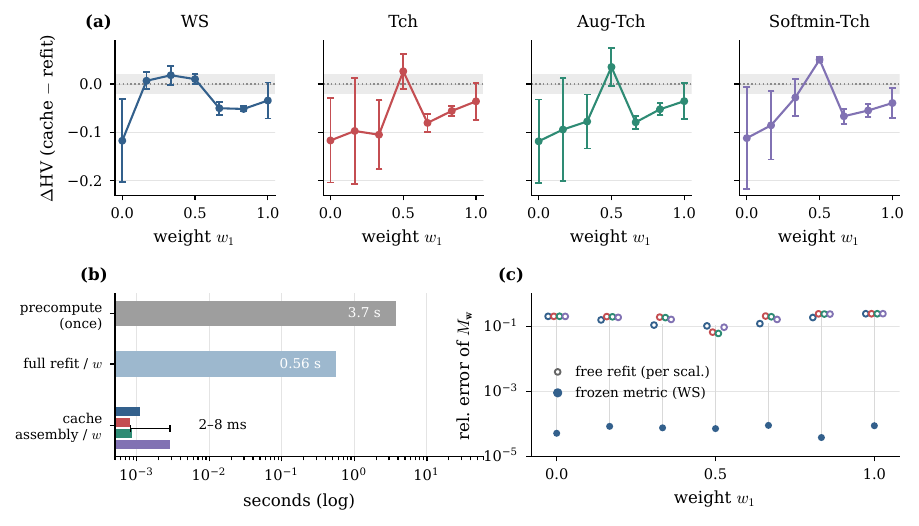}
    \caption{\textbf{One precomputation serves every scalarization at a fraction of the cost of refitting.} \textbf{(a)} HV of cache-based steering minus that of refit-based steering on ZDT2 (mean and 95\% CI over five runs, grey band $\pm0.02$). The cache matches or beats the refit at the balanced weight and trails it by a few percent elsewhere, similarly for all four scalarizations. \textbf{(b)} Cost per weight on a log scale. The precomputation runs once, a refit takes $0.56$\,s per weight, and cache assembly takes about a millisecond. Neither path runs the flow to build $\widehat\mM_\vw$. \textbf{(c)} Difference between the cache and refit matrices. With features held fixed, the refit reproduces the weighted-sum assembly to about $10^{-4}$, so \eqref{eq:decomp} is exact. A free refit differs by $6\%$ to $24\%$ for every scalarization, because it learns new features.}
    \label{fig:tier3_cache}
\end{figure}

\textbf{Results.}
Nonlinear scalarizations run from the cache just as the weighted sum does. They need no refit and no flow evaluations, and assembly costs less than $0.2\%$ of a refit for every scalarization, about a millisecond against $0.56$\,s per weight (Figure~\ref{fig:tier3_cache}b). The weighted sum assembles two to three times faster than the Tchebycheff family. The asymptotic costs of Section~\ref{sec:recovery}, $O(m^2D^2)$ for the weighted sum and $O(nD^2)$ for the Tchebycheff family, suggest a much larger gap, but at this dimension ($D=30$ on ZDT2) fixed overheads dominate both assemblies, so the measured gap is smaller.
The weighted-sum assembly is also exact, matching a fixed-feature refit to within $0.02\%$ at every weight (Figure~\ref{fig:tier3_cache}c).
The cache is not free in quality. A full refit learns features tailored to one weight, while the cache shares one set of features across all objectives, and this makes the refit somewhat better on average. 

Measured as a percentage of the best offline HV and averaged over weights, the refit leads by $3.1\%$ for the weighted sum and by $4.8\%$ to $6.6\%$ for the Tchebycheff family (Figure~\ref{fig:tier3_cache}a). How large the gap is depends mainly on the weight. At the balanced weight the cache matches the refit for the weighted sum and beats it by $3\%$ to $5\%$ for the Tchebycheff family. Toward the ends of the simplex the refit leads, by $5\%$ to $8\%$ at $w_1=\tfrac23$ and $\tfrac56$ and by about $12\%$ at the corner $w_1=0$. This pattern is what shared features predict, since they fit mixtures of the objectives better than any single objective.

\textit{Takeaway.}
One precomputation serves linear and nonlinear scalarizations alike, at a per-weight cost more than $500$ times below refitting. The price is a few percent of hypervolume away from the balanced weight, which a per-weight refit recovers when one specific trade-off matters.

\paragraph{Study 3. Full-sweep equivalence across scalarizations.}
\label{app:tier3_sweep}
Studies 1 and 2 look at one weight at a time. The main results are produced differently, by sweeping many weights and keeping the non-dominated candidates, so we repeat the comparison at that level. If the whole pipeline is run once per scalarization, do the four scalarizations end up with the same front quality?

\textbf{Setup.}
Full Das--Dennis sweeps per scalarization at two steering budgets, $\|\Delta\vz\|/\sqrt{D}\in\{0.3, 1\}$, on ZDT1--3 and DTLZ2, over five seeds. The Tchebycheff variants use the absolute-value convention. We report HV normalized by $\gD(\text{best})$ and IGD against the analytic front, and compare each Tchebycheff variant with the weighted sum at matched budget and seed, counting a difference within $\pm2\%$ HV as equivalent.
\begin{figure}[H]
    \centering
    \includegraphics[width=0.8\linewidth]{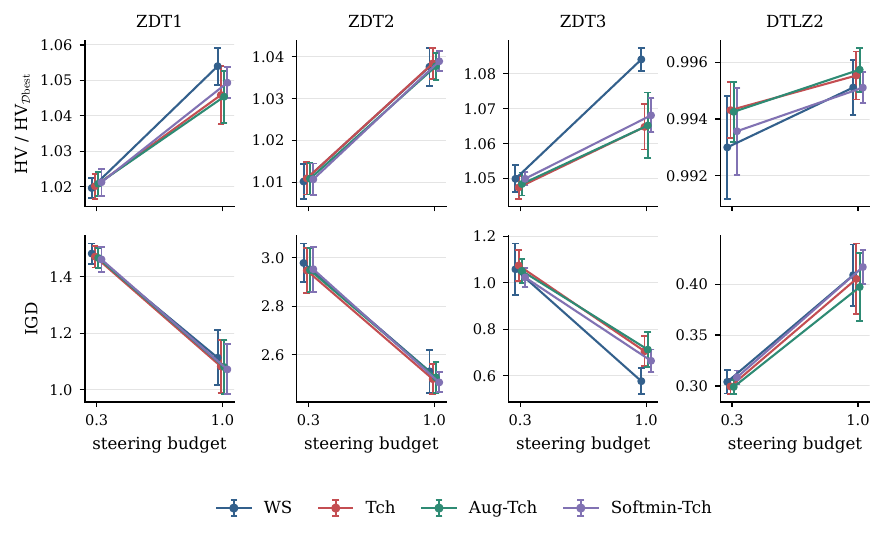}
    \caption{\textbf{Every scalarization reaches the same front quality.} Hypervolume relative to the best offline designs (top) and IGD to the analytic front (bottom) at two steering budgets, mean and 95\% CI over five seeds. The four scalarizations agree within the $2\%$ tolerance on every task. More steering helps on the ZDT tasks, whose optima lie on the design boundary, and hurts IGD on DTLZ2, whose optima lie in its interior.}
    \label{fig:tier3_sweep}
\end{figure}
\textbf{Results.}
Irrespective of the scalarization, the full pipeline reaches the same front quality (Figure~\ref{fig:tier3_sweep}). All $24$ comparisons (four tasks, two budgets, three Tchebycheff variants) fall within the $2\%$ tolerance of the weighted sum in hypervolume, and in $21$ of them the entire confidence interval does. The remaining three are ZDT3 at the larger budget, where the weighted sum leads by $1.6\%$ to $1.9\%$, inside the tolerance on average but with intervals that extend slightly beyond it. On the ZDT tasks every scalarization ends $1\%$ to $8\%$ above the hypervolume of the best offline designs. On DTLZ2 steering improves on the unsteered samples but stays just below the offline best, at $99.3\%$ to $99.6\%$ of it.

A larger steering budget changes the results the same way for every scalarization, but not the same way on every task. It pushes more candidates onto the boundary of the design space, from about $40\%$ to over $90\%$ on the ZDT tasks and from $8\%$ to $31\%$ on DTLZ2. On ZDT the Pareto-optimal designs lie on that boundary, so a larger budget helps, raising hypervolume by $2\%$ to $3\%$ and lowering IGD by $15\%$ to $37\%$. On DTLZ2 they lie in the interior, so a larger budget raises hypervolume by only $0.2\%$ and worsens IGD by about $34\%$, because the generated points crowd toward the edges and cover less of the front. Since this happens identically under all four scalarizations, the complete approach of Section~\ref{sec:experiments} behaves the same whichever scalarization drives it, which is the sweep-level version of Studies~1 and~2.

\subsubsection{Steering beyond the offline Support on ZDT1}
\label{app:zdt1_front}

Figure~\ref{fig:zdt1_front} compares the solutions generated
by RFM-AN with the best offline points and the true Pareto
front on ZDT1. We use the weighted-sum sweep from Study~3
at two steering budgets,
$\|\Delta\vz\|_2/\sqrt{D}\in\{0.3,1\}$.
The offline points are the non-dominated subset used to
compute $\gD(\text{best})$. Both objectives are plotted
in their original minimization convention.

With the smaller step, the generated frontier already
extends below the best offline points. Increasing the
step produces a clearer improvement, reaching lower
$f_2$ values at comparable $f_1$ across much of the
displayed range and moving the frontier closer to the
true Pareto front. This gives a concrete example of
steering beyond the trade-offs observed in the offline
data: changing the initial noise alone is enough to
reach better objective values, without adding guidance
along the sampling trajectory.

\begin{figure}[htb]
    \centering
    \includegraphics[width=0.6\linewidth]{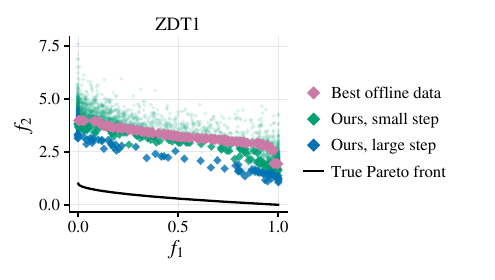}
    \caption{
    \textbf{Steering beyond the offline front on ZDT1.}
    Best offline points, RFM-AN samples under weighted-sum
    scalarization at steering budgets
    $\|\Delta\vz\|_2/\sqrt{D}=0.3$ and $1$, and the true
    Pareto front. Both objectives are minimized.
    The larger step moves the generated frontier further
    beyond the offline points and toward the true front.
    }
    \label{fig:zdt1_front}
\end{figure}

\subsection{Guidance Ablations}
\label{app:guidance_ablations}
Guidance is applied on top of noise-space steering, late in the trajectory (Section~\ref{sec:guidance}). Its main parameter is the guidance strength $\kappa$, which scales the guided velocity relative to the flow, $\eta(t)=\kappa\|\vv_\theta\|/\|\gG_\vw\|$, or the ramp maximum $\eta_{\max}$ for RFM-APG. Theorem~\ref{thm:cone} predicts how HV should respond. For a small enough step the RFM-Cone direction improves every objective to first order, so HV should rise with the strength while the first-order term dominates. At large strengths the second-order term takes over and HV can fall, and where noise-space steering already reaches the front there is little left for guidance to add.

\textbf{Setup.}
We sweep the guidance strength over $\{0,\tfrac14,\tfrac12,1,2,4\}$ times its default for RFM-Cone, RFM-DAMG, and RFM-APG, on ZDT2, ZDT4, DTLZ2, DTLZ6, C-10/MOP8, IN-1K/MOP8, RE41, and MO-Swimmer, with five seeds. Steering is fixed at $\gamma=10$ with soft weighting and $\alpha=0.3$, and guidance starts at $t_{\mathrm{start}}=0.7$. RFM-APG starts at $t_{\mathrm{start}}=0.8$, and in this sweep its steering stage uses indicator weights. Its HV without guidance lies within $3\%$ of that of the other operators on seven of eight tasks, and every result below is measured against each operator's own run without guidance. At strength $0$ each operator reduces to steering alone, and we report the seed-paired change in normalized HV from that point, in percentage points (pp).

\begin{figure}[h]
    \centering
    \includegraphics[width=\linewidth]{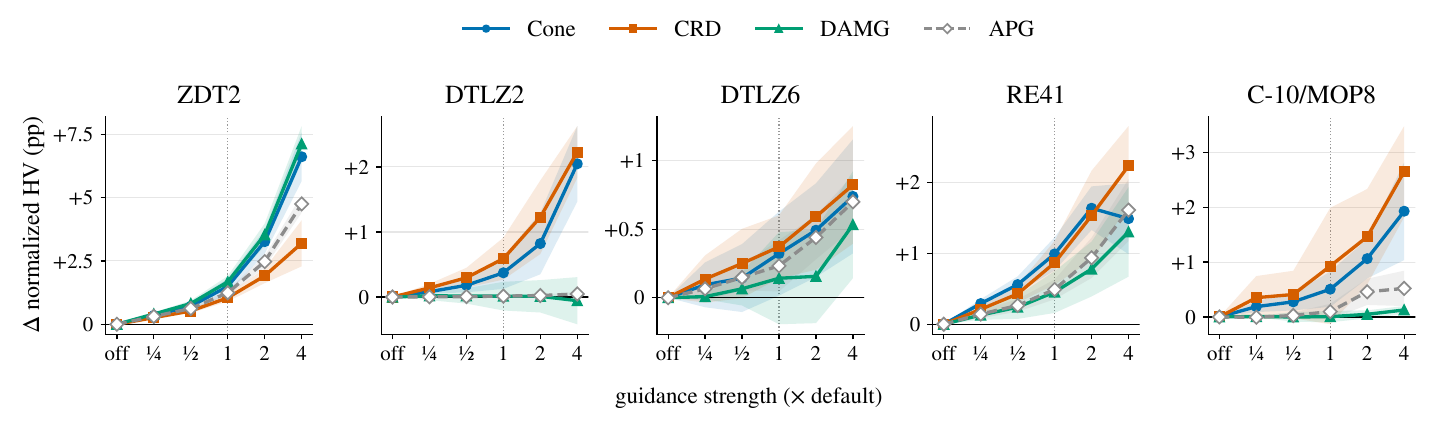}
    \caption{\textbf{Guidance adds to steering and grows with its strength.} Change in normalized HV over steering alone against the guidance strength, relative to its default (dotted line), on the five tasks where steering leaves headroom, mean and 95\% CI over five seeds. The remaining three tasks are in Figure~\ref{fig:guid_all_hv100}.}
    \label{fig:guid_strength}
\end{figure}

\textbf{Results.}
Guidance improves on steering alone wherever steering leaves headroom (Figure~\ref{fig:guid_strength}). At the default strength RFM-Cone raises HV significantly on five of eight tasks, by $0.3$ to $1.5$\,pp, and the gain keeps growing with the strength, reaching $6.6$\,pp on ZDT2 and about $2$\,pp on DTLZ2 and C-10/MOP8 at $4\times$. On the other three tasks RFM-Cone stays within $0.6$\,pp of steering alone up to $2\times$ (Figure~\ref{fig:guid_all_hv100}). Across all eight tasks RFM-Cone never loses HV significantly up to $2\times$ its default, and a loss appears only at $4\times$, on MO-Swimmer ($3.3$\,pp) and IN-1K/MOP8, which is the second-order turnover Theorem~\ref{thm:cone} predicts. RFM-DAMG is the more conservative operator. It gains $1.7$\,pp on ZDT2 and $0.5$\,pp on RE41 at the default and stays within $0.3$\,pp of steering alone on C-10/MOP8, DTLZ2, and IN-1K/MOP8, so it acts where the geometry supports a step and otherwise leaves the steered sample unchanged. RFM-APG gains on the same tasks as RFM-Cone and loses only on ZDT4, by $3.0$\,pp at twice its default.

\begin{figure}[h]
    \centering
    \includegraphics[width=0.8\linewidth]{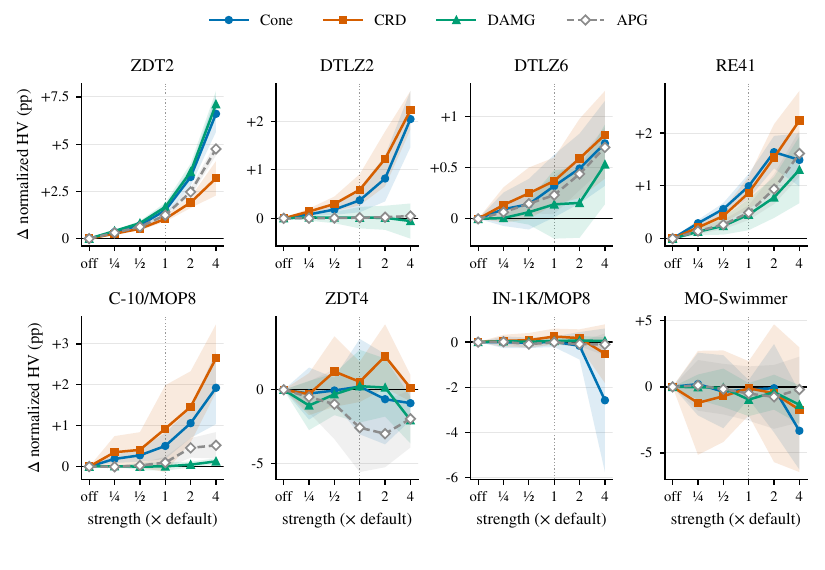}
    \caption{\textbf{Guidance strength on all eight tasks.} As Figure~\ref{fig:guid_strength}, including ZDT4, IN-1K/MOP8, and MO-Swimmer.}
    \label{fig:guid_all_hv100}
\end{figure}

The gains are larger on the better half of the returned set (Figure~\ref{fig:guid_all_hv50}). At the default strength RFM-Cone raises the $50$th-percentile HV significantly on five tasks, by $1.1$ to $2.1$\,pp, and on DTLZ2 by $21$\,pp at $4\times$. Guidance is also consistent across runs (Figure~\ref{fig:guid_winrate}). At the default strength it improves on steering alone in $78\%$ of task and seed pairs for RFM-Cone, $72\%$ for RFM-APG, and $65\%$ for RFM-DAMG, and every operator stays above $60\%$ at every strength.

\begin{figure}[h]
    \centering
    \includegraphics[width=0.8\linewidth]{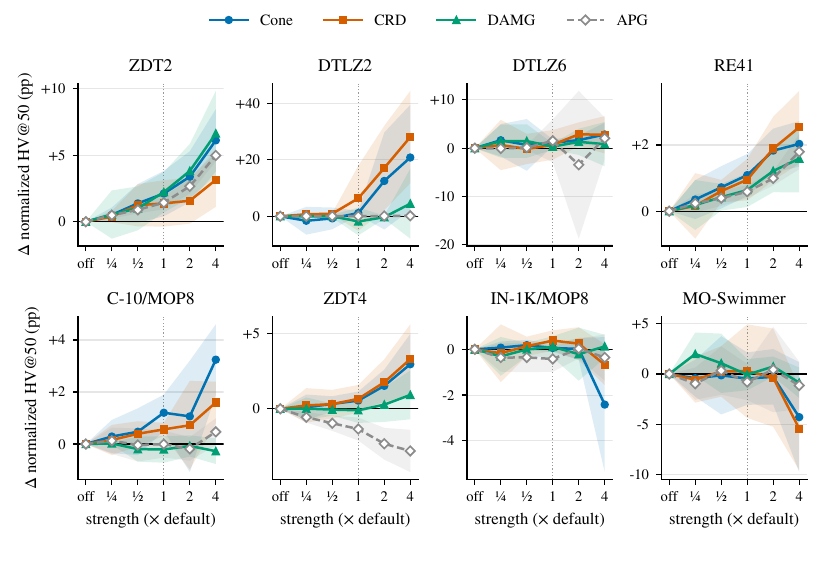}
    \caption{\textbf{Guidance strength at the $50$th percentile.} As Figure~\ref{fig:guid_all_hv100}, for the HV of the better half of the returned set.}
    \label{fig:guid_all_hv50}
\end{figure}

\begin{figure}[h]
    \centering
    \includegraphics[width=0.5\linewidth]{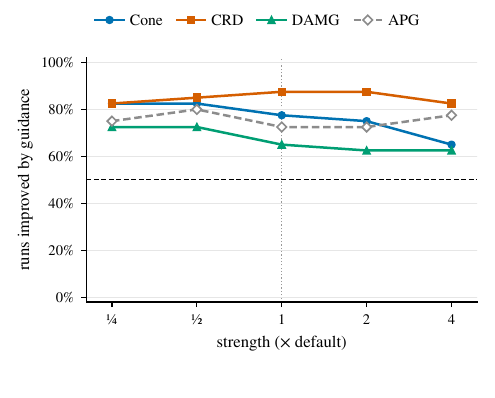}
    \caption{\textbf{Guidance improves most runs.} Fraction of task and seed pairs in which guidance raises HV over steering alone, against the guidance strength. The dashed line is chance.}
    \label{fig:guid_winrate}
\end{figure}

\textbf{Recommendation.}
Guidance is safe over an eightfold range of its strength, and the default is conservative. Doubling it raises HV further on five of eight tasks without a significant loss on any, so the default leaves room for tasks where steering stops short of the front. The main results keep the default so that a single setting serves every task.

\section{Theoretical Results}
\subsection{Notation and setup}
\label{sec:notation}
We maximize $m$ objectives $\vf=(f_1,\dots,f_m):\gX\to\R^m$ over a design space
$\gX\subseteq\R^D$, with access only to an offline dataset $\gD=\{(\vx_i,\vy_i)\}_{i=1}^{N}$, $\vy_i=\vf(\vx_i)$. No queries to $\vf$ are permitted. We train proxies $\hat\vf=(\hat f_1,\dots,\hat f_m)$ on $\gD$. For a scalarization weight $\vw$ in the simplex $\simplex^{m-1}=\{\vw\in\R^m:w_a\ge0,\ \sum_a w_a=1\}$, write the scalarized proxy $h_\vw(\vx)=\vw^\top\hat\vf(\vx)$.

A rectified flow with velocity field $\vv_\theta$ induces a \emph{deterministic}
sampler $\Phi_\theta:\R^D\to\R^D$, $\vz\mapsto\vx$, integrating the unmodified
ODE from a seed $\vz\sim\gN(\vzero,\mI_D)$. We abbreviate $\gN:=\gN(\vzero,\mI_D)$.
Because $\Phi_\theta$ is deterministic, every objective is equally a function of
the seed. We therefore work throughout with the \emph{noise-space} maps,
\begin{equation}
  g_a(\vz):=\hat f_a\bigl(\Phi_\theta(\vz)\bigr),
  \qquad
  H_\vw(\vz):=h_\vw\bigl(\Phi_\theta(\vz)\bigr)=\sum_{a=1}^m w_a\,g_a(\vz),
  \tag{\ref{eq:noise-obj}}\label{app:eq:noise-obj}
\end{equation}
so that $\nabla H_\vw=\sum_a w_a\,\nabla g_a$.

\paragraph{Noise-space geometry.}
Define the per-objective gradient-covariance blocks and the direction-specific
metric
\begin{equation}
 \mC^\star_{ab}:=\E_{\vz\sim\gN}\!\bigl[\nabla g_a(\vz)\,\nabla g_b(\vz)^\top\bigr]\in\R^{D\times D},
  \qquad
  \mM_\vw:=\E_{\vz\sim\gN}\!\bigl[\nabla H_\vw\,\nabla H_\vw^\top\bigr]
        =\sum_{a,b=1}^m w_a w_b\,\mC^\star_{ab}.
  \label{eq:Mw}
\end{equation}
The cached blocks $\mC_{ab}$ of \eqref{eq:decomp} estimate $\mC^\star_{ab}$ through the RFM surrogates (Section~\ref{sec:metric-props}). As in Section~\ref{sec:geometry}, $\mM_\vw$ has eigenpairs $(\lambda_j^{(\vw)},\vu_\vw^{(j)})$ with $\lambda_1^{(\vw)}\ge\cdots\ge\lambda_D^{(\vw)}\ge0$, its top-$r$ eigenvectors form the \emph{active frame} $\mU_\vw\in\R^{D\times r}$, and $\widehat{\mM}_\vw$ is its estimate from the RFM cache.

Note, results in this appendix are stated for the population quantities $\mM_\vw,\lambda_j^{(\vw)},\vu_\vw^{(j)}$. Algorithm~\ref{alg:steer} uses their estimates (hatted), and Section~\ref{sec:metric-props} bounds the difference.

The rank $r$ is not a free input but is read off the spectrum: we keep the
fewest directions carrying all but an $\varepsilon_{\mathrm{rank}}$-fraction of the objective's
sensitivity.

\begin{definition}[Effective rank]\label{def:effrank}
For a tolerance $ \varepsilon_{\mathrm{rank}}\in(0,1)$, the effective rank of $\mM_\vw$ is
\begin{equation}
  k_{\mathrm{eff}}(\vw)=\min\Bigl\{r\in\{1,\dots,D\}:\ \textstyle\sum_{j>r}\lambda_j^{(\vw)}< \varepsilon_{\mathrm{rank}}\sum_{j=1}^D\lambda_j^{(\vw)}\Bigr\}.
  \tag{\ref{eq:effrank}}\label{app:eq:effrank}
\end{equation}
For the hard cutoff we take $r=k_{\mathrm{eff}}(\vw)$; when $k_{\mathrm{eff}}(\vw)\ll D$ the
objective depends on only a few noise directions. The discarded tail
$\sum_{j>r}\lambda_j^{(\vw)}$ is exactly the truncation error bounded in
Proposition~\ref{prop:tail}, so the rank rule and the error bound share one
quantity. (A single fixed $r$ across all $\vw$, or the spectral-gap choice
$r=\arg\min_j\lambda_{j+1}^{(\vw)}/\lambda_j^{(\vw)}$, are drop-in alternatives.)
\end{definition}

\paragraph{Steering.}
Given the eigenbasis, a steering step is
\begin{equation}
  \vz_\vw=\vz+\gamma\sum_{j}\beta_j\,\sigma_j\,\vu_\vw^{(j)},
  \qquad \hat\vx_\vw=\Phi_\theta(\vz_\vw),
  \tag{\ref{eq:steer}}\label{app:eq:steer}
\end{equation}
integrated through the \emph{unmodified} ODE, so $\hat\vx_\vw$ is an ordinary sample of the trained model. Here $\gamma>0$ is the steering strength, $\boldsymbol\beta\ge\vzero$ are the direction weights, and $\vsig\in\{\pm1\}^D$ is a sign pattern fixed across $\vz$ (chosen by the Pearson rule of Algorithm~\ref{alg:steer}). The displacement has norm $R:=\gamma\|\boldsymbol\beta\|_2$. We write
$\vd_\vw(\vz):=\mU_\vw^\top\nabla H_\vw(\vz)$ for the active-subspace projection of the gradient.

\subsection{The steering algorithm}
\label{sec:algorithm}
Algorithm~\ref{alg:steer} runs in two phases. The one-time \emph{precomputation} learns the noise-space geometry of the objectives: it draws seeds from the prior, decodes each through the flow, scores the resulting design with the proxies, and fits an RFM surrogate $\phi_a$ to every objective map $g_a$. The surrogate supplies cheap analytic gradients, so we obtain the gradient-covariance blocks $\mC_{ab}$ without ever differentiating through $\Phi_\theta$. These blocks are the only quantities carried forward.

The \emph{steering} phase produces one candidate per weight $\vw$. We assemble $\widehat\mM_\vw$ from the cached blocks and compute its eigenvectors $\widehat\vu_\vw^{(j)}$, which are the noise directions the scalarized objective is most sensitive to. Since the metric does not determine which way along each direction is uphill, we orient each eigenvector by the sign of its Pearson correlation with the scalarized proxy values on the cache. The noise is then displaced by a weighted steering step along these directions and decoded through the unmodified flow. The precomputation does not depend on $\vw$, so a full sweep over weights reuses it, and each new weight costs one metric assembly, one eigendecomposition, and one ODE solve.

The base method is a single sign-selected steering step whose $r$ directions are scaled by a weight vector $\boldsymbol\beta\ge\vzero$ (line~\ref{line:steer}). The two instances are an adaptive hard cutoff $\beta_j=\mathbf{1}[j\le k_{\mathrm{eff}}(\vw)]$ and a cutoff-free soft weighting $\beta_j=(\lambda_j^{(\vw)}/\lambda_1^{(\vw)})^{\alpha}$. The results of Section~\ref{sec:theory-body} hold for either (Remark~\ref{rem:weights}). Optionally, a \emph{guidance operator} $\mathcal{G}_\vw$ adds a correction $\eta(t)\,\mathcal{G}_\vw$ to the velocity $\vv_\theta$ late in the trajectory, for $t\ge t_{\mathrm{start}}$ (line~\ref{line:guide}). The operators use the precomputed geometry or the per-objective proxy gradients and never differentiate through the ODE solve and they are analyzed in Section~\ref{sec:complementary}. When no operator is supplied, the method reduces to one unmodified ODE solve.

\subsection{First-order guarantee}
\label{sec:theory-body}
We now show that a steering step provably increases the scalarized objective in expectation over the noise, and that the AGOP frame is the best rank-$r$ frame in which to take that steering step. Both results rest on a \emph{single} regularity condition, smoothness of $H_\vw$. Sign selection, by contrast, is part of the algorithm and not an assumption. The theorem holds for any fixed sign pattern, and the Pearson rule of the Algorithm~\ref{alg:steer} targets the pattern that maximizes the resulting bound.

\subsubsection{Assumptions}
\begin{assumption}[Smoothness]\label{ass:reg}
For each $\vw\in \simplex^{m-1}$, the noise-space objective $H_\vw$ is twice continuously differentiable ($C^2$), its gradient is square-integrable under $\gN$, and its curvature is bounded, with $\|\nabla^2 H_\vw(\vz)\|_2\le L_\vw$ for every $\vz \in \R^D$ and some constant $L_\vw > 0$.
\end{assumption}

Note, that this assumption is mild because it follows from the architecture rather than from the data. Since $H_\vw=h_\vw\circ\Phi_\theta$, it is $C^2$ with a bounded Hessian whenever the velocity field $\vv_\theta$ and the proxies $\hat f_a$ have bounded first and second derivatives. Integrating such a velocity field over the finite interval $[0,1]$ yields a flow map with bounded derivatives as well. Our flow model uses SiLU activations, whose first and second derivatives are bounded, so the finite weights the bound holds everywhere without further conditions.

Our proxies use LeakyReLU, which is not twice differentiable at zero. Replacing it by a smooth approximation, such as a softplus-based version, gives a proxy that satisfies Assumption~\ref{ass:reg} and stays uniformly close to the original. \fullref{Theorem}{thm:ascent}{app:thm:ascent} then carries over to the original proxy up to a small error that vanishes as the approximation becomes exact.
Furthermore, square-integrability of $\nabla H_\vw$ makes the metric $\mM_\vw=\E[\nabla H_\vw\nabla H_\vw^\top]$ finite, and by the Gaussian Poincar\'e inequality it also makes $H_\vw$ itself square-integrable under $\gN$, so all expectations below are finite. The constant $L_\vw$ is an \emph{upper} bound on curvature, equivalently the Lipschitz constant of $\nabla H_\vw$. A small $L_\vw$ means the landscape bends gently, so the first-order approximation stays accurate over a longer steering step, while a large $L_\vw$ forces a smaller steering strength. It is exactly what fixes the admissible steering strength in \fullref{Theorem}{thm:ascent}{app:thm:ascent}.

\paragraph{Sign selection, a device, not an assumption.}
The one thing the metric cannot supply is orientation. $\mM_\vw$ is built from outer products $\nabla H_\vw\nabla H_\vw^\top$ and is invariant to flipping the sign of any eigenvector, so it identifies \emph{which} directions the objective is sensitive to but not \emph{which way} along them is uphill. The algorithm resolves this once per direction, independently of the noise vector being steered, by setting $\sigma_j$ to the sign of the Pearson correlation between $\langle\vz_i,\vu_\vw^{(j)}\rangle$ and $\vw^\top\hat\vy_i$ over the cache, which requires no ODE solve. \fullref{Theorem}{thm:ascent}{app:thm:ascent} does not depend on how the signs are chosen. Its inequality holds for every fixed sign pattern, and the signs only determine how large the guaranteed gain is. The Pearson rule is therefore a choice rather than a requirement, and a natural one~\citep{beaglehole2026steering}, since its population value is exactly the pattern that makes the guaranteed gain largest. An empirical sign that disagrees with the population sign costs only the share of the gain carried by that direction (Remark~\ref{rem:caveats}).

\paragraph{What we do not assume.}
Two conditions are deliberately absent. First, we do \emph{not} assume proxy
fidelity ($\hat f\approx f$): \fullref{Theorem}{thm:ascent}{app:thm:ascent} guarantees ascent on the
\emph{proxy} scalarization $H_\vw$, and whether that tracks the true objective is a
separate question, isolated to the recovery result (Theorem~\ref{thm:recovery}).
Second, we do \emph{not} assume the active subspace carries gradient signal at
every seed. The theorem instead \emph{reports} this boundary, since the expected gain is
positive for a small enough steering strength unless the mean gradient
$\E_{\vz\sim\gN}[\nabla H_\vw]$ is orthogonal to $\mathrm{span}(\mU_\vw)$. That
degenerate case is the practically important failure mode. Because $\mU_\vw$
comes from the \emph{averaged} metric, a similar failure can occur at an individual seed,
where $\nabla H_\vw(\vz)$ may be nearly orthogonal to $\mathrm{span}(\mU_\vw)$ even when
the subspace is correct on average (Remark~\ref{rem:caveats}).

\subsubsection{Steering ascends the scalarized objective}
\label{sec:app:ascent}
\begin{restated}{Theorem~\ref{thm:ascent}}[\textbf{Steering ascends the scalarized objective}]\phantomsection\label{app:thm:ascent}
A single steering step moves uphill on $H_\vw$ on average, by an amount set by how much of the mean gradient the weighted active directions capture. Precisely, 
let $\vu_\vw^{(1)},\dots,\vu_\vw^{(D)}$ be the full orthonormal eigenbasis of $\mM_\vw$,
let $\boldsymbol\beta\ge\vzero$ be the direction weights of Algorithm~\ref{alg:steer} (hard cutoff $\beta_j=\mathbf 1[j\le r]$ or soft $\beta_j=(\lambda_j^{(\vw)}/\lambda_1^{(\vw)})^\alpha$), and let $d_j(\vz):=\vu_\vw^{(j)\top}\nabla H_\vw(\vz)$. Under Assumption~\ref{ass:reg}, the steering step $\vz_\vw=\vz+\gamma\sum_j\beta_j\sigma_j\vu_\vw^{(j)}$ with any fixed sign pattern $\vsig\in\{\pm1\}^D$ satisfies, for every $\vz$,
\begin{equation}
  H_\vw(\vz_\vw)-H_\vw(\vz)\;\ge\;\gamma\sum_j\beta_j\sigma_j d_j(\vz)\;-\;\tfrac12\,L_\vw\,\gamma^2\|\boldsymbol\beta\|_2^2 .
  \label{app:eq:ascent-pointwise}
\end{equation}
Let $\bar\vd_\vw\in\R^D$, with components $\bar d_j:=\E_{\vz\sim\gN}[d_j(\vz)]$,
be the mean directional gradient. 
The fixed sign that maximizes the expected gain is $\sigma_j=\mathrm{sgn}(\bar d_j)$, which is the population value of the Pearson sign of Algorithm~\ref{alg:steer}, and for it,
\begin{equation}
  \E_{\vz\sim\gN}\bigl[H_\vw(\vz_\vw)-H_\vw(\vz)\bigr]\;\ge\;\gamma\,\langle\boldsymbol\beta,|\bar\vd_\vw|\rangle\;-\;\tfrac12\,L_\vw\,\gamma^2\|\boldsymbol\beta\|_2^2 .
  \label{app:eq:ascent}
\end{equation}
Hence the steering step increases $H_\vw$ in expectation for every $0<\gamma<2\gamma^\star$, where the optimal steering strength $\gamma^\star=\langle\boldsymbol\beta,|\bar\vd_\vw|\rangle/(L_\vw\|\boldsymbol\beta\|_2^2)$ attains expected gain of at least $\langle\boldsymbol\beta,|\bar\vd_\vw|\rangle^2/(2L_\vw\|\boldsymbol\beta\|_2^2)$. Such a $\gamma$ exists whenever $\beta_j\bar d_j\neq0$ for some $j$. For the hard cutoff \eqref{app:eq:ascent} reduces to $\gamma\|\bar\vd_{\vw,1:r}\|_1-\tfrac12L_\vw r\gamma^2$, and no steering strength guarantees ascent exactly when $\E_{\vz\sim\gN}[\nabla H_\vw]\perp\mathrm{span}(\vu_\vw^{(1)},\dots,\vu_\vw^{(r)})$.
\end{restated}

\begin{proof}
We first take the hard cutoff $\beta_j=\mathbf 1[j\le r]$, so the step is $\vz_\vw=\vz+\gamma\,\mU_\vw\vsig$ with $\mU_\vw=[\vu_\vw^{(1)}\cdots\vu_\vw^{(r)}]$ and $\vsig\in\{\pm1\}^r$. The general $\boldsymbol\beta$ is handled at the end.

\vspace{1\baselineskip}
Let $\vg:=\nabla H_\vw(\vz)$, let $\vd:=\vd_\vw(\vz)=\mU_\vw^\top\vg$ be the gradient projected onto the active subspace (the directional gradient), and let $\vdelta:=\vz_\vw-\vz=\gamma\,\mU_\vw\vsig$ be the step displacement. By Taylor's theorem with Lagrange remainder, Assumption~\ref{ass:reg} gives a point $\xi\in[\vz,\vz_\vw]$ with,
\begin{equation}
\label{langrange-reminder-taylor}
    H_\vw(\vz_\vw)-H_\vw(\vz)=\vg^\top \vdelta+\tfrac12\,\vdelta^\top\nabla^2H_\vw(\xi)\,\vdelta .
\end{equation}
Expanding the linear term as,
\[
  \vg^\top \vdelta = \vg^\top(\gamma\,\mU_\vw\vsig)
   =\gamma\,\vg^\top\mU_\vw\vsig
   =\gamma\,(\mU_\vw^\top\vg)^\top\vsig
   =\gamma\,\vd^\top\vsig ,
\]
where we substituted $ \vdelta=\gamma\,\mU_\vw\vsig$ and substituted $\vd$.
Now, for the quadratic term of \eqref{langrange-reminder-taylor}, norm of the step length or displacement,
\[\|\vdelta\|_2^2 =\|\gamma\,\mU_\vw\vsig\|_2^2 =\gamma^2\,\|\mU_\vw\vsig\|_2^2 =\gamma^2\,\|\vsig\|_2^2 =\gamma^2 r,
\]
where the third equality uses orthonormality of the columns of $\mU_\vw$. Then, the bound $\vv^\top A\vv\ge-\|A\|_2\|\vv\|_2^2$ for symmetric $A$ applied to $A=\nabla^2H_\vw(\xi)$,
\[
  \tfrac12\,\vdelta^\top\nabla^2H_\vw(\xi)\,\vdelta
   \ge -\tfrac12\,\|\nabla^2H_\vw(\xi)\|_2\,\|\vdelta\|_2^2
   \ge -\tfrac12\,L_\vw\,\|\vdelta\|_2^2
   = -\tfrac12\,L_\vw\,r\,\gamma^2 ,
\]
Combining the two terms, we get,
\[H_\vw(\vz_\vw)-H_\vw(\vz)\;\ge\;\gamma\,\vd^\top\vsig-\tfrac12\,L_\vw r\gamma^2\]
for every sign pattern $\vsig$. Taking expectations over $\vz\sim\gN$ on both sides for the above equation, the curvature term $-\tfrac12L_\vw r\gamma^2$ is the same constant for every $\vz$ (Assumption~\ref{ass:reg}), and $\vd_\vw$ is integrable since $\nabla H_\vw\in L^2(\gN)$ so,
\[\E_\vz\bigl[H_\vw(\vz_\vw)-H_\vw(\vz)\bigr]\;\ge\;\gamma\,\vsig^\top\bar\vd_\vw-\tfrac12L_\vw r\gamma^2 \qquad\text{for every fixed }\vsig 
\]
Over fixed sign patterns the right side is largest when $\vsig^\top\bar\vd_\vw=\sum_j\sigma_j\bar d_j$ is, and $\sum_j\sigma_j\bar d_j\le\sum_j|\bar d_j|=\|\bar\vd_\vw\|_1$ with equality at $\sigma_j=\mathrm{sgn}(\bar d_j)$, which gives \eqref{app:eq:ascent}.

Note, this is the sign Algorithm~\ref{alg:steer} computes. By Stein's lemma, for $\vz\sim\gN(\vzero,\mI_D)$ and $H_\vw$ differentiable with $\E\|\nabla H_\vw\|_2<\infty$, $\E_\vz[\vz\,H_\vw(\vz)]=\E_\vz[\nabla H_\vw(\vz)]$, therefore,
\[\E_\vz\bigl[\langle\vz,\vu_\vw^{(j)}\rangle\,H_\vw(\vz)\bigr]=\vu_\vw^{(j)\top}\E_\vz[\nabla H_\vw]=\bar d_j .
\]
As $\langle\vz,\vu_\vw^{(j)}\rangle$ has mean zero, the left side is $\mathrm{Cov}\bigl(\langle\vz,\vu_\vw^{(j)}\rangle,H_\vw(\vz)\bigr)$, whose sign is the sign of the Pearson correlation between the coordinate along $\vu_\vw^{(j)}$ and the objective value. The algorithm uses its empirical version on the cached pairs $\bigl(\langle\vz_i,\vu_\vw^{(j)}\rangle,\ \vw^\top\hat\vy_i\bigr)$, where $\vw^\top\hat\vy_i=H_\vw(\vz_i)$ exactly. Thus the Pearson sign equals $\mathrm{sgn}(\bar d_j)$ in the population, and its empirical version converges to it as $n$ grows whenever $\bar d_j\neq0$. At finite $n$ the two may disagree, which Remark~\ref{rem:caveats} addresses.

\vspace{0.4\baselineskip}
Note, that the bound \eqref{app:eq:ascent} is concave in $\gamma$, maximized at $\gamma^\star=\|\bar\vd_\vw\|_1/(L_\vw r)$ with value $\|\bar\vd_\vw\|_1^2/(2L_\vw r)$, which is positive on $0<\gamma<2\|\bar\vd_\vw\|_1/(L_\vw r)$, and $\|\bar\vd_\vw\|_1=0$ exactly when $\mU_\vw^\top\E_\vz[\nabla H_\vw]=\vzero$, i.e.\ $\E_\vz[\nabla H_\vw]\perp\mathrm{span}(\mU_\vw)$. Also, since $L_\vw$ is a worst-case constant, the guarantee explains why steering ascends rather than prescribing $\gamma$, which we set empirically (\S~\ref{app:tier2_strength}).

\textit{\textbf{General weights.}} For arbitrary $\boldsymbol\beta\ge\vzero$ the displacement is $\vdelta=\gamma\sum_j\beta_j\sigma_j\vu_\vw^{(j)}$, and the argument above goes through with two replacements: the linear term is $\vg^\top\vdelta=\gamma\sum_j\beta_j\sigma_j d_j(\vz)$, and by orthonormality $\|\vdelta\|_2^2=\gamma^2\sum_j\beta_j^2\sigma_j^2=\gamma^2\|\boldsymbol\beta\|_2^2$ in place of $\gamma^2 r$. Taking expectations and maximizing over fixed signs gives $\sum_j\beta_j\sigma_j\bar d_j\le\langle\boldsymbol\beta,|\bar\vd_\vw|\rangle$, with equality at $\sigma_j=\mathrm{sgn}(\bar d_j)$, which is \eqref{app:eq:ascent}. The Stein step remains unchanged, and $\gamma^\star$ follows by the same concavity argument with $r$ replaced by $\|\boldsymbol\beta\|_2^2$.
\end{proof}
The guarantee is first-order and local, and it holds without further conditions. The ascent gain grows linearly in $\gamma$ while the curvature penalty grows quadratically, so shrinking $\gamma$ always tips the balance toward ascent, and the theorem records the admissible window and the optimal steering strength $\gamma^\star$. What sets the gain is $\langle\boldsymbol\beta,|\bar\vd_\vw|\rangle$, the weighted mean gradient the active directions capture. The sign $\vsig$ only fixes which way the steering step moves along each direction, not how far. The bound is vacuous in exactly one case, when the mean gradient $\E_{\vz\sim\gN}[\nabla H_\vw]$ is orthogonal to the weighted active directions. This is a property of the subspace rather than the step, and it is what the next result addresses: no rank-$r$ subspace captures more of the gradient in expectation than $\mathrm{span}(\mU_\vw)$.

\subsubsection{The active subspace is the optimal steering subspace}
\label{app:active-subspace}
Here we ask which $r$-dimensional subspace captures the most gradient, averaged over the noise $\vz\sim\gN$ the flow is driven by.

\begin{restated}{Proposition~\ref{prop:optframe}}[\textbf{The active subspace captures the maximum gradient}]\phantomsection\label{app:prop:optframe}
Let $\mV\in\R^{D\times r}$ have orthonormal columns. Averaged over the noise $\vz\sim\gN$, the squared gradient the subspace $\mathrm{span}(\mV)$ captures is
\begin{equation}
  \E_{\vz\sim\gN}\!\bigl[\|\mV^\top\nabla H_\vw(\vz)\|_2^2\bigr]
   =\operatorname{tr}\!\bigl(\mV^\top\mM_\vw\mV\bigr)
   \;\le\;\sum_{j=1}^r\lambda_j^{(\vw)},
  \label{eq:optframe}
\end{equation}
with equality iff $\mathrm{span}(\mV)$ is the span of the top-$r$ eigenvectors of $\mM_\vw$, that is the active subspace $\mathrm{span}(\mU_\vw)$, provided $\lambda_r^{(\vw)}>\lambda_{r+1}^{(\vw)}$ so that this span is unique. Hence no rank-$r$ subspace captures more expected squared gradient than the active subspace, and the amount it captures is the top-$r$ eigenvalue sum.
\end{restated}

The expected squared gradient it leaves out is exactly $\sum_{j>r}\lambda_j^{(\vw)}$ (Proposition~\ref{prop:tail}).

\begin{proof}
Using $\|\mV^\top\vg\|_2^2=\operatorname{tr}(\mV^\top\vg\vg^\top\mV)$ for any vector $\vg$, and linearity of expectation,
\[
  \E_{\vz\sim\gN}\!\bigl[\|\mV^\top\nabla H_\vw\|_2^2\bigr]
   =\operatorname{tr}\!\Bigl(\mV^\top\,\E_{\vz\sim\gN}[\nabla H_\vw\nabla H_\vw^\top]\,\mV\Bigr)
   =\operatorname{tr}\!\bigl(\mV^\top\mM_\vw\mV\bigr),
\]
where the last equality is the definition of $\mM_\vw$ in \eqref{eq:Mw}. By Ky Fan's maximum principle, over all $\mV$ with orthonormal columns the maximum of $\operatorname{tr}(\mV^\top\mM_\vw\mV)$ is the sum of the $r$ largest eigenvalues of $\mM_\vw$, attained exactly when $\mathrm{span}(\mV)$ is the span of the corresponding top-$r$ eigenvectors.
\end{proof}

Together, \fullref{Theorem}{thm:ascent}{app:thm:ascent} and \fullref{Proposition}{prop:optframe}{app:prop:optframe} say that steering along the top-$r$ eigenvectors of $\mM_\vw$ ascends the scalarized objective, and that these directions capture more expected squared gradient than any other $r$ directions. The remaining caveats of this first-order picture are collected below.

\begin{remark}[Caveats of the first-order results] Three points qualify the two results above.
\label{rem:caveats}
\begin{itemize}
\item \textbf{Global directions and sign.} $\mU_\vw$ comes from the \emph{averaged} metric $\mM_\vw$, so it is the globally dominant subspace, not the ascent direction at a particular $\vz$. The sign $\vsig$ is also computed once from the cache and shared by every noise vector.
\item \textbf{Sign errors degrade gracefully.} If $S$ indexes the mis-signed directions, the first-order expected gain is $\gamma\bigl(\langle\boldsymbol\beta,|\bar\vd_\vw|\rangle-2\sum_{j\in S}\beta_j|\bar d_j|\bigr)$, so the steering step still ascends whenever the mis-signed directions carry less than half the weighted mean gradient, $\sum_{j\in S}\beta_j|\bar d_j|<\tfrac12\langle\boldsymbol\beta,|\bar\vd_\vw|\rangle$. This is benign because the high-signal directions, where an error costs most, are exactly where the Pearson sign is most reliable.
\item \textbf{Degenerate case and reach.} The gain vanishes when $\E_{\vz\sim\gN}[\nabla H_\vw]\perp\mathrm{span}(\mU_\vw)$. This is what a single steering step cannot serve and motivates the complementary operators of Section~\ref{sec:complementary}. The admissible steering step is also bounded, since $\|\vz_\vw-\vz\|_2=\gamma\|\boldsymbol\beta\|_2$ must stay in the range where $\Phi_\theta$ decodes reliably (Remark~\ref{rem:reach}).
\end{itemize}
\end{remark}

\subsection{Estimation and truncation of the metric}
\label{sec:metric-props}

\fullref{Theorem}{thm:ascent}{app:thm:ascent} and \fullref{Proposition}{prop:optframe}{app:prop:optframe} are stated for the population metric $\mM_\vw$. The algorithm, however, can only form the empirical AGOP $\widehat\mM_\vw$ from $n$ noise samples and the surrogate gradients, and it keeps only the top $r$ of its eigenvectors or down-weights the rest. The next two results control both approximations: the empirical metric converges to its population target at the Monte Carlo rate, and the top-$r$ truncation discards an amount of gradient equal to the tail of the spectrum.

\paragraph{Consistency of the empirical metric.}
The algorithm assembles $\widehat\mM_\vw$ from the surrogate gradients $\nabla\phi_\vw$, $\phi_\vw:=\sum_a w_a\phi_a$, averaged over $n$ noise samples. It is therefore an estimator of the \emph{population surrogate metric} $\mM_\vw^{\phi}:=\E_{\vz\sim\gN}[\nabla\phi_\vw(\vz)\nabla\phi_\vw(\vz)^\top]$. We prove convergence to $\mM_\vw^{\phi}$ at the standard rate, and afterwards separate this estimation error from the two approximation gaps the estimator does not control.

\begin{proposition}[\textbf{The empirical AGOP is consistent}]\label{prop:agop}
Let $\vz_1,\dots,\vz_n\sim\gN(\vzero, \mI_D)$ be i.i.d.\ and independent of the surrogate $\phi_\vw$, which we treat as fixed, and let
$\widehat\mM_\vw:=\tfrac1n\sum_{i=1}^n\nabla\phi_\vw(\vz_i)\nabla\phi_\vw(\vz_i)^\top$ be
the empirical AGOP, an estimator of $\mM_\vw^{\phi}=\E_{\vz\sim\gN}[\nabla\phi_\vw\nabla\phi_\vw^\top]$. If the fourth moment $\kappa_\vw:=\E_{\vz\sim\gN}\|\nabla\phi_\vw\|_2^4$ is finite, then,
\begin{enumerate}
\item[(i)] \emph{(consistency)} $\widehat\mM_\vw\to\mM_\vw^{\phi}$ almost surely as $n\to\infty$
\item[(ii)] \emph{(rate)} $\E\bigl\|\widehat\mM_\vw-\mM_\vw^{\phi}\bigr\|_F\le\sqrt{\kappa_\vw/n}$.
\item[(iii)] \emph{(direction recovery)} The steering weights are consistent, and for soft weighting no eigengap is needed: the weighted projector $\rmR_\vw:=\mU_\vw\operatorname{diag}(\boldsymbol\beta)\mU_\vw^\top$ with $\beta_j=(\lambda_j/\lambda_1)^{\alpha}$ satisfies
$\|\widehat\rmR_\vw-\rmR_\vw\|_F=O_p\bigl(n^{-\min(\alpha,1)/2}\bigr)$. For a hard top-$r$ cutoff the retained subspace satisfies $\|\sin\Theta(\widehat\mU_\vw,\mU_\vw^{\phi})\|_2=O_p\!\bigl(n^{-1/2}/\Delta_r\bigr)$, provided the eigengap $\Delta_r:=\lambda_r(\mM_\vw^{\phi})-\lambda_{r+1}(\mM_\vw^{\phi})>0$.
\end{enumerate}
\end{proposition}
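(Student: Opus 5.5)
The plan is to treat $\widehat\mM_\vw$ as an average of i.i.d.\ random matrices $\mathbf{X}_i:=\nabla\phi_\vw(\vz_i)\nabla\phi_\vw(\vz_i)^\top$. These take values in the Hilbert space $\R^{D\times D}$ with the Frobenius inner product, and then (i) and (ii) are the matrix-valued strong law and a variance computation. Since the surrogate is held fixed and independent of the sample, each $\mathbf{X}_i$ is a measurable function of $\vz_i$ alone. Also $\|\mathbf{X}_i\|_F=\|\nabla\phi_\vw(\vz_i)\|_2^2$, so $\E\|\mathbf{X}_i\|_F=\operatorname{tr}\mM^\phi_\vw<\infty$, because $\kappa_\vw<\infty$ implies a finite second moment by Jensen. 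For (i), apply the strong law of large numbers entrywise to the $D^2$ real-valued i.i.d.\ sequences $(\mathbf{X}_i)_{kl}$, each integrable because $|(\mathbf{X}_i)_{kl}|\le\|\nabla\phi_\vw\|_2^2$. In finite dimensions, entrywise almost sure convergence is almost sure convergence in any norm.

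For (ii), use Jensen, $\E\|\widehat\mM_\vw-\mM^\phi_\vw\|_F\le(\E\|\widehat\mM_\vw-\mM^\phi_\vw\|_F^2)^{1/2}$. The cross terms of the centered summands vanish by independence in the Frobenius inner product, which gives $\E\|\widehat\mM_\vw-\mM^\phi_\vw\|_F^2=\tfrac1n\E\|\mathbf{X}_1-\mM^\phi_\vw\|_F^2$. This equals $\tfrac1n(\E\|\mathbf{X}_1\|_F^2-\|\mM^\phi_\vw\|_F^2)\le\kappa_\vw/n$, and taking square roots yields the stated bound.

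For (iii), the hard cutoff follows from the Davis--Kahan variant of \citet{yu2015dk}. It gives $\|\sin\Theta(\widehat\mU_\vw,\mU^\phi_\vw)\|_2\le 2\sqrt{r}\,\|\widehat\mM_\vw-\mM^\phi_\vw\|_2/\Delta_r$ (or the $F$-norm form), and $\|\cdot\|_2\le\|\cdot\|_F=O_p(n^{-1/2})$ by (ii) and Markov's inequality. The soft weighting case is the main obstacle, since without an eigengap the individual eigenvectors are not stable. The key observation is that $\rmR_\vw$ is a spectral function of the matrix. For $\lambda_1>0$ we have $\rmR_\vw=\psi(\mM^\phi_\vw)$ with $\psi(\mM)=(\mM/\lambda_1(\mM))^{\alpha}$, where the matrix power is defined through the functional calculus on the PSD cone. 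This expression does not depend on how eigenvectors are chosen within degenerate eigenspaces, so no gap is needed. Write $\widehat\rmR-\rmR=\lambda_1^{-\alpha}(\widehat\mM^\alpha-\mM^\alpha)+(\widehat\lambda_1^{-\alpha}-\lambda_1^{-\alpha})\widehat\mM^\alpha$. The second term is controlled by Weyl's inequality, $|\widehat\lambda_1-\lambda_1|\le\|\widehat\mM-\mM\|_2$, together with smoothness of $x\mapsto x^{-\alpha}$ near $\lambda_1>0$, which gives $O_p(n^{-1/2})$.

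For the first term with $\alpha\le1$, the function $x\mapsto x^\alpha$ is operator monotone on $[0,\infty)$. Ando's inequality \citep{ando1988} then gives $\|\widehat\mM^\alpha-\mM^\alpha\|_F\le\||\widehat\mM-\mM|^\alpha\|_F\le D^{(1-\alpha)/2}\|\widehat\mM-\mM\|_F^{\alpha}$, where the last step is Hölder's inequality on the singular values. This is $O_p(n^{-\alpha/2})$. For $\alpha\ge1$, the map $x^\alpha$ is Lipschitz on the bounded interval $[0,\lambda_1+1]$, which contains both spectra with probability tending to one. A standard Lipschitz bound for matrix functions, using the Daleckii--Krein divided-difference estimate or a polynomial approximation argument (a constant factor suffices), then gives $O_p(n^{-1/2})$. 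Combining the two regimes gives the rate $O_p(n^{-\min(\alpha,1)/2})$.

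Two loose ends need checking. First, Ando's inequality must be applied in the Frobenius norm, which is unitarily invariant, so it applies. Second, the case $\lambda_1(\mM^\phi_\vw)=0$ is degenerate and I would exclude it by assumption.
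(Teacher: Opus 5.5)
Your proposal is correct and follows essentially the same route as the paper: the entrywise strong law for (i), the Frobenius variance computation with vanishing cross terms for (ii), and for (iii) the Yu--Wang--Samworth Davis--Kahan bound for the hard cutoff, plus the matrix-function view of $\rmR_\vw$ with Ando's inequality for $\alpha<1$ and a Frobenius-Lipschitz bound for $\alpha\ge1$. Three of your additions are details the paper's proof leaves implicit, not a different argument: handling the data-dependent normalization $\widehat\lambda_1$ through Weyl's inequality, the explicit constant $D^{(1-\alpha)/2}$ from H\"older, and excluding $\lambda_1=0$.
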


\begin{proof}
Let's set $\mathbf{A}_i:=\nabla\phi_\vw(\vz_i)\nabla\phi_\vw(\vz_i)^\top$. Since the $\vz_i$ are i.i.d.,
so are the $\mathbf{A}_i$, and by the definition of $\mM_\vw^{\phi}$ each has mean
$\E \mathbf{A}_i=\E_{\vz\sim\gN}[\nabla\phi_\vw\nabla\phi_\vw^\top]=\mM_\vw^{\phi}$.

\vspace{1\baselineskip}
\emph{(i)} Suppose the second moment $\E_{\vz\sim\gN}\|\nabla\phi_\vw\|_2^2<\infty$ (implied by the assumed finite fourth moment). If we assume a fixed entry $(k,l)$, the scalars $(\mathbf{A}_i)_{kl}=\partial_k\phi_\vw(\vz_i)\,\partial_l\phi_\vw(\vz_i)$ are i.i.d.\ with
mean $(\mM_\vw^{\phi})_{kl}$. Each is bounded in absolute value by the squared gradient norm, since
\[
  |(\mathbf{A}_i)_{kl}|=|\partial_k\phi_\vw(\vz_i)|\,|\partial_l\phi_\vw(\vz_i)|
   \le\|\nabla\phi_\vw(\vz_i)\|_2\,\|\nabla\phi_\vw(\vz_i)\|_2
   =\|\nabla\phi_\vw(\vz_i)\|_2^2 ,
\]
using $|\partial_k\phi_\vw|\le\|\nabla\phi_\vw\|_2$ for every coordinate. Taking expectations, $\E|(\mathbf{A}_i)_{kl}|\le\E\|\nabla\phi_\vw\|_2^2<\infty$, so each $(\mathbf{A}_i)_{kl}$ is integrable. The strong law of large numbers then gives $(\widehat\mM_\vw)_{kl}=\tfrac1n\sum_i(\mathbf{A}_i)_{kl}\to(\mM_\vw^{\phi})_{kl}$ almost surely, and since there are finitely many entries, $\widehat\mM_\vw\to\mM_\vw^{\phi}$ almost surely, hence in Frobenius norm.

\emph{(ii)} Let $\mathbf{Y}_i:=\mathbf{A}_i-\mM_\vw^{\phi}$, which are i.i.d.\ and mean zero. Writing the
squared error through the Frobenius inner product,
\[
  \E\bigl\|\widehat\mM_\vw-\mM_\vw^{\phi}\bigr\|_F^2 =\E\Bigl\|\tfrac1n\textstyle\sum_i \mathbf{Y}_i\Bigr\|_F^2
   =\frac{1}{n^2}\sum_{i,j}\E\langle \mathbf{Y}_i,\mathbf{Y}_j\rangle .
\]
For $i\neq j$, cancel out due to independence and $\E \mathbf{Y}_i=0$ gives $\E\langle \mathbf{Y}_i,\mathbf{Y}_j\rangle=\langle\E \mathbf{Y}_i,\E \mathbf{Y}_j\rangle=0$, so only the $n$ diagonal terms
survive:
\[
  \E\bigl\|\widehat\mM_\vw-\mM_\vw^{\phi}\bigr\|_F^2
   =\frac1n\,\E\|\mathbf{Y}_1\|_F^2
   \le\frac1n\,\E\|\mathbf{A}_1\|_F^2
   =\frac1n\,\E\|\nabla\phi_\vw\|_2^4
   =\frac{\kappa_\vw}{n},
\]
using $\E\|\mathbf{Y}_1\|_F^2=\E\|\mathbf{A}_1\|_F^2-\|\mM_\vw^{\phi}\|_F^2\le\E\|\mathbf{A}_1\|_F^2$ and
$\|\vv\vv^\top\|_F=\|\vv\|_2^2$ for the rank-one $\mathbf{A}_1$. Finally, Jensen's inequality gives $\E\|\widehat\mM_\vw-\mM_\vw^{\phi}\|_F\le\sqrt{\kappa_\vw/n}$.

\emph{(iii)} The algorithm steers with the span of the top-$r$ eigenvectors of $\widehat\mM_\vw$, which we compare to that of $\mM_\vw^{\phi}$. Parts (i)--(ii) give convergence of the matrix, but that does not by itself transfer to the object we steer with, because that object is built from the eigenvectors, which need not depend continuously on the matrix. The two weighting schemes differ exactly here.\\

\textit{Soft weighting} forms the steering operator $\rmR_\vw=\sum_j\beta_j\,\vu_\vw^{(j)}\vu_\vw^{(j)\top}=\psi(\mM_\vw^{\phi})$, the matrix function applying $\beta(\lambda)=(\lambda/\lambda_1)^{\alpha}$ to the eigenvalues. A matrix function is continuous in the matrix even across eigenvalue crossings i.e. individual eigenprojectors jump at a crossing, but their weighted sum does not, so no gap is needed. For Lipschitz $\beta$ ($\alpha\ge1$ on the bounded spectrum) $\psi$ is Lipschitz in Frobenius norm, hence $\|\widehat\rmR_\vw-\rmR_\vw\|_F\le L\,\|\widehat\mM_\vw-\mM_\vw^{\phi}\|_F=O_p(n^{-1/2})$ by (ii) and Markov. But for $0<\alpha<1$, $\beta$ is only $\alpha$-H\"older at $0$, and the operator-H\"older inequality for matrix powers \citep{ando1988} gives $\|\widehat\rmR_\vw-\rmR_\vw\|_F\le C\,\|\widehat\mM_\vw-\mM_\vw^{\phi}\|_F^{\alpha}=O_p(n^{-\alpha/2})$. In either case the rate is $O_p(n^{-\min(\alpha,1)/2})$ with no eigengap.\\

\textit{The hard cutoff} instead keeps a sharp top-$r$ subspace, whose basis is a discontinuous eigenprojector. The instability is a near-tie: when $\lambda_r$ and $\lambda_{r+1}$ are close, the ``top-$r$ versus rest'' split is almost a tie, and an arbitrarily small perturbation can flip which directions are retained, rotating the subspace by a large angle. A clear gap $\lambda_r-\lambda_{r+1}>0$ rules the tie out and makes the subspace move only in proportion to the perturbation.

\vspace{1\baselineskip}
Let $\Delta_r=\lambda_r(\mM_\vw^{\phi})-\lambda_{r+1}(\mM_\vw^{\phi})$ be that gap, and measure the distance between the empirical and population top-$r$ subspaces by the principal angles $\Theta$ between them, so that $\|\sin\Theta\|_2=\sin\theta_{\max}$ is $0$ iff the subspaces coincide. Given $\Delta_r>0$, the Davis--Kahan $\sin\Theta$ theorem, in the form of \citet{yu2015dk}, bounds the rotation by the perturbation over the gap as, 
\[ \|\sin\Theta(\widehat\mU_\vw,\mU_\vw^{\phi})\|_2 \le \|\sin\Theta(\widehat\mU_\vw,\mU_\vw^{\phi})\|_F \le \frac{2\,\|\widehat\mM_\vw-\mM_\vw^{\phi}\|_F}{\Delta_r}
\]
where $\widehat\mU_\vw$ and $\mU_\vw^{\phi}$ are orthonormal top-$r$ eigenvector bases of the empirical and population metrics. Since $\|\widehat\mM_\vw-\mM_\vw^{\phi}\|_F=O_p(n^{-1/2})$ by (ii) and using Markov's inequality, the right-hand side is $O_p(n^{-1/2}/\Delta_r)$.
\end{proof}

\begin{remark}[Two gaps we do not close]\label{rem:gaps}
The estimator converges to $\mM_\vw^{\phi}$, the metric of the \emph{surrogate} gradients. Two further gaps separate it from the true-objective metric $\mM_\vw^{f}:=\E_{\vz\sim\gN}[\nabla H_\vw^{f}\nabla H_\vw^{f\top}]$, with $H_\vw^{f}:=\sum_a w_a\,f_a\!\circ\Phi_\theta$, and we name rather than prove them:
\begin{itemize}
\item \textbf{RFM approximation gap} $\|\mM_\vw^{\phi}-\mM_\vw\|$: the surrogate gradient $\nabla\phi_\vw$ differs from the proxy's composed gradient $\nabla H_\vw$. It is controlled by the RFM gradient error $\|\nabla\phi_\vw-\nabla H_\vw\|_{L^2(\gN)}$, which we take as an assumption. AGOP estimators built from kernel regression fits are known to be consistent \citep{trivedi2014egop}, and RFM provably recovers the relevant directions in the linear setting \citep{radhakrishnan2025linrfm}.
\item \textbf{Proxy gap} $\|\mM_\vw-\mM_\vw^{f}\|$: the proxy $\hat f$ differs from the true objective $f$. This is the offline problem itself and cannot be closed without querying $f$. Its effect on the recovered front is isolated to Theorem~\ref{thm:recovery}.
\end{itemize}
The total error splits accordingly as
\begin{equation}
    \widehat\mM_\vw-\mM_\vw^{f} = \underbrace{(\widehat\mM_\vw-\mM_\vw^{\phi})}_{\text{(a) Monte-Carlo error}} +\underbrace{(\mM_\vw^{\phi}-\mM_\vw)}_{\text{(b) RFM approximation}} +
    \underbrace{(\mM_\vw-\mM_\vw^{f})}_{\text{(c) proxy error}}
\end{equation}
where (a) is proved $O(n^{-1/2})$, (b) is assumed  supported by \citet{trivedi2014egop,radhakrishnan2025linrfm}, and (c) is irreducible for offline MOO without additional assumptions.
\end{remark}

The decomposition becomes a bound once the two named gaps are written explicitly.

\begin{corollary}[\textbf{Total metric error}]\label{cor:total-metric}
Write $\|\vh\|_{L^2(\gN)}:=(\E_{\vz\sim\gN}\|\vh(\vz)\|_2^2)^{1/2}$ for the root-mean-square
size of a function under the noise, and let
$\varepsilon_{\mathrm{rfm}}:=\|\nabla\phi_\vw-\nabla H_\vw\|_{L^2(\gN)}$ and
$\varepsilon_{\nabla}:=\|\nabla H_\vw-\nabla H_\vw^{f}\|_{L^2(\gN)}$ be the RFM and
proxy gradient errors, with gradient scales $\Gamma_\vw:=\|\nabla\phi_\vw\|_{L^2}+\|\nabla H_\vw\|_{L^2}$ and $\Gamma'_\vw:=\|\nabla H_\vw\|_{L^2}+\|\nabla H_\vw^{f}\|_{L^2}$. Then the empirical AGOP satisfies
\begin{equation}
  \E\bigl\|\widehat\mM_\vw-\mM_\vw^{f}\bigr\|_F
   \;\le\;\underbrace{\sqrt{\kappa_\vw/n}}_{\textrm{Monte Carlo}}
   \;+\;\underbrace{\varepsilon_{\mathrm{rfm}}\,\Gamma_\vw}_{\textrm{RFM approximation}}
   \;+\;\underbrace{\varepsilon_{\nabla}\,\Gamma'_\vw}_{\textrm{proxy}} .
  \label{eq:total-metric}
\end{equation}
The first term vanishes as $n\to\infty$ (Proposition~\ref{prop:agop}), the second as the surrogate improves, and the third is the irreducible offline gap.
\end{corollary}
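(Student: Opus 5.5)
Working directly from the decomposition of Remark~\ref{rem:gaps}, I will bound the total error by the triangle inequality in Frobenius norm:
\[
\E\|\widehat\mM_\vw-\mM_\vw^{f}\|_F\le \E\|\widehat\mM_\vw-\mM_\vw^{\phi}\|_F+\|\mM_\vw^{\phi}-\mM_\vw\|_F+\|\mM_\vw-\mM_\vw^{f}\|_F .
\]
The last two terms are deterministic, since the surrogate is treated as fixed, as in Proposition~\ref{prop:agop}. The first term is at most $\sqrt{\kappa_\vw/n}$ by Proposition~\ref{prop:agop}(ii), which handles the Monte Carlo part directly.

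For the remaining two terms I will prove one generic lemma and apply it twice. The lemma says that for two square-integrable vector fields $\va,\vb$ under $\gN$,
\[
\bigl\|\E[\va\va^\top]-\E[\vb\vb^\top]\bigr\|_F\le\|\va-\vb\|_{L^2}\bigl(\|\va\|_{L^2}+\|\vb\|_{L^2}\bigr).
\]
The proof starts from the pointwise polarization identity $\va\va^\top-\vb\vb^\top=\tfrac12[(\va-\vb)(\va+\vb)^\top+(\va+\vb)(\va-\vb)^\top]$. Alternatively, one can use $\va\va^\top-\vb\vb^\top=(\va-\vb)\va^\top+\vb(\va-\vb)^\top$, which avoids the factor bookkeeping. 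Next, pull the Frobenius norm inside the expectation using Jensen's inequality (convexity of the norm), and use $\|\vu\vv^\top\|_F=\|\vu\|_2\|\vv\|_2$. Finally, apply Cauchy--Schwarz in $L^2(\gN)$ to get $\E[\|\va-\vb\|_2\|\va\|_2]\le\|\va-\vb\|_{L^2}\|\va\|_{L^2}$, and likewise for the $\vb$ term. Summing the two pieces gives the lemma.

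Applying the lemma with $\va=\nabla\phi_\vw$ and $\vb=\nabla H_\vw$ gives $\varepsilon_{\mathrm{rfm}}\Gamma_\vw$. Applying it with $\va=\nabla H_\vw$ and $\vb=\nabla H_\vw^f$ gives $\varepsilon_\nabla\Gamma'_\vw$. In both cases I use that $\mM_\vw=\E[\nabla H_\vw\nabla H_\vw^\top]$ from \eqref{eq:Mw}, and define the other metrics analogously.

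The only delicate point is integrability. All the $L^2$ norms involved must be finite for Jensen and Cauchy--Schwarz to apply. For $\nabla H_\vw$ this follows from Assumption~\ref{ass:reg}. For $\nabla\phi_\vw$ it follows from the finite fourth moment $\kappa_\vw$. For $\nabla H^f_\vw$ I will simply note it as implicit in $\Gamma'_\vw<\infty$; if it were infinite, the bound would hold trivially. The closing remarks then follow immediately. The first term vanishes as $n\to\infty$ by Proposition~\ref{prop:agop}. The second vanishes as $\varepsilon_{\mathrm{rfm}}\to0$ with $\Gamma_\vw$ bounded. The third does not depend on $n$ or on the surrogate.
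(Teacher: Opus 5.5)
Your proposal is correct and follows the paper's proof essentially step for step. Both use the triangle inequality on the three-way split of Remark~\ref{rem:gaps} and Proposition~\ref{prop:agop}(ii) for the Monte Carlo term. For the RFM and proxy terms, both use a rank-one splitting of $\vu\vu^\top-\vv\vv^\top$ followed by Jensen and Cauchy--Schwarz in $L^2(\gN)$. Your explicit integrability check (finite $\kappa_\vw$, Assumption~\ref{ass:reg}, and $\Gamma'_\vw<\infty$) is a small addition that the paper leaves implicit.
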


\begin{proof}
By the triangle inequality on the three-way split of Remark~\ref{rem:gaps}, it suffices to bound each piece. The first, $\E\|\widehat\mM_\vw-\mM_\vw^{\phi}\|_F\le\sqrt{\kappa_\vw/n}$,
is Proposition~\ref{prop:agop}(ii). For the second, the rank-one identity
$\vu\vu^\top-\vv\vv^\top=\vu(\vu-\vv)^\top+(\vu-\vv)\vv^\top$ with $\vu=\nabla\phi_\vw$,
$\vv=\nabla H_\vw$ gives, under expectation and the triangle inequality,
\[
  \|\mM_\vw^{\phi}-\mM_\vw\|_F
   \le\E\bigl[\|\nabla\phi_\vw-\nabla H_\vw\|_2\,(\|\nabla\phi_\vw\|_2+\|\nabla H_\vw\|_2)\bigr]
   \le\varepsilon_{\mathrm{rfm}}\,\Gamma_\vw ,
\]
The last step by Cauchy--Schwarz in $L^2(\gN)$. The third piece is identical with $\nabla H_\vw$ in place of $\nabla\phi_\vw$ and $\nabla H_\vw^{f}$ in place of $\nabla H_\vw$, giving $\varepsilon_{\nabla}\,\Gamma'_\vw$.
\end{proof}

\subsubsection{Truncation error of the active subspace}
Under the hard cutoff, steering keeps only the top-$r$ eigenvectors of $\mM_\vw$ and discards the rest. Soft weights are treated in Remark~\ref{rem:weights}. The next result quantifies exactly what is lost, the expected squared gradient the retained subspace cannot see equals the discarded tail of the spectrum, the quantity the effective-rank rule (Definition~\ref{def:effrank}) thresholds.

\begin{proposition}[\textbf{Rank truncation discards the spectral tail}]\label{prop:tail}
Let $\rmP_\vw^{\perp}:=\mI_D-\mU_\vw\mU_\vw^\top$ project onto the orthogonal complement of the active subspace. Then
\begin{equation}
  \E_{\vz\sim\gN}\bigl[\|\rmP_\vw^{\perp}\nabla H_\vw(\vz)\|_2^2\bigr]
   =\sum_{j>r}\lambda_j^{(\vw)} .
  \label{eq:tail}
\end{equation}
Together with the captured energy $\E_{\vz\sim\gN}\|\mU_\vw^\top\nabla H_\vw\|_2^2=\sum_{j\le r}\lambda_j^{(\vw)}$ (\fullref{Proposition}{prop:optframe}{app:prop:optframe}), the captured and discarded parts sum to the total
$\E_{\vz\sim\gN}\|\nabla H_\vw\|_2^2=\sum_{j}\lambda_j^{(\vw)}$, and the effective-rank rule keeps the discarded \emph{fraction} below $\varepsilon_{\mathrm{rank}}$.
\end{proposition}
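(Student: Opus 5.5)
The plan is to reduce the discarded energy to a trace identity, exactly as in the proof of \fullref{Proposition}{prop:optframe}{app:prop:optframe}, and then read off the tail of the spectrum using the eigendecomposition of $\mM_\vw$. Since $\rmP_\vw^{\perp}=\mI_D-\mU_\vw\mU_\vw^\top$ is a symmetric idempotent matrix, for any vector $\vg$ we have $\|\rmP_\vw^{\perp}\vg\|_2^2=\vg^\top\rmP_\vw^{\perp}\vg=\operatorname{tr}(\rmP_\vw^{\perp}\vg\vg^\top)$. Taking $\vg=\nabla H_\vw(\vz)$, we then pass the expectation through the trace by linearity. This is legitimate because $\nabla H_\vw\in L^2(\gN)$ by Assumption~\ref{ass:reg}, so every entry of $\nabla H_\vw\nabla H_\vw^\top$ is integrable. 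The result is $\E\|\rmP_\vw^{\perp}\nabla H_\vw\|_2^2=\operatorname{tr}(\rmP_\vw^{\perp}\mM_\vw)$, with $\mM_\vw$ as defined in \eqref{eq:Mw}.

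Next, I would write $\mM_\vw=\sum_j\lambda_j^{(\vw)}\vu_\vw^{(j)}\vu_\vw^{(j)\top}$ in its full orthonormal eigenbasis. Completing the active frame with the trailing eigenvectors gives $\rmP_\vw^{\perp}=\sum_{j>r}\vu_\vw^{(j)}\vu_\vw^{(j)\top}$. Orthonormality then makes $\rmP_\vw^{\perp}\mM_\vw=\sum_{j>r}\lambda_j^{(\vw)}\vu_\vw^{(j)}\vu_\vw^{(j)\top}$, and taking the trace yields $\sum_{j>r}\lambda_j^{(\vw)}$, which is \eqref{eq:tail}.

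For the accompanying claims, the same computation with $\mU_\vw\mU_\vw^\top$ in place of $\rmP_\vw^{\perp}$ recovers the captured energy $\sum_{j\le r}\lambda_j^{(\vw)}$; this is the equality case of Proposition~\ref{prop:optframe}. Because $\mU_\vw\mU_\vw^\top$ and $\rmP_\vw^{\perp}$ are complementary orthogonal projectors, Pythagoras gives $\|\nabla H_\vw\|_2^2=\|\mU_\vw^\top\nabla H_\vw\|_2^2+\|\rmP_\vw^{\perp}\nabla H_\vw\|_2^2$ pointwise. Taking expectations, the two parts sum to $\operatorname{tr}\mM_\vw=\sum_j\lambda_j^{(\vw)}$. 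Finally, for $r=k_{\mathrm{eff}}(\vw)$, Definition~\ref{def:effrank} gives $\sum_{j>r}\lambda_j^{(\vw)}<\varepsilon_{\mathrm{rank}}\sum_j\lambda_j^{(\vw)}$. Dividing by the total energy, when it is nonzero, bounds the discarded fraction by $\varepsilon_{\mathrm{rank}}$. If the total is zero, then $\nabla H_\vw=0$ almost surely and the statement is vacuous.

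There is no genuine obstacle here. The only point needing care is the case of repeated eigenvalues at the cutoff, where $\mU_\vw$ is not unique. The identity still holds for whichever orthonormal top-$r$ eigenbasis is chosen, because the tail sum depends only on the eigenvalues. So no eigengap is required, unlike in the uniqueness clause of Proposition~\ref{prop:optframe}.
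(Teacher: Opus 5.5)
Your proposal is correct and follows essentially the same route as the paper: a trace identity and linearity of expectation reduce the discarded energy to $\operatorname{tr}(\rmP_\vw^{\perp}\mM_\vw)$, which is then evaluated in the eigenbasis of $\mM_\vw$ to give $\sum_{j>r}\lambda_j^{(\vw)}$. The differences are cosmetic. You collapse the projector at the vector level via $\|\rmP_\vw^{\perp}\vg\|_2^2=\vg^\top\rmP_\vw^{\perp}\vg$, where the paper applies cyclicity and idempotence to $\operatorname{tr}(\rmP_\vw^{\perp}\mM_\vw\rmP_\vw^{\perp})$, and you also spell out the integrability, the Pythagorean split into captured plus discarded energy, and the degenerate and repeated-eigenvalue cases that the paper leaves implicit.
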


\begin{proof}
Let's define $\rmP_\vw^{\perp}=\sum_{j>r}\vu_\vw^{(j)}\vu_\vw^{(j)\top}$, which is symmetric and idempotent. \\
Using $\|\rmP_\vw^{\perp}\vg\|_2^2=\operatorname{tr}(\rmP_\vw^{\perp}\vg\vg^\top\rmP_\vw^{\perp})$
and linearity of expectation,
\[
  \E_{\vz\sim\gN}\bigl[\|\rmP_\vw^{\perp}\nabla H_\vw\|_2^2\bigr]
   =\operatorname{tr}\!\bigl(\rmP_\vw^{\perp}\mM_\vw\rmP_\vw^{\perp}\bigr) .
\]
The trace simplifies in three steps. By cyclicity of the trace, rotate the last factor
to the front. Then group the two projectors and then apply idempotence $(\rmP_\vw^{\perp})^2=\rmP_\vw^{\perp}$:
\[
  \operatorname{tr}\!\bigl(\rmP_\vw^{\perp}\mM_\vw\rmP_\vw^{\perp}\bigr)
   =\operatorname{tr}\!\bigl(\rmP_\vw^{\perp}\rmP_\vw^{\perp}\mM_\vw\bigr)
   =\operatorname{tr}\!\bigl((\rmP_\vw^{\perp})^2\mM_\vw\bigr)
   =\operatorname{tr}\!\bigl(\rmP_\vw^{\perp}\mM_\vw\bigr).
\]
Finally, expanding $\rmP_\vw^{\perp}$ and using $\mM_\vw\vu_\vw^{(j)}=\lambda_j^{(\vw)}\vu_\vw^{(j)}$,
\[
  \operatorname{tr}\!\bigl(\rmP_\vw^{\perp}\mM_\vw\bigr)
   =\sum_{j>r}\vu_\vw^{(j)\top}\mM_\vw\vu_\vw^{(j)}
   =\sum_{j>r}\lambda_j^{(\vw)} .
\]
Because the discarded mean squared gradient is exactly the tail $\sum_{j>r}\lambda_j^{(\vw)}$, choosing $r=k_{\mathrm{eff}}(\vw)$ in Definition~\ref{def:effrank} is precisely the statement that steering leaves out less than an $\varepsilon_{\mathrm{rank}}$-fraction of the objective's gradient.
\end{proof}

\begin{remark}[Both weightings are covered]\label{rem:weights}
All results of this section hold for the weighted step $\vz_\vw=\vz+\gamma\sum_j\beta_j\sigma_j\vu_\vw^{(j)}$ under either the hard cutoff or the soft weighting of Algorithm~\ref{alg:steer}. \fullref{Theorem}{thm:ascent}{app:thm:ascent} is stated for general $\boldsymbol\beta$, and the discarded gradient of Proposition~\ref{prop:tail} becomes $\sum_j(1-\beta_j)^2\lambda_j^{(\vw)}$, recovering the hard-cutoff form $\sum_{j>r}\lambda_j^{(\vw)}$. The decomposition is unchanged, and soft weighting is moreover gap-free in Proposition~\ref{prop:agop}(iii). The weights also enter the curvature penalty of Theorem~\ref{thm:ascent} through $\|\boldsymbol\beta\|_2^2$. Under the hard cutoff this equals $r$, so every added direction adds a full unit to the penalty, while under soft weights a direction adds only $(\lambda_j^{(\vw)}/\lambda_1^{(\vw)})^{2\alpha}$, which is negligible for weak directions.
\end{remark}

\subsection{Pareto-front recovery}
\label{sec:recovery}
The results so far concern a single step at a single weight. It remains to show that sweeping the weight $\vw$ traces out the Pareto front, which chains the ascent, truncation, and decomposition results into a recovery guarantee. But below remark needs to be stated before moving ahead, since it enters that guarantee as an error term rather than an assumption.

\begin{remark}[Step size and flow reliability]\label{rem:reach}
The steering step has a definite size,
\[\|\vz_\vw-\vz\|_2=\|\gamma\sum_j\beta_j\sigma_j\vu_\vw^{(j)}\|_2=\gamma\|\boldsymbol\beta\|_2 \]
(orthonormal $\vu_\vw^{(j)}$, $\sigma_j^2=1$), which for the hard cutoff is $\gamma\sqrt r$. The flow $\Phi_\theta$ is a deterministic map defined on all of $\R^D$, so any step is well-defined, but it was trained on samples from the prior: as $\gamma\|\boldsymbol\beta\|_2$ grows and the steered noise moves away from that region, the flow reconstructs less reliably and the proxy-through-the-flow value becomes less trustworthy. We do not treat this as a hard limit; its effect enters the recovery bound through the reach term $\rho(\vw)$ and the proxy error, both of which grow smoothly with the steering strength rather than as a cliff. 

Since each coordinate of $\vz\sim\gN$ has unit variance, we report the per-coordinate displacement $\gamma\|\boldsymbol\beta\|_2/\sqrt D$, which makes steering steps comparable across tasks of different dimension.
\end{remark}

\subsubsection{Pareto-front recovery as distance to an oracle}
Steering ascends the scalarized proxy at each weight (\fullref{Theorem}{thm:ascent}{app:thm:ascent}) along the directions that capture the most gradient (\fullref{Proposition}{prop:optframe}{app:prop:optframe}), and sweeping the weight is cheap by reuse (~\eqref{eq:decomp}). We now state the result that the algorithm approximates the Pareto front, phrased as a distance to the best achievable output.

Fix a noise vector $\vz_0\sim\gN$ with $\|\vz_0\|_2\le2\sqrt D$, a maximal reach $R_{\max}\ge R=\gamma\|\boldsymbol\beta\|_2$ (Remark~\ref{rem:reach}), and the flow's reachable set $\gX_\Phi:=\Phi_\theta(B)$, $B:=\{\vz:\|\vz\|_2\le2\sqrt D+R_{\max}\}$, whcih does not depend on steering strength.  We assume $\vf$ is continuous, so the maximizer below exists. Let's define the \emph{oracle} as the true per-weight maximizer $\vx_\vw^\star:=\arg\max_{\vx\in\gX_\Phi}\vw^\top f(\vx)$, with any noise vector $\vz^o_\vw\in B$ such that $\Phi_\theta(\vz^o_\vw)=\vx_\vw^\star$, and the oracle front $\gF^\star_\Phi:=\{f(\vx_\vw^\star):\vw\in\simplex^{m-1}\}$. Steering moves within the slice $S_\vw:=\{\vz_0+\mU_\vw\vs:\|\vs\|_2\le R\}\subset B$, and our produced output at weight $\vw$ is $\hat\vx_\vw:=\Phi_\theta(\hat\vz_\vw)$ with $\hat\vz_\vw:=\vz_0+\gamma\sum_j\beta_j\sigma_j\vu_\vw^{(j)}\in S_\vw$ the output of Algorithm~\ref{alg:steer}. We bound how far this is from the oracle, first per weight, then as a distance between the two fronts.

The remaining quantities enter as \emph{error term} that vanish in the ideal case. Let $\vz^o_\parallel:=\vz_0+\mU_\vw\mU_\vw^\top(\vz^o_\vw-\vz_0)$ be the projection of the oracle noise vector onto the affine hull of $S_\vw$, and let $G_\perp(\vw):=\sup_{\vz\in B'}\|\rmP_\vw^\perp\nabla H_\vw(\vz)\|_2$ with $B':=\{\vz:\|\vz\|_2\le 4\sqrt D+R_{\max}\}\supset B$ be the off-subspace gradient scale, the pointwise counterpart of the spectral tail $\E_{\vz\sim\gN}\|\rmP_\vw^\perp\nabla H_\vw(\vz)\|_2^2=\sum_{j>r}\lambda_j^{(\vw)}$ (Proposition~\ref{prop:tail}). 

The first error term is the truncation $\tau_r(\vw):=G_\perp(\vw)\,\|\rmP_\vw^\perp(\vz^o_\vw-\vz_0)\|_2$. The second is the reach shortfall $\rho(\vw):=\bigl(H_\vw(\vz^o_\parallel)-H_\vw(\hat\vz_\vw)\bigr)_+$, the proxy value the in-subspace target holds beyond what the step collects, whether because the target lies farther than $R$ from $\vz_0$ or because the step stops short of the slice maximum. The third is the proxy error $\varepsilon_{\mathrm{prox}}:=\sup_{\vx\in\gX_\Phi}\|f(\vx)-\hat f(\vx)\|_\infty$. Beyond the global smoothness of Assumption~\ref{ass:reg}, the lemma below needs nothing further, and the theorem adds one curvature condition on the true front.

\begin{lemma}[Per-weight sub-optimality]\label{lem:perweight}
Under Assumption~\ref{ass:reg}, for every $\vw\in\simplex^{m-1}$ the scalarized-value gap to the oracle satisfies,  
\begin{equation}
  0\;\le\;\vw^\top f(\vx_\vw^\star)-\vw^\top f(\hat\vx_\vw)
   \;\le\;\tau_r(\vw)+\rho(\vw)+2\,\varepsilon_{\mathrm{prox}} .
  \label{eq:perweight}
\end{equation}
\end{lemma}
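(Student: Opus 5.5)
The lower bound is immediate. Since $\hat\vz_\vw\in S_\vw\subset B$, the output $\hat\vx_\vw=\Phi_\theta(\hat\vz_\vw)$ lies in $\gX_\Phi$. The oracle $\vx_\vw^\star$ maximizes $\vw^\top f$ over $\gX_\Phi$, so the gap is nonnegative.

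For the upper bound I would pass to the proxy, work entirely in noise space, and then return. Because $\|\vw\|_1=1$, for any $\vx\in\gX_\Phi$ we have $|\vw^\top f(\vx)-\vw^\top\hat f(\vx)|\le\|f(\vx)-\hat f(\vx)\|_\infty\le\varepsilon_{\mathrm{prox}}$. Applying this at both $\vx_\vw^\star$ and $\hat\vx_\vw$, and using $H_\vw=\vw^\top\hat f\circ\Phi_\theta$, gives
\[
\vw^\top f(\vx_\vw^\star)-\vw^\top f(\hat\vx_\vw)\le H_\vw(\vz^o_\vw)-H_\vw(\hat\vz_\vw)+2\varepsilon_{\mathrm{prox}}.
\]
I then split the proxy gap through the projected oracle noise:
\[
H_\vw(\vz^o_\vw)-H_\vw(\hat\vz_\vw)=\bigl[H_\vw(\vz^o_\vw)-H_\vw(\vz^o_\parallel)\bigr]+\bigl[H_\vw(\vz^o_\parallel)-H_\vw(\hat\vz_\vw)\bigr].
\]
The second bracket is at most its positive part, which is $\rho(\vw)$ by definition.

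The first bracket is where the work lies. The displacement $\vz^o_\vw-\vz^o_\parallel=\rmP_\vw^\perp(\vz^o_\vw-\vz_0)$ lies entirely in the orthogonal complement of the active subspace. By the mean value theorem there is a $\xi$ on the segment $[\vz^o_\parallel,\vz^o_\vw]$ with
\[
H_\vw(\vz^o_\vw)-H_\vw(\vz^o_\parallel)=\nabla H_\vw(\xi)^\top\rmP_\vw^\perp(\vz^o_\vw-\vz_0)=\bigl(\rmP_\vw^\perp\nabla H_\vw(\xi)\bigr)^\top(\vz^o_\vw-\vz_0).
\]
If $\xi\in B'$, this is at most $G_\perp(\vw)\,\|\rmP_\vw^\perp(\vz^o_\vw-\vz_0)\|_2=\tau_r(\vw)$ by Cauchy--Schwarz. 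Note this uses the pointwise supremum $G_\perp$; the spectral tail of Proposition~\ref{prop:tail} controls only an average and would not suffice here. Assumption~\ref{ass:reg} supplies the $C^1$ regularity needed for the mean value theorem.

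The main obstacle is confirming that the segment lies in $B'$, which is why $B'$ is taken larger than $B$. We have $\|\vz^o_\vw\|_2\le2\sqrt D+R_{\max}$. For the projected point, $\|\vz^o_\parallel\|_2\le\|\vz_0\|_2+\|\mU_\vw\mU_\vw^\top(\vz^o_\vw-\vz_0)\|_2\le\|\vz_0\|_2+\|\vz^o_\vw\|_2+\|\vz_0\|_2\le4\sqrt D+R_{\max}$. Here I use $\|\vz_0\|_2\le2\sqrt D$ and the fact that an orthogonal projection does not increase norm. By convexity of the ball the whole segment lies in $B'$, which closes the argument. Summing the three pieces gives $\tau_r(\vw)+\rho(\vw)+2\varepsilon_{\mathrm{prox}}$, as stated in \eqref{eq:perweight}.
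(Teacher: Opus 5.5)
Your route is the paper's own: the same H\"older split through the proxy, the same insertion of $\vz^o_\parallel$, and the mean value theorem where the paper integrates along the segment. That substitution is immaterial. However, the step you single out as the main obstacle does not go through as written. Your chain
\[
\|\vz^o_\parallel\|_2\le\|\vz_0\|_2+\|\vz^o_\vw\|_2+\|\vz_0\|_2
\]
only gives $2\sqrt D+(2\sqrt D+R_{\max})+2\sqrt D=6\sqrt D+R_{\max}$, not $4\sqrt D+R_{\max}$. Writing $\vz^o_\parallel=\vz_0+\mU_\vw\mU_\vw^\top(\vz^o_\vw-\vz_0)$ and then splitting $\|\vz^o_\vw-\vz_0\|_2$ by the triangle inequality pays for $\vz_0$ twice. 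Since $G_\perp(\vw)$ is a supremum over $B'$ only, you then have no control of $\rmP_\vw^\perp\nabla H_\vw(\xi)$ at the mean-value point, so the bound on the first bracket by $\tau_r(\vw)$ is not established.

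The repair is to regroup before taking norms: $\vz^o_\parallel=\rmP_\vw^\perp\vz_0+\mU_\vw\mU_\vw^\top\vz^o_\vw$. Each term is then a projection of a vector with controlled norm, giving $\|\vz^o_\parallel\|_2\le\|\vz_0\|_2+\|\vz^o_\vw\|_2\le4\sqrt D+R_{\max}$, which is exactly what the paper does. Because the two pieces are orthogonal, you even get $\|\vz^o_\parallel\|_2^2\le\|\vz_0\|_2^2+\|\vz^o_\vw\|_2^2$.

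A smaller point concerns the Cauchy--Schwarz step. Applied directly to $(\rmP_\vw^\perp\nabla H_\vw(\xi))^\top(\vz^o_\vw-\vz_0)$, it produces the factor $\|\vz^o_\vw-\vz_0\|_2$, which is larger than $\tau_r(\vw)$ allows. You need idempotence once more, writing the inner product as $\langle\rmP_\vw^\perp\nabla H_\vw(\xi),\rmP_\vw^\perp(\vz^o_\vw-\vz_0)\rangle$ before applying Cauchy--Schwarz. The paper does the same via $\vdelta_\perp=\rmP_\vw^\perp\vdelta_\perp$. With these two corrections your argument coincides with the paper's.
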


\begin{proof}
The left inequality holds because $\hat\vx_\vw\in\Phi_\theta(S_\vw)\subset\gX_\Phi$ and $\vx_\vw^\star$ maximizes $\vw^\top f$ over $\gX_\Phi$. For the right, split the gap through the proxy. Note, each proxy term is bounded by $\varepsilon_{\mathrm{prox}}$ because, for any $\vx\in\gX_\Phi$,
$$|\vw^\top(f-\hat f)(\vx)|\le\|\vw\|_1\,\|f(\vx)-\hat f(\vx)\|_\infty\le\varepsilon_{\mathrm{prox}}$$
by H\"older's inequality and $\|\vw\|_1=1$. We apply it at the oracle point $\vx_\vw^\star$ and at our point $\hat\vx_\vw$,
\[
  \vw^\top f(\vx_\vw^\star)-\vw^\top f(\hat\vx_\vw)
   =\underbrace{\vw^\top(f-\hat f)(\vx_\vw^\star)}_{\le\,\varepsilon_{\mathrm{prox}}\ \text{at }\vx_\vw^\star}
   +\underbrace{\bigl(\vw^\top\hat f(\vx_\vw^\star)-\vw^\top\hat f(\hat\vx_\vw)\bigr)}_{(\ast)}
   +\underbrace{\vw^\top(\hat f-f)(\hat\vx_\vw)}_{\le\,\varepsilon_{\mathrm{prox}}\ \text{at }\hat\vx_\vw} .
\]
Denoting $H_\vw=\vw^\top\hat f\circ\Phi_\theta$ and using $\vx_\vw^\star=\Phi_\theta(\vz^o_\vw)$, $\hat\vx_\vw=\Phi_\theta(\hat\vz_\vw)$, the middle term is a difference of noise-space objective values, 
\[
  (\ast)=H_\vw(\vz^o_\vw)-H_\vw(\hat\vz_\vw).
\]
Inserting the projected oracle noise vector $\vz^o_\parallel$ defined above,
\[
  (\ast)=\underbrace{\bigl[H_\vw(\vz^o_\vw)-H_\vw(\vz^o_\parallel)\bigr]}_{(a)}
        +\underbrace{\bigl[H_\vw(\vz^o_\parallel)-H_\vw(\hat\vz_\vw)\bigr]}_{(b)} .
\]
Term (a) is the value lost by moving off the active subspace, term (b) is the value the step leaves on the table inside the subspace. We bound them separately. 
For (a), let $\vdelta_\perp:=\vz^o_\vw-\vz^o_\parallel$. By the definition of $\vz^o_\parallel$,
\[\vdelta_\perp=(\vz^o_\vw-\vz_0)-\mU_\vw\mU_\vw^\top(\vz^o_\vw-\vz_0)=\rmP_\vw^\perp(\vz^o_\vw-\vz_0),
\]
so $\vdelta_\perp$ lies in the orthogonal complement of the active subspace and $\rmP_\vw^\perp\vdelta_\perp=\vdelta_\perp$ by idempotence. The segment $\{\vz^o_\parallel+t\vdelta_\perp:t\in[0,1]\}$ joins $\vz^o_\parallel$ to $\vz^o_\vw$. Both endpoints lie in $B'$, since $\vz^o_\vw\in B\subset B'$ and $\vz^o_\parallel=(\mI_D-\mU_\vw\mU_\vw^\top)\vz_0+\mU_\vw\mU_\vw^\top\vz^o_\vw$ gives $\|\vz^o_\parallel\|_2\le\|\vz_0\|_2+\|\vz^o_\vw\|_2\le2\sqrt D+(2\sqrt D+R_{\max})=4\sqrt D+R_{\max}$, and $B'$ is convex, so the whole segment lies in $B'$.

Since $H_\vw$ is $C^1$ (Assumption~\ref{ass:reg}), the fundamental theorem of calculus along this segment gives,
\[
  (a)=\int_0^1\bigl\langle\nabla H_\vw(\vz^o_\parallel+t\vdelta_\perp),\,\vdelta_\perp\bigr\rangle\,dt .
\]
Using $\vdelta_\perp=\rmP_\vw^\perp\vdelta_\perp$ and the symmetry of $\rmP_\vw^\perp$, the integrand is
\[
  \bigl\langle\nabla H_\vw(\cdot),\rmP_\vw^\perp\vdelta_\perp\bigr\rangle
   =\bigl\langle\rmP_\vw^\perp\nabla H_\vw(\cdot),\vdelta_\perp\bigr\rangle
   \le\|\rmP_\vw^\perp\nabla H_\vw(\cdot)\|_2\,\|\vdelta_\perp\|_2
   \le G_\perp(\vw)\,\|\vdelta_\perp\|_2 ,
\]
by Cauchy--Schwarz and the definition of $G_\perp$, the point $\vz^o_\parallel+t\vdelta_\perp$ being in $B'$.\\
Integrating over $t\in[0,1]$, we get,
\[
  (a)\le G_\perp(\vw)\,\|\vdelta_\perp\|_2
     =G_\perp(\vw)\,\|\rmP_\vw^\perp(\vz^o_\vw-\vz_0)\|_2
     =\tau_r(\vw).
\]
Similarly, for (b), by the definition of $\rho$,
\[
  \rho(\vw)=\bigl(H_\vw(\vz^o_\parallel)-H_\vw(\hat\vz_\vw)\bigr)_+\ \ge\ H_\vw(\vz^o_\parallel)-H_\vw(\hat\vz_\vw)=(b).
\]
Combining, $(\ast)=(a)+(b)\le\tau_r(\vw)+\rho(\vw)$, and substituting into the proxy split above gives
\[
  \vw^\top f(\vx_\vw^\star)-\vw^\top f(\hat\vx_\vw)
   \le\varepsilon_{\mathrm{prox}}+\tau_r(\vw)+\rho(\vw)+\varepsilon_{\mathrm{prox}} ,
\]
which is \eqref{eq:perweight}.
\end{proof}

The lemma is an error decomposition. It does not bound $\rho(\vw)$, which measures what a single steering step leaves inside the subspace, and $\tau_r(\vw)$ is related to the spectral tail only through its pointwise counterpart $G_\perp$.

\paragraph{From value to distance.}
Lemma~\ref{lem:perweight} bounds the gap in \emph{scalarized value}, but the theorem is about \emph{distance} between fronts, and a value bound alone does not give one. If the front contains a flat face orthogonal to $\vw$, every point on that face has the same scalarized value as the maximizer, so the value gap is zero while the distance can be as large as the face. Concretely, for two objectives with front the segment from $(1,0)$ to $(0,1)$ and $\vw=(\tfrac12,\tfrac12)$, Lemma~\ref{lem:perweight} is satisfied with $\epsilon=0$ by every point of the segment. To convert value into distance we therefore need the front to gain value as one moves away from the maximizer, which is a condition on the attainable set $\gY:=f(\gX_\Phi)$ in objective space, unrelated to anything in noise space. Convexity itself is not assumed, since $\gF^\star_\Phi$ is by definition the part of the front reached by linear scalarization, so what remains is curvature. Let's say $\vy_\vw^\star:=f(\vx_\vw^\star)$. Since the scalarization is linear, optimality of $\vy_\vw^\star$ is exactly the \textbf{\emph{variational inequality}},
\[\vw^\top(\vy_\vw^\star-\vy)\ge0 \quad \text{for all}\; \vy\in\gY,\] which is the statement that fails to separate points on a flat face. We strengthen it by a quadratic term,
\begin{equation}
  \vw^\top\bigl(\vy_\vw^\star-\vy\bigr)\;\ge\;\tfrac{\mu}{2}\,\|\vy_\vw^\star-\vy\|_2^2
  \qquad\text{for all }\vy\in\gY,\ \vw\in\simplex^{m-1},
  \label{eq:qgrowth}
\end{equation}
so that a point $\epsilon$-close in value is $\sqrt{2\epsilon/\mu}$-close in position. Because the objective $\vw^\top\vy$ is linear, the quadratic term cannot come from the objective, as it would in a strongly monotone variational inequality. It is a property of the set, namely that $\gY$ must bend away from each supporting hyperplane at rate $\mu$, and a sufficient condition is that $\gY$ be $\mu$-strongly convex \citep{vial1983}.

Thus, as the example above shows, the condition fails on any flat face of the front. We do not claim these are the only failures, but flat faces are the regime in which linear scalarization is already uninformative, so we expect the assumption to exclude little that a weight sweep could recover. The condition also makes $\vy_\vw^\star$ unique, because a second maximizer $\vy$ would have zero on the left of \eqref{eq:qgrowth} and $\tfrac{\mu}{2}\|\vy_\vw^\star-\vy\|_2^2>0$ on the right, so the Lipschitz dependence of $\vy_\vw^\star$ on $\vw$ assumed below is meaningful. Without \eqref{eq:qgrowth} the theorem below still holds in its first form, as an $\epsilon$-approximate front. With it, the per-weight bound lifts to a bound between the two fronts.

\begin{theorem}[\textbf{Front recovery up to the error terms}]\label{thm:recovery}
Suppose \eqref{eq:qgrowth} holds and $\vw\mapsto f(\vx_\vw^\star)$ is $L_{\mathcal P}$-Lipschitz on $\simplex^{m-1}$. Let $\epsilon:=\max_{\vw}\bigl[\tau_r(\vw)+\rho(\vw)\bigr]+2\varepsilon_{\mathrm{prox}}$. Sweeping $\vw$ over a grid $W\subset\simplex^{m-1}$ of mesh $\eta_W$, the produced front $\hat\gF=\{f(\hat\vx_\vw):\vw\in W\}$ is an $\epsilon$-approximate Pareto front, every produced point being within $\epsilon$ in scalarized value of $\gF^\star_\Phi$, and the Hausdorff distance satisfies
\begin{equation}
  d_H(\hat\gF,\gF^\star_\Phi) \le \sqrt{2\epsilon/\mu} + L_{\mathcal P}\,\eta_W  .
  \label{eq:hausdorff}
\end{equation}
In particular, when the proxy is exact ($\varepsilon_{\mathrm{prox}}=0$), steering is full-rank ($\tau_r\equiv0$), the step collects the in-subspace target ($\rho\equiv0$), and the grid is dense ($\eta_W\to0$), $d_H(\hat\gF,\gF^\star_\Phi)=0$.
\end{theorem}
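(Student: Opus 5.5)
The plan is to lift Lemma~\ref{lem:perweight} from a per-weight value bound to a distance between fronts, using the quadratic growth condition \eqref{eq:qgrowth} and the Lipschitz dependence of the oracle front on $\vw$.

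\emph{The $\epsilon$-approximate claim.} Lemma~\ref{lem:perweight} gives, for each $\vw\in W$,
\[
0\le \vw^\top(\vy_\vw^\star-\hat\vy_\vw)\le \tau_r(\vw)+\rho(\vw)+2\varepsilon_{\mathrm{prox}}\le\epsilon,
\]
where $\hat\vy_\vw:=f(\hat\vx_\vw)$. So every produced point is within $\epsilon$ in scalarized value of the oracle point $\vy_\vw^\star\in\gF^\star_\Phi$. This step needs no growth condition.

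\emph{Value to distance.} Since $\hat\vx_\vw\in\Phi_\theta(S_\vw)\subset\gX_\Phi$, we have $\hat\vy_\vw\in\gY$. Applying \eqref{eq:qgrowth} with $\vy=\hat\vy_\vw$ gives
\[
\tfrac{\mu}{2}\|\vy_\vw^\star-\hat\vy_\vw\|_2^2\le\epsilon, \qquad\text{so}\qquad \|\vy_\vw^\star-\hat\vy_\vw\|_2\le\sqrt{2\epsilon/\mu}\quad\text{for all }\vw\in W.
\]

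\emph{Hausdorff bound.} The two directions of $d_H$ are handled separately.
\begin{itemize}
\item Each $\hat\vy_\vw\in\hat\gF$ lies within $\sqrt{2\epsilon/\mu}$ of $\vy_\vw^\star\in\gF^\star_\Phi$, so $\sup_{\hat\gF}\operatorname{dist}(\cdot,\gF^\star_\Phi)\le\sqrt{2\epsilon/\mu}$.
\item For the converse, take any $\vy_{\vw'}^\star\in\gF^\star_\Phi$ with $\vw'\in\simplex^{m-1}$, and pick $\vw\in W$ with $\|\vw-\vw'\|\le\eta_W$, which the mesh definition provides. The triangle inequality gives
\[
\|\vy_{\vw'}^\star-\hat\vy_\vw\|\le\|\vy_{\vw'}^\star-\vy_\vw^\star\|+\|\vy_\vw^\star-\hat\vy_\vw\|\le L_{\mathcal P}\eta_W+\sqrt{2\epsilon/\mu}.
\]
\end{itemize}
Taking the maximum of the two one-sided distances yields \eqref{eq:hausdorff}. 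The norm in which the mesh and $L_{\mathcal P}$ are measured only needs to be the same one.

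\emph{Ideal case.} When $\varepsilon_{\mathrm{prox}}=0$, $\tau_r\equiv0$ and $\rho\equiv0$, we get $\epsilon=0$. The bound then reduces to $L_{\mathcal P}\eta_W$, which tends to $0$ as the grid becomes dense.

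\emph{Main obstacle.} The point needing care is that \eqref{eq:qgrowth} must apply to $\hat\vy_\vw$. This requires $\hat\vy_\vw\in\gY=f(\gX_\Phi)$, which holds because $S_\vw\subset B$ (using $\|\vz_0\|_2\le2\sqrt D$ and $R\le R_{\max}$). The same membership gives the nonnegativity in the first step, since $\vy_\vw^\star$ maximizes $\vw^\top f$ over $\gX_\Phi$.

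A secondary subtlety is that $\epsilon$ is defined as a maximum over $\vw$. I will read it as a maximum over $W$, or over the whole simplex if that supremum is finite, so that the uniform per-weight bound holds on every grid point.
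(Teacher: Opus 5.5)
Your proposal is correct and follows the paper's proof essentially step for step. Lemma~\ref{lem:perweight} gives the per-weight value gap $\le\epsilon$; membership $f(\hat\vx_\vw)\in\gY$ lets \eqref{eq:qgrowth} turn that gap into the distance bound $\sqrt{2\epsilon/\mu}$; and the reverse Hausdorff direction comes from the nearest grid weight, the $L_{\mathcal P}$-Lipschitz oracle front, and the triangle inequality. Your remarks on reading the maximum defining $\epsilon$ over $W$, and on the ideal case where the bound reduces to $L_{\mathcal P}\eta_W\to0$, make explicit points the paper leaves implicit.
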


\begin{proof}
Let's first fix $\vw\in W$. By Lemma~\ref{lem:perweight} and the definition of $\epsilon$,
\[0\;\le\;\vw^\top f(\vx_\vw^\star)-\vw^\top f(\hat\vx_\vw)\;\le\;\tau_r(\vw)+\rho(\vw)+2\varepsilon_{\mathrm{prox}}\;\le\;\epsilon ,\]
so every produced solution is within $\epsilon$ in scalarized value of $\gF^\star_{\Phi}$, which is the first claim. For the distance, note that $\hat\vx_\vw\in\gX_\Phi$, so $\vy=f(\hat\vx_\vw)\in\gY$ and \eqref{eq:qgrowth} applies to it,
\[
  \tfrac{\mu}{2}\,\|f(\vx_\vw^\star)-f(\hat\vx_\vw)\|_2^2\;\le\;\vw^\top\bigl(f(\vx_\vw^\star)-f(\hat\vx_\vw)\bigr)\;\le\;\epsilon .
\]
Rearranging, every produced point is within $\sqrt{2\epsilon/\mu}$ of an oracle point as,
\[
  \|f(\hat\vx_\vw)-f(\vx_\vw^\star)\|_2\;\le\;\sqrt{2\epsilon/\mu}\qquad\text{for all }\vw\in W .
\]
This bounds one side of the Hausdorff distance, $\sup_{\hat\vy\in\hat{\gF}}\operatorname{dist}(\hat\vy,\gF^\star_\Phi)\le\sqrt{2\epsilon/\mu}$.

\vspace{0.1\baselineskip}
Now, for the other side, fix any $\vw\in\simplex^{m-1}$ and let $\vw'\in W$ be a grid weight with $\|\vw-\vw'\|\le\eta_W$. We compare the oracle point at $\vw$ to the produced point at $\vw'$ through the oracle point at $\vw'$. By the triangle inequality,
\[
  \|f(\vx_\vw^\star)-f(\hat\vx_{\vw'})\|_2
   \;\le\;\underbrace{\|f(\vx_\vw^\star)-f(\vx_{\vw'}^\star)\|_2}_{(a)}
   +\underbrace{\|f(\vx_{\vw'}^\star)-f(\hat\vx_{\vw'})\|_2}_{(b)} .
\]
Term (a) is the movement of the oracle front between the two weights, and the Lipschitz assumption on $\vw\mapsto f(\vx_\vw^\star)$ gives,
\[(a)\;\le\;L_{\mathcal P}\,\|\vw-\vw'\|\;\le\;L_{\mathcal P}\,\eta_W .\]
Whereas, Term (b) is the per-weight distance at the grid weight $\vw'$, which we just bounded above,
\[(b)\;\le\;\sqrt{2\epsilon/\mu}.\]
Combining both terms, 
\[\|f(\vx_\vw^\star)-f(\hat\vx_{\vw'})\|_2\;\le\;L_{\mathcal P}\,\eta_W+\sqrt{2\epsilon/\mu},\]
Therefore, $\sup_{\vy\in\gF^\star_\Phi}\operatorname{dist}(\vy,\hat{\gF})\le\sqrt{2\epsilon/\mu}+L_{\mathcal P}\eta_W$.
\end{proof}

\paragraph{Scalarization agnostic.}
\emph{The convex front is a property of linear scalarization, not of the method.} Stated for linear weights, the theorem recovers the part of the front reached by supporting hyperplanes, which is classical and holds for any linear-weight sweep. Steering itself works with any scalarization. Theorem~\ref{thm:ascent} never uses linearity in $\vw$, so it applies to any smooth scalarization. Non-smooth ones such as Tchebycheff are handled by the same smoothing argument used for the proxies. The softmin Tchebycheff of \eqref{eq:tier3_scalarizations} is smooth and lies within $\nu\log m$ of the Tchebycheff value everywhere, so Theorem~\ref{thm:ascent} carries over up to that error. Empirically, all four scalarizations of Appendix~\ref{app:tier3_scala} ascend and reach the same front quality. The precomputation also carries over unchanged, because the cached per-objective gradients $\{\nabla\phi_a(\vz_i)\}$ still suffice to form $\mM_\vw$ at any weight with no new decodes. Extending Theorem~\ref{thm:recovery} beyond linear scalarization, where it would also cover non-convex parts of the front, is left to future work.

The one thing that changes is the cost of assembling $\mM_\vw$. Under linear scalarization the gradient $\nabla H_\vw=\sum_a w_a\nabla g_a$ is linear in $\vw$, so $\mM_\vw$ recombines from the $m(m{+}1)/2$ cached blocks by the bilinear rule $\sum_{a,b}w_a w_b\mC_{ab}$ (~\eqref{eq:decomp}), costing $O(m^2 D^2)$ per weight independent of $n$. Under Tchebycheff the gradient is not linear in $\vw$, so the blocks no longer recombine and $\mM_\vw$ is rebuilt as the sample average of $n$ rank-one outer products, costing $O(n D^2)$ per weight. Since typically $n\gg m^2$, the trade-off between the two scalarizations is efficiency, not correctness.

The proxy error $\varepsilon_{\mathrm{prox}}$ is the offline barrier, since it is the only quantity tying the recovered front to the true one, and it cannot be driven to zero without querying $f$. The reach shortfall $\rho(\vw)$ is the operators' entry point, since it is positive where the in-subspace target lies beyond the reach $R$ or where the step stops short of it, which is the regime the guidance operators of \S~\ref{sec:guidance} are built to reduce.

\subsection{Guided Noise-Space RFM Steering}
\label{sec:complementary}
Noise-space-only steering saturates in the two places identified above, where the mean gradient leaves the active subspace and where the per-$\vw$ optimum lies beyond the reach $R$. There we add a guidance term $\eta(t)\,\mathcal G_\vw$ to the velocity for $t\ge t_{\mathrm{start}}$, using one proxy gradient per guided ODE step and never differentiating through the ODE solve (Algorithms~\ref{alg:data-adaptive} and~\ref{alg:pareto-aware}). Gradients are taken with respect to $\vx_t$, through the denoised estimate $\hat\vx_1(\vx_t)=\vx_t+(1-t)\vv_\theta(\vx_t,t)$ for RFM-DAMG and RFM-Cone and directly for RFM-APG. Below, $\hat f_a$, $h_\vw=\vw^\top\hat\vf$ and $\widehat\mJ=[\nabla\hat f_1\cdots\nabla\hat f_m]$ denote the proxies and their Jacobian as functions of $\vx_t$, so the results guarantee first-order improvement of the proxies as $\vx_t$ moves, not of the final decoded sample.

The operators fall into the two families of Section~\ref{sec:guidance}. The data-adaptive operators RFM-APG and RFM-DAMG precondition the scalarized gradient, and the Pareto-aware RFM-Cone combines the per-objective gradients into a common ascent direction.

\begin{algorithm}[h]
\small
\caption{Data-adaptive guidance (RFM-APG and RFM-DAMG)}
\label{alg:data-adaptive}
\begin{algorithmic}[1]
\Require Steered noise $\vz_\vw$ (Algorithm~\ref{alg:steer}), flow $\vv_\theta$, proxies $\hat\vf$, weight $\vw$, onset $t_{\mathrm{start}}$, strength $\eta_{\max}$ (APG) or $\kappa$ (DAMG), cached $\mU_\vw,\boldsymbol\beta$ (DAMG), tangent onset $t_T$, neighbors $k$, rank floor $\tau_{\mathrm{pca}}$ (DAMG)
\Ensure Guided sample $\hat\vx_\vw$
\State $\vx_0\gets\vz_\vw$
\State \textbf{if} RFM-APG \textbf{then} $\tilde\mU,\tilde{\boldsymbol\beta}\gets$ Algorithm~\ref{alg:steer} on $\vx_{t_{\mathrm{start}}}$ in place of $\vz$ \Comment{once per task}
\For{each ODE step $t$}
  \State $\vv\gets\vv_\theta(\vx_t,t)$
  \If{$t<t_{\mathrm{start}}$}
    \State $\vx_t\gets\vx_t+\Delta t\,\vv$
  \ElsIf{RFM-APG}
    \State $\mathcal G_\vw\gets\tilde\mU\operatorname{diag}(\tilde{\boldsymbol\beta})\tilde\mU^\top\nabla_{\vx_t}h_\vw(\vx_t)$ \Comment{projected or adaptive}
    \State $\eta(t)\gets\eta_{\max}(t-t_{\mathrm{start}})/(1-t_{\mathrm{start}})$
    \State $\vx_t\gets\vx_t+\Delta t\,(\vv+\eta(t)\,\mathcal G_\vw)$
  \Else \Comment{RFM-DAMG}
    \State $\mathcal G_\vw\gets\rmR_\vw\nabla_{\vx_t}h_\vw(\hat\vx_1(\vx_t))$, \ $\rmR_\vw=\mU_\vw\operatorname{diag}(\boldsymbol\beta)\mU_\vw^\top$ \Comment{cached spectrum}
    \If{$t\ge t_T$ and the local rank is certified}
      \State $T\gets$ top local PCA directions of the $k$ nearest samples in the batch
      \State $\mathcal G_\vw\gets\rmP_T\,\mathcal G_\vw$ \Comment{manifold}
    \EndIf
    \State $\eta(t)\gets\min\bigl(\kappa\|\vv\|/(\|\mathcal G_\vw\|+\varepsilon),\,\eta_{\max}\bigr)$
    \State $\vx_t\gets\vx_t+\Delta t\,(\vv+\eta(t)\,\mathcal G_\vw)$
  \EndIf
\EndFor
\State \Return $\hat\vx_\vw\gets\vx_1$
\end{algorithmic}
\end{algorithm}

\subsubsection{Preconditioned ascent: adaptive, projected, manifold}
Each operator forms a refinement direction by applying a \emph{data-adaptive} preconditioner $Q$ to the proxy gradient, where $Q$ is read from structure the data reveals, namely the AGOP spectrum for the projected and adaptive preconditioners and the local data covariance for the manifold preconditioner. RFM-APG uses the projected or adaptive preconditioner on the active subspace rebuilt at $t_{\mathrm{start}}$, and RFM-DAMG applies the manifold preconditioner after the adaptive one (Algorithm~\ref{alg:data-adaptive}). Ascent holds for any $Q\succeq0$ and the data shapes the step, giving stronger and better-directed guidance where structure exists and reducing to plain ascent where it does not.

\begin{proposition}[\textbf{Preconditioned ascent}]\label{prop:precond}
Let $Q\succeq0$. The direction $Q\,\nabla h_\vw(\vx)$ is an ascent direction for $h_\vw$,
\begin{equation}
  \bigl\langle\nabla h_\vw(\vx),\,Q\,\nabla h_\vw(\vx)\bigr\rangle=\|\nabla h_\vw(\vx)\|_Q^2\ge0 ,
  \label{eq:precond}
\end{equation}
strictly positive unless $\nabla h_\vw(\vx)\perp\operatorname{range}(Q)$. The three preconditioners are symmetric positive semidefinite. The \emph{projected} one is $Q=\mU\mU^\top$, the \emph{adaptive} one $Q=\rmR_\vw:=\mU_\vw\operatorname{diag}(\boldsymbol\beta)\mU_\vw^\top$ with $\beta_j=(\lambda_j^{(\vw)}/\lambda_1^{(\vw)})^{\alpha}$ reduces to the projected one on the retained directions at $\alpha=0$, and the \emph{manifold} one $Q=\rmP_T$ projects onto a locally estimated tangent space $T$ of the data manifold.
\end{proposition}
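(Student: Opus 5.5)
The plan has two parts. First we prove the ascent identity \eqref{eq:precond} and its strictness criterion for an arbitrary $Q\succeq0$. Then we check that each of the three named preconditioners is symmetric positive semidefinite, so the general statement applies to them.

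\textbf{General ascent.} Write $\vg:=\nabla h_\vw(\vx)$. Since $Q$ is symmetric positive semidefinite, it has a square root $Q^{1/2}$, and
\[
\langle\vg,Q\vg\rangle=\vg^\top Q\vg=\|Q^{1/2}\vg\|_2^2=:\|\vg\|_Q^2\ge0 .
\]
For strictness we use the spectral decomposition $Q=\sum_k q_k\vv_k\vv_k^\top$ with $q_k>0$ on its range. This gives
\[
\vg^\top Q\vg=\sum_k q_k\langle\vv_k,\vg\rangle^2 ,
\]
which vanishes iff $\langle\vv_k,\vg\rangle=0$ for every $k$, that is, iff $\vg\perp\operatorname{range}(Q)$. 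Equivalently, $\ker Q^{1/2}=\ker Q=\operatorname{range}(Q)^\perp$. The first-order statement is that the directional derivative of $h_\vw$ along $Q\vg$ is nonnegative, and positive off that degenerate set. No higher-order claim is needed here, although a step-size statement would follow from the same Taylor argument as in Theorem~\ref{thm:ascent}.

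\textbf{The three preconditioners.}
\emph{Projected.} For $Q=\mU\mU^\top$ with $\mU$ having orthonormal columns, $Q$ is symmetric and idempotent, and $\vv^\top Q\vv=\|\mU^\top\vv\|_2^2\ge0$.

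\emph{Adaptive.} For $Q=\rmR_\vw=\mU_\vw\operatorname{diag}(\boldsymbol\beta)\mU_\vw^\top$, symmetry is immediate, and
\[
\vv^\top\rmR_\vw\vv=\sum_j\beta_j\langle\vu_\vw^{(j)},\vv\rangle^2\ge0,
\]
because $\beta_j=(\lambda_j^{(\vw)}/\lambda_1^{(\vw)})^\alpha\ge0$; this uses $\lambda_j^{(\vw)}\ge0$, since $\mM_\vw$ is a second-moment matrix. For the reduction at $\alpha=0$, every retained $\beta_j=1$ (reading $0^0=1$ or restricting to $\lambda_j>0$), so $\rmR_\vw=\mU_\vw\mU_\vw^\top$, the projected preconditioner.

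\emph{Manifold.} $\rmP_T=\mV_T\mV_T^\top$, where $\mV_T$ is the orthonormal matrix of top local PCA directions. It is an orthogonal projector and hence PSD by the projected case.

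\textbf{Main obstacle.} The only delicate point is the DAMG composition $\rmP_T\rmR_\vw$. This product is generally not symmetric, so the proposition's guarantee does not apply to the product directly. We would note this explicitly and state the proposition per preconditioner, as the statement does. If a guarantee is wanted for the composite, we would substitute the symmetric $\rmR_\vw^{1/2}\rmP_T\rmR_\vw^{1/2}$ (or $\rmP_T\rmR_\vw\rmP_T$), which is PSD by congruence.

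We also need to be precise about which gradient is meant. For DAMG and Cone the gradient is taken through $\hat\vx_1(\vx_t)$, so $h_\vw$ should be read as the composed function of $\vx_t$. The identity holds verbatim with that gradient.
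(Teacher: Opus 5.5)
Your proposal is correct and follows essentially the same route as the paper: nonnegativity of $\vg^\top Q\vg$ with equality iff $Q\vg=\vzero$ (i.e. $\vg\perp\operatorname{range}(Q)$), the directional-derivative reading of that quadratic form, and positive semidefiniteness of each preconditioner as a weighted Gram matrix with $\beta_j\ge0$ or an orthogonal projector. Your caveat that the DAMG product $\rmP_T\rmR_\vw$ is not symmetric, so the proposition covers each factor but not the composite, is the same remark the paper makes right after the proposition.
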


\begin{proof}
Since $Q\succeq0$, $\langle\vg,Q\vg\rangle=\vg^\top Q\vg\ge0$ for every $\vg$, with equality iff $Q\vg=\vzero$, i.e.\ $\vg\perp\operatorname{range}(Q)$; apply this with $\vg=\nabla h_\vw(\vx)$. The directional derivative of $h_\vw$ along $Q\nabla h_\vw$ equals the left side of \eqref{eq:precond}, so a small step ascends. Each listed $Q$ is a Gram or projection matrix, hence positive semi-definite: $\mU_\vw\operatorname{diag}(\boldsymbol\beta)\mU_\vw^\top$ has eigenvalues $\beta_j\ge0$, and $\mU_\vw\mU_\vw^\top$ and $\rmP_T$ are orthogonal projectors.
\end{proof}

\begin{proposition}[\textbf{Adaptive weighting concentrates the step by data-learned relevance}]\label{prop:adaptive}
For the adaptive operator, the ascent gain splits over the data-learned directions,
\begin{equation}
\bigl\langle\vg,\,\rmR_\vw\vg\bigr\rangle=\sum_{j}\beta_j\,\bigl(\vu_\vw^{(j)\top}\vg\bigr)^2 ,
  \qquad \vg=\nabla h_\vw(\vx),
  \label{eq:adaptive}
\end{equation}
so the step places weight $\beta_j$ on direction $\vu_\vw^{(j)}$. Because $\lambda_j^{(\vw)}$ is the objective's mean squared sensitivity along $\vu_\vw^{(j)}$ (\fullref{Proposition}{prop:optframe}{app:prop:optframe}), the weights $\beta_j=(\lambda_j^{(\vw)}/\lambda_1^{(\vw)})^{\alpha}$ concentrate the step on the directions the objective most responds to, and this concentration increases monotonically with $\alpha$: $\partial_\alpha\beta_j=\beta_j\log(\lambda_j^{(\vw)}/\lambda_1^{(\vw)})\le0$ for every $\lambda_j^{(\vw)}\le\lambda_1^{(\vw)}$. When the retained spectrum is flat the weights are uniform and the operator reduces to plain projected ascent, and does not reweight directions the data does not distinguish. The weighting is thus strong exactly when the AGOP spectrum is peaked, that is when the data reveals a few dominant objective-sensitive directions.
\end{proposition}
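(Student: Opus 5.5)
The identity \eqref{eq:adaptive} follows from the spectral form of $\rmR_\vw$. I substitute $\rmR_\vw=\sum_j\beta_j\,\vu_\vw^{(j)}\vu_\vw^{(j)\top}$, the sum running over the retained orthonormal eigenvectors, into $\vg^\top\rmR_\vw\vg$. Linearity then gives $\sum_j\beta_j\,\vg^\top\vu_\vw^{(j)}\vu_\vw^{(j)\top}\vg=\sum_j\beta_j(\vu_\vw^{(j)\top}\vg)^2$. This also identifies $\beta_j$ as the multiplier on the component of $\vg$ along $\vu_\vw^{(j)}$: the step is $\rmR_\vw\vg=\sum_j\beta_j(\vu_\vw^{(j)\top}\vg)\vu_\vw^{(j)}$, so each direction's contribution to both the step and the gain is scaled by $\beta_j$. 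Nonnegativity of every summand recovers Proposition~\ref{prop:precond} as a special case.

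For the relevance interpretation I invoke Proposition~\ref{prop:optframe} with $r=1$, applied along each single eigenvector. It gives $\E_{\vz\sim\gN}[(\vu_\vw^{(j)\top}\nabla H_\vw)^2]=\vu_\vw^{(j)\top}\mM_\vw\vu_\vw^{(j)}=\lambda_j^{(\vw)}$. Hence $\beta_j=(\lambda_j^{(\vw)}/\lambda_1^{(\vw)})^{\alpha}$ is a monotone increasing function of the mean squared sensitivity along $\vu_\vw^{(j)}$, and it takes values in $[0,1]$ with $\beta_1=1$.

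For monotone concentration I write $\beta_j=\exp\bigl(\alpha\log(\lambda_j^{(\vw)}/\lambda_1^{(\vw)})\bigr)$ whenever $\lambda_j^{(\vw)}>0$. Differentiating gives $\partial_\alpha\beta_j=\beta_j\log(\lambda_j^{(\vw)}/\lambda_1^{(\vw)})$, which is $\le0$ because the ratio is at most $1$. When $\lambda_j^{(\vw)}=0$, $\beta_j\equiv0$ for $\alpha>0$, so the derivative is trivially zero. I would also state what "concentration" means concretely. For $\lambda_j<\lambda_k$, the relative weight $\beta_j/\beta_k=(\lambda_j/\lambda_k)^{\alpha}$ is decreasing in $\alpha$, so larger $\alpha$ shifts mass toward the directions with larger eigenvalues. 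This ratio statement is the cleanest form of the claim.

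The flat-spectrum case follows because $\lambda_j^{(\vw)}=\lambda_1^{(\vw)}$ on the retained set forces $\beta_j=1$ there. Then $\rmR_\vw=\mU_\vw\mU_\vw^\top$, which is the projected preconditioner of Proposition~\ref{prop:precond}. The main obstacle is not analytic. It is making the informal closing claim ("strong exactly when peaked") precise enough to prove. I would formalize it through the ratio statement: the weights deviate from uniform exactly to the extent that the eigenvalue ratios deviate from $1$. Equivalently, $\max_j|1-\beta_j|=1-(\lambda_{\min}/\lambda_1)^{\alpha}$ over the retained set, which vanishes iff the spectrum is flat. I would keep the rest of that sentence as interpretation rather than a theorem.
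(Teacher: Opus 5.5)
Your proposal is correct and follows the paper's proof essentially step by step. It uses the spectral expansion $\rmR_\vw\vg=\sum_j\beta_j(\vu_\vw^{(j)\top}\vg)\vu_\vw^{(j)}$ for the identity, the rewrite $\beta_j=\exp\bigl(\alpha\log(\lambda_j^{(\vw)}/\lambda_1^{(\vw)})\bigr)$ for monotonicity in $\alpha$, and $\beta_j=1$ on a flat retained spectrum to get $\rmR_\vw=\mU_\vw\mU_\vw^\top$. Your additions are sound refinements of the same argument rather than a different route: the $\lambda_j^{(\vw)}=0$ case (which the paper skips by asserting $\beta_j>0$), the ratio form $\beta_j/\beta_k=(\lambda_j^{(\vw)}/\lambda_k^{(\vw)})^{\alpha}$, and the deviation $1-(\lambda_r^{(\vw)}/\lambda_1^{(\vw)})^{\alpha}$ as a precise version of the ``peaked'' claim.
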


\begin{proof}
Let's denote $\vg:=\nabla h_\vw(\vx)$. Expanding the adaptive operator,
\[
  \rmR_\vw\vg=\sum_j\beta_j\,\vu_\vw^{(j)}\vu_\vw^{(j)\top}\vg
   =\sum_j\beta_j\,\bigl(\vu_\vw^{(j)\top}\vg\bigr)\,\vu_\vw^{(j)} ,
\]
so taking the inner product with $\vg$ and using $\vg^\top\vu_\vw^{(j)}=\vu_\vw^{(j)\top}\vg$,
\[
  \bigl\langle\vg,\rmR_\vw\vg\bigr\rangle
   =\sum_j\beta_j\,\bigl(\vu_\vw^{(j)\top}\vg\bigr)\bigl(\vg^\top\vu_\vw^{(j)}\bigr)
   =\sum_j\beta_j\,\bigl(\vu_\vw^{(j)\top}\vg\bigr)^2 ,
\]
which is \eqref{eq:adaptive}. Next, to prove that the weights are monotone in $\alpha$, if we write the weight as an exponential,
\[
  \beta_j=\Bigl(\frac{\lambda_j^{(\vw)}}{\lambda_1^{(\vw)}}\Bigr)^{\alpha}
   =\exp\!\Bigl(\alpha\log\frac{\lambda_j^{(\vw)}}{\lambda_1^{(\vw)}}\Bigr),
\]
and differentiate in $\alpha$,
\[
  \partial_\alpha\beta_j
   =\exp\!\Bigl(\alpha\log\frac{\lambda_j^{(\vw)}}{\lambda_1^{(\vw)}}\Bigr)\cdot\log\frac{\lambda_j^{(\vw)}}{\lambda_1^{(\vw)}}
   =\beta_j\log\frac{\lambda_j^{(\vw)}}{\lambda_1^{(\vw)}} .
\]
Since $\beta_j>0$ and $\lambda_j^{(\vw)}\le\lambda_1^{(\vw)}$ gives $\log(\lambda_j^{(\vw)}/\lambda_1^{(\vw)})\le0$, we get $\partial_\alpha\beta_j\le0$, with equality only for $j=1$, therefore, every weight other than the leading one decreases as $\alpha$ grows. \\
Finally, if the retained spectrum is flat, $\lambda_j^{(\vw)}=\lambda_1^{(\vw)}$ for all retained $j$, then $\beta_j=1^{\alpha}=1$ for every retained $j$ regardless of $\alpha$, and
\[
  \rmR_\vw=\sum_{j\le r}\vu_\vw^{(j)}\vu_\vw^{(j)\top}=\mU_\vw\mU_\vw^\top ,
\]
which is the projected operator.
\end{proof}
RFM-DAMG applies $\rmP_T$ after $\rmR_\vw$, where $T$ is estimated by local PCA over the $k$ nearest samples in the batch. The local rank $r$ is the median over samples of the number of singular values at least $\tau_{\mathrm{pca}}$ times the largest, and it is \emph{certified} when $1\le r<\min(k,D)$ and $k\ge r+2$, so the neighborhoods are genuinely lower-dimensional and hold enough points to estimate $T$. The product $\rmP_T\rmR_\vw$ is not symmetric in general, so Proposition~\ref{prop:precond} applies to each factor but not to the product, which ascends whenever $\langle\vg,\rmP_T\rmR_\vw\vg\rangle\ge0$. Without certification the operator reduces to $\rmR_\vw\vg$ and the proposition applies directly. The proposition also does not depend on where $Q$ comes from, so it holds even though $\rmR_\vw$ is built from noise-space geometry and applied to a design-space gradient.
\begin{algorithm}[h]
\small
\caption{Pareto-aware guidance (RFM-Cone)}
\label{alg:pareto-aware}
\begin{algorithmic}[1]
\Require Steered noise $\vz_\vw$ (Algorithm~\ref{alg:steer}), flow $\vv_\theta$, proxies $\hat\vf$, weight $\vw$, onset $t_{\mathrm{start}}$, strength $\kappa$
\Ensure Guided sample $\hat\vx_\vw$
\State $\vx_0\gets\vz_\vw$
\For{each ODE step $t$}
  \State $\vv\gets\vv_\theta(\vx_t,t)$
  \If{$t<t_{\mathrm{start}}$}
    \State $\vx_t\gets\vx_t+\Delta t\,\vv$
  \Else
    \State $\widehat\mJ\gets[\nabla_{\vx_t}\hat f_a(\hat\vx_1(\vx_t))]_{a=1}^m$, normalize its columns to get $\widehat\mJ_u$
    \State $\boldsymbol\pi\gets$ proxy scores at $\hat\vx_1$ shifted by their minimum and normalized to sum to one
    \State $\vu_{\mathrm{pref}}\gets\widehat\mJ_u(2\vw-\boldsymbol\pi)$ \Comment{pulls hardest on lagging objectives}
    \State $\vd\gets\Pi_{\mathcal K}(\vu_{\mathrm{pref}})$ with $\mathcal K=\{\vs\,|\,\widehat\mJ_u^\top\vs\ge\vzero\}$ \Comment{exact active-set projection}
    \If{$\vd\approx\vzero$}
      \State $\vd\gets$ min-norm point of the convex hull of the columns of $\widehat\mJ_u$ \Comment{MGDA fallback}
    \EndIf
    \State $\eta(t)\gets\kappa\|\vv\|/(\|\vd\|+\varepsilon)$
    \State $\vx_t\gets\vx_t+\Delta t\,(\vv+\eta(t)\,\vd)$
  \EndIf
\EndFor
\State \Return $\hat\vx_\vw\gets\vx_1$
\end{algorithmic}
\end{algorithm}

\begin{restated}{Theorem~\ref{thm:cone}}[\textbf{Cone projection is a per-objective Pareto improvement}]\phantomsection\label{app:thm:cone}
Let $\mathcal{K}(\vx):=\{\vs:\langle\nabla\hat f_a(\vx),\vs\rangle\ge0\ \forall a\}$ be the common-ascent cone, and let $\vd_{\mathcal K}:=\Pi_{\mathcal{K}(\vx)}(\vu_{\mathrm{pref}})$ be the Euclidean projection of a preference direction $\vu_{\mathrm{pref}}$ onto $\mathcal{K}(\vx)$. The cone operator takes the step direction $\vd=\vd_{\mathcal K}$ if $\vd_{\mathcal K}\neq\vzero$, and otherwise the min-norm direction $\vd=\arg\min\{\|\vv\|_2:\vv\in\operatorname{conv}\{\nabla\hat f_a(\vx)\}\}$ of MGDA~\citep{desideri2012mgda}. Then:
\begin{enumerate}
\item[(i)] \emph{(First-order improvement)} $\vd\in\mathcal{K}(\vx)$, so $\langle\nabla\hat f_a(\vx),\vd\rangle\ge0$ for all $a$. Let each $\hat f_a$ be $L$-smooth and have $c_a:=\langle\nabla\hat f_a(\vx),\vd\rangle$. An objective with $c_a>0$ improves exactly, $\hat f_a(\vx+\bar\eta\vd)\ge\hat f_a(\vx)$, for every $0<\bar\eta\le2c_a/(L\|\vd\|_2^2)$. An objective with $c_a=0$, one whose constraint is tight after projection, is flat to first order and loses at most $\tfrac12L\bar\eta^2\|\vd\|_2^2$.
Here $\bar\eta$ is the displacement per guided ODE step, $\bar\eta=\Delta t\,\eta(t)$ in Algorithm~\ref{alg:pareto-aware}.
\item[(ii)] \emph{(Existence)} $\mathcal{K}(\vx)$ contains a strictly ascending direction if and only if $\vzero\notin\operatorname{conv}\{\nabla\hat f_a(\vx)\}$, i.e.\ $\vx$ is not Pareto-stationary.
\item[(iii)] \emph{(Graceful stop)} The min-norm direction is $\vzero$ if and only if $\vx$ is Pareto-stationary. Hence when the projection vanishes at a Pareto-stationary point, $\vd=\vzero$, the guidance term is switched off, and the decode continues with $\vv_\theta$ alone.
\end{enumerate}
\end{restated}

\begin{proof}
\emph{(i)} The cone $\mathcal{K}(\vx)$ is an intersection of the closed half-spaces $\{\vs:\langle\nabla\hat f_a(\vx),\vs\rangle\ge0\}$, each containing the origin, so it is a closed convex cone. The Euclidean projection onto a closed convex set lands in that set, hence $\vd_{\mathcal K}\in\mathcal{K}(\vx)$. For the fallback, let $\vd_{\mathrm{mn}}$ be the min-norm element of the compact convex set $\operatorname{conv}\{\nabla\hat f_a(\vx)\}$. Optimality of $\vd_{\mathrm{mn}}$ over that set gives, for every $\vv$ in it,
\[
  \langle\vd_{\mathrm{mn}},\ \vv-\vd_{\mathrm{mn}}\rangle\;\ge\;0
  \qquad\Longrightarrow\qquad
  \langle\vd_{\mathrm{mn}},\vv\rangle\;\ge\;\|\vd_{\mathrm{mn}}\|_2^2\;\ge\;0 ,
\]
and taking $\vv=\nabla\hat f_a(\vx)$ shows $\vd_{\mathrm{mn}}\in\mathcal{K}(\vx)$. In either branch $\vd\in\mathcal{K}(\vx)$, i.e.\ $c_a\ge0$ for all $a$.

Now, if we let each $\hat f_a$ be $L$-smooth. By the descent lemma along the direction $\vd$,
\[
  \hat f_a(\vx+\bar\eta\vd)\;\ge\;\hat f_a(\vx)+\bar\eta\,c_a-\tfrac{L}{2}\,\bar\eta^2\|\vd\|_2^2 .
\]
If $c_a>0$, the right side is at least $\hat f_a(\vx)$ whenever $\bar\eta\,c_a\ge\tfrac{L}{2}\bar\eta^2\|\vd\|_2^2$, i.e.\ whenever $\bar\eta\le2c_a/(L\|\vd\|_2^2)$, which is the exact improvement. If $c_a=0$, the linear term vanishes and the display reduces to $\hat f_a(\vx+\bar\eta\vd)\ge\hat f_a(\vx)-\tfrac{L}{2}\bar\eta^2\|\vd\|_2^2$, so the objective is flat to first order and loses at most a second-order amount.

Therefore, Part (i) holds because of how the operator is built, not because of guidance in general. The step is projected onto $\mathcal{K}(\vx)$, and every direction in $\mathcal{K}(\vx)$ has non-negative directional derivative for every objective by definition, so no objective can be traded for another to first order. Also, the same argument holds if the gradients are normalized to unit length, since this rescales each constraint of $\mathcal K(\vx)$ by a positive factor and leaves the cone and the Pareto-stationarity condition unchanged.

\vspace{1\baselineskip}
\emph{(ii)} We need to show that a direction along which every objective strictly increases exists exactly when the gradients cannot be balanced against each other, that is, when no convex combination of them is zero. The two directions of the equivalence are the two arguments below.

Suppose first that $\vzero\in\operatorname{conv}\{\nabla\hat f_a(\vx)\}$, so $\vzero=\sum_a\lambda_a\nabla\hat f_a(\vx)$ for some $\boldsymbol\lambda\in\simplex^{m-1}$. Then for any $\vs$,
\[
  \sum_a\lambda_a\,\langle\nabla\hat f_a(\vx),\vs\rangle
   =\Bigl\langle\sum_a\lambda_a\nabla\hat f_a(\vx),\ \vs\Bigr\rangle
   =\langle\vzero,\vs\rangle=0 ,\]
so the inner products cannot all be strictly positive and no strictly ascending direction exists. Conversely, suppose $\vzero\notin\operatorname{conv}\{\nabla\hat f_a(\vx)\}$. This set is compact and convex, so by the separating hyperplane theorem there is $\vs$ with $\langle\vv,\vs\rangle>0$ for every $\vv$ in it, in particular $\langle\nabla\hat f_a(\vx),\vs\rangle>0$ for all $a$, and $\vs\in\mathcal{K}(\vx)$ is strictly ascending.

\emph{(iii)} We need to show that the fallback direction is zero exactly at Pareto-stationary points, so that the operator does nothing there. Recall $\vd_{\mathrm{mn}}$ is the min-norm element of $\operatorname{conv}\{\nabla\hat f_a(\vx)\}$, so
\[
  \|\vd_{\mathrm{mn}}\|_2=\min\bigl\{\|\vv\|_2:\vv\in\operatorname{conv}\{\nabla\hat f_a(\vx)\}\bigr\}.
\]
If $\vx$ is Pareto-stationary, then $\vzero$ lies in this set, so the minimum is $0$ and $\vd_{\mathrm{mn}}=\vzero$. If $\vx$ is not Pareto-stationary, then $\vzero$ does not lie in the set, so every element has positive norm and $\vd_{\mathrm{mn}}\neq\vzero$. Combining the two cases, $\vd_{\mathrm{mn}}=\vzero$ iff $\vx$ is Pareto-stationary.

Now, when the projection vanishes at such a point, the operator takes the fallback, $\vd=\vd_{\mathrm{mn}}=\vzero$. The guidance term $\eta(t)\,\mathcal{G}_\vw$ in Algorithm~\ref{alg:steer} is then zero, since the norm-matched strength $\eta(t)=\kappa\|\vv_\theta\|/(\|\mathcal G_\vw\|+\varepsilon)$ uses a small constant $\varepsilon>0$ in the denominator, and the decode is the unmodified flow $\vv_\theta$.
\end{proof}

\newpage
\section{Additional Results} 
\subsection{100th Percentile Results}
\label{app:hv100}
Tables~\ref{tab:hv_morl_proxy}--\ref{tab:hv_zdt_proxy} report the 100th-percentile hypervolume of the full returned set, from which the average ranks of Table~\ref{tab:avg_rank} are computed, for MORL (Table~\ref{tab:hv_morl_proxy}), C-10/MOP (Table~\ref{tab:hv_monas1_proxy}), IN-1K/MOP (Table~\ref{tab:hv_monas2_proxy}), RE (Tables~\ref{tab:hv_re1_proxy} and~\ref{tab:hv_re2_proxy}), DTLZ (Table~\ref{tab:hv_dtlz_proxy}), and ZDT (Table~\ref{tab:hv_zdt_proxy}).
\begin{table}[H]\centering
\caption{100th-percentile hypervolume results for MORL. For each task, methods within one standard deviation (or $5\times10^{-4}$) of the best are bolded and the best of the remaining methods is underlined; the shaded row marks the RFM variant with the most task-best results overall. }
\label{tab:hv_morl_proxy}
\small\setlength{\tabcolsep}{6pt}

\end{table}
\begin{table}[H]\centering
\caption{100th-percentile hypervolume results for MO-NAS (Part 1, C-10/MOP). For each task, methods within one standard deviation (or $5\times10^{-4}$) of the best are bolded and the best of the remaining methods is underlined; the shaded row marks the RFM variant with the most task-best results overall.}
\label{tab:hv_monas1_proxy}
\scriptsize\setlength{\tabcolsep}{2.5pt}\resizebox{\textwidth}{!}{%
%
}
\end{table}

\begin{table}[H]\centering
\caption{100th-percentile hypervolume results for MO-NAS (Part 2, IN-1K/MOP). For each task, methods within one standard deviation (or $5\times10^{-4}$) of the best are bolded and the best of the remaining methods is underlined; the shaded row marks the RFM variant with the most task-best results overall.}
\label{tab:hv_monas2_proxy}
\scriptsize\setlength{\tabcolsep}{2.5pt}\resizebox{\textwidth}{!}{%
%
}
\end{table}
\begin{table}[H]\centering
\caption{100th-percentile hypervolume results for RE (Part 1). For each task, methods within one standard deviation (or $5\times10^{-4}$) of the best are bolded and the best of the remaining methods is underlined; the shaded row marks the RFM variant with the most task-best results overall.}
\label{tab:hv_re1_proxy}
\scriptsize\setlength{\tabcolsep}{2.5pt}\resizebox{\textwidth}{!}{%
%
}
\end{table}

\begin{table}[H]\centering
\caption{100th-percentile hypervolume results for DTLZ. For each task, methods within one standard deviation (or $5 \times 10^{-4}$) of the best are bolded and the best of the remaining methods is underlined; the shaded row marks the RFM variant with the most task-best results overall.}
\label{tab:hv_dtlz_proxy}
\scriptsize\setlength{\tabcolsep}{2.5pt}\resizebox{\textwidth}{!}{%
%
}
\end{table}
\begin{table}[H]\centering
\caption{100th-percentile hypervolume results for RE (Part 2). For each task, methods within one standard deviation (or $5\times10^{-4}$) of the best are bolded and the best of the remaining methods is underlined; the shaded row marks the RFM variant with the most task-best results overall.}
\label{tab:hv_re2_proxy}
\scriptsize\setlength{\tabcolsep}{2.5pt}\resizebox{\textwidth}{!}{%
%
}
\end{table}

\begin{table}[H]\centering
\caption{100th-percentile hypervolume results for ZDT. For each task, methods within one standard deviation (or $5\times10^{-4}$) of the best are bolded and the best of the remaining methods is underlined; the shaded row marks the RFM variant with the most task-best results overall.}
\label{tab:hv_zdt_proxy}
\small\setlength{\tabcolsep}{6pt}
%
\end{table}

\subsection{75th Percentile Results}
\label{app:hv75}
Tables~\ref{tab:hv_dtlz_proxy_hv75}--\ref{tab:hv_re2_proxy_hv75} report the 75th-percentile hypervolume for DTLZ (Table~\ref{tab:hv_dtlz_proxy_hv75}), ZDT (Table~\ref{tab:hv_zdt_proxy_hv75}), C-10/MOP (Table~\ref{tab:hv_monas1_proxy_hv75}), IN-1K/MOP (Table~\ref{tab:hv_monas2_proxy_hv75}), MORL (Table~\ref{tab:hv_morl_proxy_hv75}), and RE (Tables~\ref{tab:hv_re1_proxy_hv75} and~\ref{tab:hv_re2_proxy_hv75}). $\mathcal{D}(\text{best})$ is reported only at the 100th percentile.

\begin{table}[H]\centering
\caption{75th-percentile hypervolume results for DTLZ. For each task, methods within one standard deviation (or $5\times10^{-4}$) of the best are bolded and the best of the remaining methods is underlined; the shaded row marks the RFM variant with the most task-best results overall.}
\label{tab:hv_dtlz_proxy_hv75}
\scriptsize\setlength{\tabcolsep}{2.5pt}\resizebox{\textwidth}{!}{%
}
\end{table}
 
\begin{table}[H]\centering
\caption{75th-percentile hypervolume results for ZDT. For each task, methods within one standard deviation (or $5\times10^{-4}$) of the best are bolded and the best of the remaining methods is underlined; the shaded row marks the RFM variant with the most task-best results overall.}
\label{tab:hv_zdt_proxy_hv75}
\small\setlength{\tabcolsep}{6pt}
%
\end{table}
 
\begin{table}[H]\centering
\caption{75th-percentile hypervolume results for MO-NAS (Part 1, C-10/MOP). For each task, methods within one standard deviation (or $5\times10^{-4}$) of the best are bolded and the best of the remaining methods is underlined; the shaded row marks the RFM variant with the most task-best results overall.}
\label{tab:hv_monas1_proxy_hv75}
\scriptsize\setlength{\tabcolsep}{2.5pt}\resizebox{\textwidth}{!}{%
%
}
\end{table}
 
\begin{table}[H]\centering
\caption{75th-percentile hypervolume results for MO-NAS (Part 2, IN-1K/MOP). For each task, methods within one standard deviation (or $5\times10^{-4}$) of the best are bolded and the best of the remaining methods is underlined; the shaded row marks the RFM variant with the most task-best results overall.}
\label{tab:hv_monas2_proxy_hv75}
\scriptsize\setlength{\tabcolsep}{2.5pt}\resizebox{\textwidth}{!}{%
%
}
\end{table}
 
\begin{table}[H]\centering
\caption{75th-percentile hypervolume results for MORL. For each task, methods within one standard deviation (or $5\times10^{-4}$) of the best are bolded and the best of the remaining methods is underlined; the shaded row marks the RFM variant with the most task-best results overall.}
\label{tab:hv_morl_proxy_hv75}
\small\setlength{\tabcolsep}{6pt}
%
\end{table}
 
\begin{table}[H]\centering
\caption{75th-percentile hypervolume results for RE (Part 1). For each task, methods within one standard deviation (or $5\times10^{-4}$) of the best are bolded and the best of the remaining methods is underlined; the shaded row marks the RFM variant with the most task-best results overall.}
\label{tab:hv_re1_proxy_hv75}
\scriptsize\setlength{\tabcolsep}{2.5pt}\resizebox{\textwidth}{!}{%
%
}
\end{table}
 
\begin{table}[H]\centering
\caption{75th-percentile hypervolume results for RE (Part 2). For each task, methods within one standard deviation (or $5\times10^{-4}$) of the best are bolded and the best of the remaining methods is underlined; the shaded row marks the RFM variant with the most task-best results overall.}
\label{tab:hv_re2_proxy_hv75}
\scriptsize\setlength{\tabcolsep}{2.5pt}\resizebox{\textwidth}{!}{%
%
}
\end{table}

\subsection{50th Percentile Results}
\label{app:hv50}
Tables~\ref{tab:hv_dtlz_proxy_hv50}--\ref{tab:hv_re2_proxy_hv50} report the 50th-percentile hypervolume for DTLZ (Table~\ref{tab:hv_dtlz_proxy_hv50}), ZDT (Table~\ref{tab:hv_zdt_proxy_hv50}), C-10/MOP (Table~\ref{tab:hv_monas1_proxy_hv50}), IN-1K/MOP (Table~\ref{tab:hv_monas2_proxy_hv50}), MORL (Table~\ref{tab:hv_morl_proxy_hv50}), and RE (Tables~\ref{tab:hv_re1_proxy_hv50} and~\ref{tab:hv_re2_proxy_hv50}). $\mathcal{D}(\text{best})$ is reported only at the 100th percentile.
\begin{table}[H]\centering
\caption{50th-percentile hypervolume results for DTLZ. For each task, methods within one standard deviation (or $5\times10^{-4}$) of the best are bolded and the best of the remaining methods is underlined; the shaded row marks the RFM variant with the most task-best results overall.}
\label{tab:hv_dtlz_proxy_hv50}
\scriptsize\setlength{\tabcolsep}{2.5pt}\resizebox{\textwidth}{!}{%
}
\end{table}
 
\begin{table}[H]\centering
\caption{50th-percentile hypervolume results for ZDT. For each task, methods within one standard deviation (or $5\times10^{-4}$) of the best are bolded and the best of the remaining methods is underlined; the shaded row marks the RFM variant with the most task-best results overall.}
\label{tab:hv_zdt_proxy_hv50}
\small\setlength{\tabcolsep}{6pt}
%
\end{table}
 
\begin{table}[H]\centering
\caption{50th-percentile hypervolume results for MO-NAS (Part 1, C-10/MOP). For each task, methods within one standard deviation (or $5\times10^{-4}$) of the best are bolded and the best of the remaining methods is underlined; the shaded row marks the RFM variant with the most task-best results overall.}
\label{tab:hv_monas1_proxy_hv50}
\scriptsize\setlength{\tabcolsep}{2.5pt}\resizebox{\textwidth}{!}{%
%
}
\end{table}
 
\begin{table}[H]\centering
\caption{50th-percentile hypervolume results for MO-NAS (Part 2, IN-1K/MOP). For each task, methods within one standard deviation (or $5\times10^{-4}$) of the best are bolded and the best of the remaining methods is underlined; the shaded row marks the RFM variant with the most task-best results overall.}
\label{tab:hv_monas2_proxy_hv50}
\scriptsize\setlength{\tabcolsep}{2.5pt}\resizebox{\textwidth}{!}{%
%
}
\end{table}
 
\begin{table}[H]\centering
\caption{50th-percentile hypervolume results for MORL. For each task, methods within one standard deviation (or $5\times10^{-4}$) of the best are bolded and the best of the remaining methods is underlined; the shaded row marks the RFM variant with the most task-best results overall.}
\label{tab:hv_morl_proxy_hv50}
\small\setlength{\tabcolsep}{6pt}
%
\end{table}
 
\begin{table}[H]\centering
\caption{50th-percentile hypervolume results for RE (Part 1). For each task, methods within one standard deviation (or $5\times10^{-4}$) of the best are bolded and the best of the remaining methods is underlined; the shaded row marks the RFM variant with the most task-best results overall.}
\label{tab:hv_re1_proxy_hv50}
\scriptsize\setlength{\tabcolsep}{2.5pt}\resizebox{\textwidth}{!}{%
%
}
\end{table}
 
\begin{table}[H]\centering
\caption{50th-percentile hypervolume results for RE (Part 2). For each task, methods within one standard deviation (or $5\times10^{-4}$) of the best are bolded and the best of the remaining methods is underlined; the shaded row marks the RFM variant with the most task-best results overall.}
\label{tab:hv_re2_proxy_hv50}
\scriptsize\setlength{\tabcolsep}{2.5pt}\resizebox{\textwidth}{!}{%
%
}
\end{table}

\end{document}